\documentclass[10pt,letterpaper]{article}
\usepackage[preprint]{tmlr}
\usepackage{amsmath,amssymb,amsthm,mathtools}
\usepackage{graphicx,booktabs,microtype,float}
\usepackage{needspace}
\makeatletter
\renewcommand{\float@endH}{%
  \@endfloatbox
  \if@flstyle\setbox\@currbox\float@makebox\columnwidth\fi
  \Needspace{\dimexpr\ht\@currbox+\dp\@currbox+\intextsep\relax}%
  \vskip\intextsep
  \box\@currbox
  \vskip\intextsep\relax}
\makeatother
\usepackage[hidelinks]{hyperref}
\hypersetup{pdftitle={Correcting CondOT: Exact Finite-Step Sampling in Gaussian Flow Matching},pdfauthor={Ron Levy and Michael Elad}}
\newtheorem{theorem}{Theorem}
\newtheorem{proposition}[theorem]{Proposition}
\newtheorem{corollary}[theorem]{Corollary}
\newtheorem{lemma}[theorem]{Lemma}
\allowdisplaybreaks[1]
\DeclareMathOperator{\diag}{diag}

\title{Correcting CondOT: Exact Finite-Step Sampling\\
in Gaussian Flow Matching}
\author{\name Ron Levy \email ron.levy@campus.technion.ac.il \\
  \addr Department of Computer Science\\
  Technion, Israel
  \AND
  \name Michael Elad \email elad@cs.technion.ac.il \\
  \addr Department of Computer Science\\
  Technion, Israel}

\begin{document}
\maketitle

\begin{abstract}
Flow matching generates samples by gradually transforming noise into data. In practice, using a finite number of sampling steps introduces a numerical error that depends on the chosen schedule. We study this dependence for Gaussian targets and the explicit midpoint sampling method, using the exact flow field. We measure sampling error by the squared Wasserstein distance between the target distribution and the final distribution produced by the midpoint sampler. We show that the standard conditional optimal transport (CondOT) schedule cancels the leading midpoint error and improves the general convergence bound, even when the sampling steps are unequally spaced. On a uniform grid of \(S\) sampling steps, we fix the signal schedule at \(\alpha_t=t\) and prove the existence of scalar noise schedules \(\beta_t\) that approach the CondOT noise schedule \(1-t\) at rate \(1/S\) and yield exact Gaussian sampling for every sufficiently large \(S\). Controlled Gaussian experiments illustrate the convergence rates and exact calibration.\footnote{The authors used LLMs for assistance with LaTeX formatting, manuscript presentation, and code generation. All scientific ideas, claims, derivations, results, and final content were reviewed and approved by the authors.}
\end{abstract}

\section{Introduction}
\label{m:introduction}
In flow matching \citep{lipman2023}, samples are generated by following a learned transformation from noise to a target distribution. The schedule controls the contributions of noise and data along this transformation, while a numerical sampler approximates it using a finite number of steps. Although different schedules can reach the same target when the dynamics are solved exactly, their numerical approximations need not have the same accuracy. This raises a natural question: how does the sampling error depend on the schedule, and how should the schedule be chosen for a given sampling budget?

To study this dependence, we consider Gaussian targets, whose closed-form flow fields allow us to isolate the numerical error of the explicit midpoint sampler from neural-network approximation error. Although a known Gaussian can be sampled directly, this setting allows us to examine how schedule choice affects numerical accuracy and how changing a shared scalar schedule can compensate for the sampler’s error.

We first derive the distribution produced by midpoint sampling under general schedules and arbitrary prescribed sampling times. We then show that increasing the data contribution linearly is the unique signal schedule that preserves the terminal mean for every target mean, prescribed time grid, and noise schedule satisfying our assumptions. Fixing this choice removes the mean error and gives a simple objective based on the remaining covariance mismatch between the target distribution and the final distribution produced by midpoint sampling. 

Building on this objective, we analyze how the error decreases as the sampling steps become finer. For the standard conditional optimal transport (CondOT) schedule, which increases the signal contribution and decreases
the noise contribution linearly, the leading error term cancels, producing a faster convergence rate. This effect, called \emph{superconvergence}, holds even when the sampling steps are unequally spaced.

The cancellation does not generally make CondOT exact at a finite
sampling budget. We therefore ask whether the remaining error can
be eliminated by a small change to the default schedule. On equally spaced grids, we keep the signal schedule fixed and add a polynomial correction to CondOT's
noise schedule. The number of coefficients equals the
number of distinct eigenvalues of the target covariance. We prove that, for every sufficiently large sampling budget $S$,
a correction to the default noise schedule of order $1/S$ produces a schedule
that makes the midpoint sampler exactly reproduce the Gaussian target.

Finally, Gaussian experiments compare CondOT with other noise
schedules, confirm its faster convergence rate, and demonstrate
that the calibrated correction reduces the terminal error to
numerical precision at finite sampling budgets.

\section{Problem Setup and Exact Sampling Error}
\label{m:setup}

We consider Gaussian flow matching and study how the interpolation schedules affect the error of midpoint sampling.

\paragraph{Gaussian probability path.}
Let $z\sim p_{\mathrm{data}}=\mathcal N(\mu,\Sigma)$, with
$\Sigma\succ0$, and independently draw
$\epsilon\sim p_{\mathrm{init}}=\mathcal N(0,I)$.
We connect noise to data through
\begin{equation}
 X_t=\alpha_t z+\beta_t\epsilon,
 \qquad 0\leq t\leq1.
 \label{m:path}
\end{equation}
The signal schedule $\alpha_t$ controls the contribution of the
target sample, while the noise schedule $\beta_t$ controls the
remaining noise. We impose
\[
 \alpha_0=0,\qquad \alpha_1=1,\qquad
 \beta_0=1,\qquad \beta_1=0,
\]
so that $X_0=\epsilon$ and $X_1=z$.

Both schedules must also belong to $C^1([0,1])$ and satisfy
\[
 \alpha_t^2+\beta_t^2>0
 \qquad\text{for every }t\in[0,1].
\]

Additional regularity is stated where needed.

Because the noise and target sample are independent Gaussians,
the distribution at every intermediate time is
\begin{equation}
 p_t=\mathcal N\!\left(
 \alpha_t\mu,\beta_t^2I+\alpha_t^2\Sigma
 \right).
 \label{m:marginalpath}
\end{equation}
Thus the schedules determine both the mean and the covariance
along the path.

\paragraph{The exact flow.}
Flow matching generates samples by following a velocity field
whose flow has the prescribed distributions $p_t$.
For the Gaussian path above, the exact target field
$u_t^{\mathrm{target}}(x)=\mathbb E[\dot X_t\mid X_t=x]$
has the explicit form
\begin{equation}
\begin{aligned}
 u_t^{\mathrm{target}}(x)
 &=\dot\alpha_t\mu +
 \left(\beta_t\dot\beta_t I+
       \alpha_t\dot\alpha_t\Sigma\right)
 \left(\beta_t^2I+\alpha_t^2\Sigma\right)^{-1}
 (x-\alpha_t\mu).
\end{aligned}
 \label{m:exactfield}
\end{equation}
Starting from $Y_0=\epsilon$, the corresponding dynamics
\begin{equation}
 \frac{dY_t}{dt}=u_t^{\mathrm{target}}(Y_t)
 \label{m:exactflow}
\end{equation}
satisfy $Y_t\sim p_t$ and therefore reach
$p_{\mathrm{data}}$ at $t=1$.
Appendix~\ref{sec:gaussian-fm} derives this field.

\paragraph{Midpoint sampling.}
Fix prescribed sampling times $0=t_0<\cdots<t_S=1$, and write
\[
 h_n=t_{n+1}-t_n,
 \qquad
 t_{n+\frac12}=t_n+\frac{h_n}{2},
 \qquad
 h_{\max}=\max_{0\leq n<S}h_n.
\]
Starting from
$Y_0^{\mathrm M}=\epsilon$, the explicit midpoint method
first predicts a state halfway through the current step,
then uses the field at that predicted state to complete it:
\begin{equation}
\begin{aligned}
 Y_{n+\frac12}^{\mathrm M}
 &=Y_n^{\mathrm M}
   +\frac{h_n}{2}
    u_{t_n}^{\mathrm{target}}(Y_n^{\mathrm M}),\\
 Y_{n+1}^{\mathrm M}
 &=Y_n^{\mathrm M}
   +h_n u_{t_{n+\frac12}}^{\mathrm{target}}
         (Y_{n+\frac12}^{\mathrm M}).
\end{aligned}
 \label{m:midpoint}
\end{equation}
Each step requires two field evaluations, so
$N_{\mathrm{FE}}=2S$.
The uniform grid is the special case $t_n=nh$, with $h=S^{-1}$.

Let $p_S^{\mathrm M}$ denote the distribution of
$Y_S^{\mathrm M}$. We measure its discrepancy from the target by
\begin{equation}
 J_S=W_2^2(p_S^{\mathrm M},p_{\mathrm{data}}).
 \label{m:errorcriterion}
\end{equation}
Our goal is to choose schedules that reduce this error for
a prescribed sampling grid. All reported convergence rates
shown later concern this squared distance.

\subsection{A signal schedule that preserves the mean}

The terminal error can arise from an incorrect mean or
covariance. We first identify the signal schedule that
eliminates the mean error, then obtain an exact expression
for the remaining covariance error.

\begin{theorem}
\label{m:mean}
Fix $\Sigma\succ0$ and a signal schedule
$\alpha\in C^1([0,1])\cap C^2((0,1))$
with $\alpha_0=0$ and $\alpha_1=1$, chosen independently of
$S$, the sampling grid, and $\beta$.
The midpoint terminal mean satisfies
\[
 \mathbb E[Y_S^{\mathrm M}]=\mu
\]
for every $S\geq1$, every prescribed grid, every target mean $\mu$,
and every noise schedule $\beta$ satisfying the assumptions above
with this fixed $\alpha$, if and only if $\alpha_t=t$.

For this choice, midpoint sampling also preserves the exact
mean at every grid point and predicted midpoint:
\[
 \mathbb E[Y_n^{\mathrm M}]=t_n\mu,
 \qquad
 \mathbb E[Y_{n+\frac12}^{\mathrm M}]
 =t_{n+\frac12}\mu.
\]
\end{theorem}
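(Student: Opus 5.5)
Sufficiency comes from a direct computation with the affine field, and necessity from perturbing $\beta$ near a single midpoint. The key observation is that the exact field \eqref{m:exactfield} is affine in $x$:
\[
u_t^{\mathrm{target}}(x)=\dot\alpha_t\mu+A_t(x-\alpha_t\mu),\qquad
A_t=(\beta_t\dot\beta_t I+\alpha_t\dot\alpha_t\Sigma)(\beta_t^2I+\alpha_t^2\Sigma)^{-1}.
\]
Taking expectations in \eqref{m:midpoint} therefore gives a deterministic recursion for the means $m_n=\mathbb E[Y_n^{\mathrm M}]$ and $m_{n+\frac12}=\mathbb E[Y_{n+\frac12}^{\mathrm M}]$:
\[
m_{n+\frac12}=m_n+\tfrac{h_n}{2}\bigl(\dot\alpha_{t_n}\mu+A_{t_n}(m_n-\alpha_{t_n}\mu)\bigr),\qquad
m_{n+1}=m_n+h_n\bigl(\dot\alpha_{t_{n+\frac12}}\mu+A_{t_{n+\frac12}}(m_{n+\frac12}-\alpha_{t_{n+\frac12}}\mu)\bigr),
\]
starting from $m_0=0$.

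\emph{Sufficiency.} Assume $\alpha_t=t$ and proceed by induction on $n$. If $m_n=t_n\mu$, then $m_n-\alpha_{t_n}\mu=0$, so the $A$-term drops out and $m_{n+\frac12}=t_n\mu+\frac{h_n}{2}\mu=t_{n+\frac12}\mu$. The same cancellation at the midpoint gives $m_{n+1}=t_n\mu+h_n\mu=t_{n+1}\mu$. This proves both displayed identities and $m_S=\mu$, for any $\beta$, grid and $\mu$.

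\emph{Necessity.} Fix $h\in(0,1)$ and take a grid with $t_1=h$. After the first step (using $\alpha_0=0$, $m_0=0$),
\[
m_1=h\bigl(\dot\alpha_{h/2}\mu+A_{h/2}\,c_h\mu\bigr),\qquad c_h=\tfrac{h}{2}\dot\alpha_0-\alpha_{h/2}.
\]
Later steps act affinely, $m_S=L\,m_1+b$. Here $L$ is a product of matrices $I+h_n A_{t_{n+\frac12}}(I+\frac{h_n}{2}A_{t_n})$, and neither $L$ nor $b$ depends on the values of $\beta$ near $h/2$, as long as no later evaluation time lies there.

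Now perturb $\beta$ by a $C^1$ bump supported in a small interval around $h/2$ that contains no other grid time or midpoint. Choose the bump so that $\beta_{h/2}$ is unchanged but $\dot\beta_{h/2}$ changes by $\delta$. Then $A_{h/2}$ changes by $\delta\,\beta_{h/2}(\beta_{h/2}^2I+\alpha_{h/2}^2\Sigma)^{-1}$, and $m_S$ changes by
\[
\delta\,h\,c_h\,\beta_{h/2}\,L(\beta_{h/2}^2I+\alpha_{h/2}^2\Sigma)^{-1}\mu .
\]
The hypothesis requires $m_S=\mu$ both before and after the perturbation, so this change must vanish for every $\mu$ and $\delta$. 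I will arrange $\beta_{h/2}\neq0$, which is easy since only the endpoint values of $\beta$ are constrained, and show that $L$ is invertible. This forces $c_h=0$, i.e. $\alpha_{h/2}=\frac{h}{2}\dot\alpha_0$ for every $h\in(0,1)$. Hence $\alpha_s=\dot\alpha_0 s$ on $(0,\frac12)$. To cover the rest of $[0,1]$, I will repeat the argument with a first step of length $h$ close to $1$ followed by very small steps, or simply let $s=h/2$ range up to $\frac12$ and then use $S=1$, $h=1$ separately. Linearity on $[0,1]$ together with $\alpha_1=1$ then gives $\alpha_t=t$.

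\emph{Main obstacle.} The delicate step is the invertibility of $L$ (and, for $h$ near $1$, making the argument reach all $s\in(0,1)$). My first attempt is to take the remaining interval $[h,1]$ subdivided into many tiny steps and $\beta$ smooth there. Then each factor is $I+O(h_n)$, and $L$ is close to the exact-flow transition matrix of the linear ODE $\dot y=A_ty$ on $[h,1]$. That matrix is invertible, though this needs care near $t=1$ where $\beta\to0$; I would use $\alpha_1^2\Sigma\succ0$ to keep $A_t$ bounded there. Alternatively, one can avoid $L$ entirely. With $S=1$ and grid $\{0,1\}$ there are no later steps, and $c_1=0$ follows directly; to reach other points, use a two-step grid $\{0,h,1\}$ with $\beta$ chosen so that $\dot\beta\equiv$ const on the second step, and check by hand that the single factor $L$ is invertible for a suitable choice.
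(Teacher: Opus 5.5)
\textbf{Verdict: genuine gap.} Your sufficiency induction is the same as the paper's. The first-midpoint perturbation is also sound. The step you flag as the main obstacle is actually routine. With baseline $\beta_t=1-t$, each $v_k=\beta_t^2+\alpha_t^2/\lambda_k$ is bounded below on $[0,1]$ because $\alpha_1=1$. So $A_t$ is bounded, and a fine subdivision of $[h,1]$ makes every scalar factor of $L$ positive.

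\textbf{The gap.} Your perturbation never sees $\alpha$ on $(\frac12,1)$. The first step always starts at $t_0=0$ and has length $h\le1$, so the point you probe, $h/2$, is at most $\frac12$. Taking $h$ close to $1$ only approaches $\frac12$. The case $S=1$ adds only $\alpha_{1/2}=\frac12\dot\alpha_0$ and $\dot\alpha_{1/2}=1$. Every constraint you derive is therefore also satisfied by
\[
\alpha_t=t \text{ on } [0,\tfrac12], \qquad \alpha_t=t+(t-\tfrac12)^3(1-t) \text{ on } [\tfrac12,1],
\]
which lies in $C^1([0,1])\cap C^2((0,1))$ and has $\alpha_1=1$. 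So ``linearity on $[0,1]$'' does not follow. As written, the necessity direction yields at most $\alpha_t=t$ on $[0,\frac12]$.

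\textbf{How to close it.} You need to perturb $\dot\beta$ at later evaluation times; the cleanest choice is the last predicted midpoint $t_{S-\frac12}$. Because the final corrector is the last operation, such a perturbation changes $m_S$ by $h_{S-1}\,\delta A_{t_{S-\frac12}}(m_{S-\frac12}-\alpha_{t_{S-\frac12}}\mu)$, with $\delta A_{t_{S-\frac12}}$ invertible. No transfer matrix appears, and you get $m_{S-\frac12}=\alpha_{t_{S-\frac12}}\mu$ directly. Now suppose $\alpha_s=cs$ is known on $[0,a]$ with $a\ge\frac12$. Apply this to the grid $\{0,a',1\}$ with $0<a'\le a$. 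Your sufficiency computation gives $m_{\frac32}=c\,\frac{1+a'}{2}\,\mu$, hence $\alpha_s=cs$ on $[0,\frac{1+a}{2}]$. Iterating from $a=\frac12$ covers $[0,1)$, and $\alpha_1=1$ forces $c=1$.

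\textbf{Comparison with the paper.} The paper uses the same idea as a backward induction. Exactness at $t_{n+1}$ survives all the local perturbations. Varying $A$ at $t_{n+\frac12}$ and at $t_n$ then forces exactness at those points. This gives $\alpha_{t_{n+\frac12}}=\alpha_{t_n}+\frac{h_n}{2}\dot\alpha_{t_n}$ on every interval of every grid. Taylor expansion on uniform grids then gives $\ddot\alpha\equiv0$ on $(0,1)$. Either route avoids needing invertibility of a product of later factors.
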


The proof is given in
Appendix~\ref{subsec:original-mean-proof}. From now on, we fix $\alpha_t=t$, which guarantees
the correct mean at every sampling budget.

\subsection{The exact finite-step objective}

We now express the output covariance and sampling error
in terms of the noise schedule. Write
\[
 \Sigma^{-1}
 =U\diag(\lambda_1,\ldots,\lambda_d)U^\top,
 \qquad \lambda_k>0,
\]
where $U$ is orthogonal. The target variance in direction $k$
is $1/\lambda_k$.

The following result gives the exact output distribution and
its squared Wasserstein distance from the target.

\begin{proposition}
\label{m:exactobjective}
For $\alpha_t=t$, every noise schedule $\beta\in C^1([0,1])$
with $\beta_0=1$ and $\beta_1=0$, and every prescribed sampling
grid, the midpoint output is
\begin{equation}
\begin{aligned}
 p_S^{\mathrm M}&=\mathcal N(\mu,\Sigma_{\mathrm M}),\\
 \Sigma_{\mathrm M}
 &=U\diag\!\left(
 (D_{1,1}^{\mathrm M})^2,\ldots,
 (D_{1,d}^{\mathrm M})^2
 \right)U^\top,
\end{aligned}
 \label{m:outputlaw}
\end{equation}
where
\begin{equation}
 D_{1,k}^{\mathrm M}
 =\prod_{n=0}^{S-1}
 \left[
 1+h_nA_k(t_{n+\frac12})
 +\frac{h_n^2}{2}A_k(t_{n+\frac12})A_k(t_n)
 \right],
 \label{m:products}
\end{equation}
with
\begin{equation}
 v_k(t)=\beta_t^2+\frac{t^2}{\lambda_k},
 \qquad
 A_k(t)=\frac{\dot v_k(t)}{2v_k(t)}.
 \label{m:scalarfield}
\end{equation}
Consequently, the squared Wasserstein error is
\begin{equation}
 J_S
 =\sum_{k=1}^d
 \left(
 |D_{1,k}^{\mathrm M}|-\frac1{\sqrt{\lambda_k}}
 \right)^2.
 \label{m:objective}
\end{equation}
\end{proposition}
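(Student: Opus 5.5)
Set $\alpha_t=t$ and work in the eigenbasis of $\Sigma$. With $w=U^\top(x-t\mu)$ and $\Sigma=U\diag(1/\lambda_k)U^\top$, the matrix $\beta_t^2I+t^2\Sigma$ equals $U\diag(v_k(t))U^\top$. Likewise $\beta_t\dot\beta_tI+t\Sigma$ equals $U\diag(\dot v_k(t)/2)U^\top$. Hence, by \eqref{m:exactfield},
\[
 u_t^{\mathrm{target}}(x)=\mu+U\diag\bigl(A_k(t)\bigr)U^\top(x-t\mu).
\]
This step needs $v_k(t)>0$ for all $t$. That holds because $\beta_t^2+t^2/\lambda_k>0$, given $\lambda_k>0$ and the standing assumption $\alpha_t^2+\beta_t^2>0$. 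Near $t=0$ we have $v_k(0)=1$, so $A_k$ is continuous on $[0,1]$.

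The key reduction is to track the centered, rotated state $Z_n=U^\top(Y_n^{\mathrm M}-t_n\mu)$ and $Z_{n+\frac12}=U^\top(Y^{\mathrm M}_{n+\frac12}-t_{n+\frac12}\mu)$. Substituting the field into \eqref{m:midpoint}, the constant term $\mu$ contributes $\tfrac{h_n}{2}\mu$ and $h_n\mu$. These exactly match the shifts $t_{n+\frac12}\mu-t_n\mu$ and $t_{n+1}\mu-t_n\mu$, which reproves the mean statement of Theorem~\ref{m:mean}. The remaining recursion is therefore linear, diagonal and homogeneous:
\[
 Z_{n+\frac12,k}=\bigl(1+\tfrac{h_n}{2}A_k(t_n)\bigr)Z_{n,k},\qquad
 Z_{n+1,k}=Z_{n,k}+h_nA_k(t_{n+\frac12})Z_{n+\frac12,k}.
\]
Composing the two lines gives the per-step factor $1+h_nA_k(t_{n+\frac12})+\tfrac{h_n^2}{2}A_k(t_{n+\frac12})A_k(t_n)$.

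Since $Z_0=U^\top\epsilon\sim\mathcal N(0,I)$ and $t_S=1$, we get $Z_{S,k}=D^{\mathrm M}_{1,k}(U^\top\epsilon)_k$. Thus $Y_S^{\mathrm M}=\mu+U\diag(D^{\mathrm M}_{1,k})U^\top\epsilon$, a linear image of a Gaussian, and this is exactly \eqref{m:outputlaw}.

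For \eqref{m:objective}, both laws are Gaussian with the same mean. Their covariances, $U\diag((D^{\mathrm M}_{1,k})^2)U^\top$ and $\Sigma=U\diag(1/\lambda_k)U^\top$, commute. We then apply the Bures formula
\[
 W_2^2=\|\mu_1-\mu_2\|^2+\operatorname{tr}\bigl(\Sigma_1+\Sigma_2-2(\Sigma_2^{1/2}\Sigma_1\Sigma_2^{1/2})^{1/2}\bigr).
\]
For commuting covariances this reduces to $\sum_k(\sqrt{\sigma_{1,k}}-\sqrt{\sigma_{2,k}})^2$ over the shared eigenvalues. The square roots are $|D^{\mathrm M}_{1,k}|$ and $1/\sqrt{\lambda_k}$, which gives the result. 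The absolute value is needed because a per-step factor can be negative for coarse grids, so $D^{\mathrm M}_{1,k}$ itself may be negative.

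The main obstacle is bookkeeping rather than depth. One must check that the mean shift cancels exactly at both the half step and the full step. One must also justify the well-definedness of $A_k$ at the endpoints: at $t=1$ we have $v_k(1)=1/\lambda_k>0$, so division is safe, and only $\beta\in C^1$ is needed for $\dot v_k$. Finally, the Gaussian $W_2$ formula should be cited, for example as a standard fact, with the commuting-case simplification stated explicitly.
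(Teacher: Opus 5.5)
Your proof is correct and follows essentially the same route as the paper: you decouple in the eigenbasis, compose the predictor and corrector into the per-step factor $g_k^{\mathrm M}(n)$, take the product, and apply the Gaussian $W_2$ formula for commuting covariances with equal means. The only difference is bookkeeping. The paper iterates the general affine recursion $Y_{n+1,k}^{U,\mathrm M}=g_k^{\mathrm M}(n)Y_{n,k}^{U,\mathrm M}+q_k^{\mathrm M}(n)\mu_k^U$ and then uses the mean induction, valid for every $\mu$, to conclude $D_{2,k}^{\mathrm M}=1$, whereas your centering $Z_n=U^\top(Y_n^{\mathrm M}-t_n\mu)$ removes the inhomogeneous term directly and gives the pathwise output in one pass.
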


Here, $v_k(t)$ is the variance of the exact probability path
in direction $k$, while $|D_{1,k}^{\mathrm M}|$ is the
standard deviation produced by the sampler in that direction. In the asymptotic analysis below, $D_{1,k}^{\mathrm M}>0$
for sufficiently fine grids, so the absolute values in
\eqref{m:objective} can be omitted; see
Appendices~\ref{subsec:midpoint-positivity}
and~\ref{app:calibration-uniform-expansion}.

Appendix~\ref{subsec:original-midpoint-output} derives the
output distribution and the product in \eqref{m:products}.
Appendix~\ref{sec:objectives} derives the Wasserstein expression.
The objective is exact for every sampling budget and provides
the basis for the convergence analysis and schedule optimization
below.
\section{CondOT Superconvergence}
\label{m:analysis}

We now study how the sampling error decreases as the time
steps become smaller. Throughout this section, we keep the
target covariance $\Sigma$ and the noise schedule fixed,
and refine the sampling grid so that $h_{\max}\to0$.

For a fixed noise schedule satisfying the assumptions of
Section~\ref{m:setup}, with $v_k\in C^4([0,1])$ and $v_k$
bounded away from zero
in every direction, the general midpoint error bound is
\[
 J_S=O(h_{\max}^4).
\]
Appendix~\ref{subsec:original-local-defect} derives this
bound and the corresponding error expansion.
Our main result shows that CondOT, which uses
$\beta_t=1-t$, cancels the leading error term and achieves
a stronger bound.

\begin{theorem}
\label{m:condot}
For $\alpha_t=t$ and $\beta_t=1-t$, along every sequence
of prescribed grids with $h_{\max}\to0$,
\begin{equation}
 D_{1,k}^{\mathrm M}-\frac1{\sqrt{\lambda_k}}
 =O(h_{\max}^3),
 \qquad
 J_S=O(h_{\max}^6).
 \label{m:condotgeneral}
\end{equation}
\end{theorem}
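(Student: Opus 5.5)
The plan is to reduce everything to one scalar direction via Proposition~\ref{m:exactobjective} and to compare each midpoint step factor with the exact one-step propagator of the scalar linear ODE $\dot y=A_k(t)y$. Fix $k$ and drop the index, writing $v=v_k$ and $A=A_k$.

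\textbf{Exact propagator.} The exact solution of $\dot y=A(t)y$ is $y(t)=y(0)\sqrt{v(t)/v(0)}$. Since $v(0)=\beta_0^2=1$ and $v(1)=1/\lambda_k$, the exact step factors telescope:
\[
\prod_{n=0}^{S-1}E_n=\frac1{\sqrt{\lambda_k}},\qquad E_n=\sqrt{\frac{v(t_{n+1})}{v(t_n)}}.
\]
Hence it suffices to show that the midpoint factor
\[
R_n=1+h_nA(t_{n+\frac12})+\tfrac{h_n^2}{2}A(t_{n+\frac12})A(t_n)
\]
satisfies $R_n=E_n\bigl(1+O(h_n^4)\bigr)$ uniformly in $n$. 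For a general smooth $v$ one only gets $O(h_n^3)$, which is the source of the generic $O(h_{\max}^4)$ bound for $J_S$.

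\textbf{Key structural fact for CondOT.} With $\beta_t=1-t$,
\[
v(t)=(1-t)^2+\frac{t^2}{\lambda_k}
\]
is a quadratic polynomial in $t$ with constant second derivative $2c$. It is bounded below on $[0,1]$ by $\min_t v>0$, because the two terms cannot vanish simultaneously. At $t=t_n$ set $a=v'/(2v)=A(t_n)$ and $b=c/v$. Both are bounded uniformly in $n$, since $v$ is bounded below.

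\textbf{Taylor comparison.} Because $v$ is quadratic, the relevant quantities have closed forms:
\[
E_n=\sqrt{1+2ah+bh^2},\qquad A(t_{n+\frac12})=\frac{a+bh/2}{1+ah+bh^2/4}.
\]
Expanding both in powers of $h$ gives the following.
\begin{itemize}
\item $E_n=1+ah+\tfrac12(b-a^2)h^2+\tfrac12(a^3-ab)h^3+O(h^4)$.
\item $A(t_{n+\frac12})=a+(\tfrac b2-a^2)h+(a^3-\tfrac34ab)h^2+O(h^3)$, so that $R_n=1+ah+\tfrac12(b-a^2)h^2+\tfrac12(a^3-ab)h^3+O(h^4)$.
\end{itemize}
The $h^3$ coefficients coincide. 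This cancellation is the heart of the proof, and it holds for any quadratic $v$ that stays positive.

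The remainders are $O(h^4)$ with constants depending only on uniform bounds for $a$ and $b$. Both expressions are analytic in $h$ on a fixed neighborhood of $0$ that is independent of $n$, since the denominators stay near $1$ once $h_{\max}$ is small. Hence, for $h_{\max}$ small, $R_n>0$ and $\log R_n-\log E_n=O(h_n^4)$ uniformly in $n$.

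\textbf{Global bound.} Summing over steps and using $\sum_n h_n=1$,
\[
\log D_{1,k}^{\mathrm M}-\log\frac{1}{\sqrt{\lambda_k}}=\sum_n O(h_n^4)\le O\Bigl(h_{\max}^3\sum_n h_n\Bigr)=O(h_{\max}^3).
\]
Exponentiating gives $D_{1,k}^{\mathrm M}-1/\sqrt{\lambda_k}=O(h_{\max}^3)$, with $D_{1,k}^{\mathrm M}>0$, so the absolute values in \eqref{m:objective} can be dropped. Summing the $d$ squared terms then gives $J_S=O(h_{\max}^6)$.

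The main obstacle is the third-order coefficient bookkeeping, together with making the $O(h^4)$ remainder uniform over grid points. I would handle uniformity by writing both $R_n$ and $E_n$ as analytic functions of $(h,a,b)$ on a compact parameter set.
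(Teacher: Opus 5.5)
Your proposal is correct and follows essentially the paper's route. Your exact factor $E_n$ equals $\exp\bigl(\int_{t_n}^{t_{n+1}}A_k\,dt\bigr)$, so you are bounding the same per-step logarithmic defect $\log g_k^{\mathrm M}(n)-\int_{t_n}^{t_{n+1}}A_k\,dt$ by $O(h_n^4)$, summing with $\sum_n h_n^4\le h_{\max}^3$, and exponentiating. The only difference is how the $h^3$ cancellation is checked: you use the closed forms available for quadratic $v_k$, parametrized by $(a,b)$, whereas the paper expands in derivatives of $A_k$ and invokes $v_k'''=2v_k(A_k''+6A_kA_k'+4A_k^3)=0$, which also yields the explicit $h^4$ coefficient $-\tfrac18(A_k^2A_k'+A_k^4)$ used later for the uniform-grid corollary.
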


The proof is given in
Appendix~\ref{subsec:original-condot-general}.
Thus CondOT improves the general squared Wasserstein
error bound from $O(h_{\max}^4)$ to $O(h_{\max}^6)$.
We refer to this improvement as \emph{superconvergence}.
The result does not require equally spaced sampling times.

\paragraph{The remaining error on a uniform grid.}
For equally spaced steps, we can also calculate the
leading coefficient of the remaining error.
This gives a more precise description of CondOT's accuracy
and provides the basis for exact schedule calibration
in the next section.

\begin{corollary}
\label{m:condotuniformexpansion}
For CondOT on the uniform grid, with $h=S^{-1}$, the following expansions hold as $S\to\infty$,
\begin{align}
 D_{1,k}^{\mathrm M}-\frac1{\sqrt{\lambda_k}}
 &=\frac{d_k}{\sqrt{\lambda_k}}h^3+O(h^4),
 \label{m:condotscale}\\
 J_S
 &=h^6\sum_{k=1}^d\frac{d_k^2}{\lambda_k}+O(h^7),
 \label{m:condotuniform}
\end{align}
where
\begin{equation}
 d_k
 =-\frac1{24}
  -\frac{\pi(1+\lambda_k)^3}{256\lambda_k^{3/2}}
 <0.
 \label{m:dk}
\end{equation}
\end{corollary}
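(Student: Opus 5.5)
We would work with $\log D_{1,k}^{\mathrm M}$ in each direction $k$ separately, and compare it with the exact flow. Since $A_k=\tfrac12(\log v_k)'$, we have
\[
\int_0^1 A_k\,dt=\tfrac12\log\frac{v_k(1)}{v_k(0)}=-\tfrac12\log\lambda_k ,
\]
so $\exp\!\int_0^1A_k=1/\sqrt{\lambda_k}$. The plan is to expand, for the uniform step $h$, the per-step defect
\[
\delta_n=\log\Bigl[1+hA_k(t_{n+\frac12})+\tfrac{h^2}{2}A_k(t_{n+\frac12})A_k(t_n)\Bigr]-\int_{t_n}^{t_{n+1}}A_k\,dt
\]
in powers of $h$. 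The expansion is taken around the midpoint $m=t_{n+\frac12}$. We write $A_k(t_n)=A-\tfrac h2A'+\tfrac{h^2}{8}A''-\tfrac{h^3}{48}A'''+\dots$, with all quantities evaluated at $m$. We expand $\log(1+x)$ to fourth order, and we use $\int_{t_n}^{t_{n+1}}A=hA+\tfrac{h^3}{24}A''+O(h^5)$. This gives $\delta_n=h^3c_3(m)+h^4c_4(m)+O(h^5)$. Here $c_3,c_4$ are explicit polynomials in $A,A',A''$, for instance $c_3=\tfrac{A^3}{3}-\tfrac{AA'}{4}-\tfrac{A''}{24}-\dots$ after collecting terms. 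Since CondOT gives $v_k=(1-t)^2+t^2/\lambda_k$, which is a quadratic bounded below by $1/(1+\lambda_k)>0$, all derivatives of $A_k$ are uniformly bounded. Hence the $O(h^5)$ remainder is uniform in $n$, and it contributes only $O(h^4)$ after summing over $S=1/h$ steps.

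Summing over $n$ gives $\log D_{1,k}^{\mathrm M}+\tfrac12\log\lambda_k=h^2\sum_n h\,c_3(t_{n+\frac12})+h^3\sum_n h\,c_4(t_{n+\frac12})+O(h^4)$. Each of these sums is a composite midpoint rule, so $\sum_n h\,c_j(t_{n+\frac12})=\int_0^1c_j\,dt+O(h^2)$. Theorem~\ref{m:condot}, applied on the uniform grid, tells us that the $h^2$ coefficient must vanish, i.e.\ $\int_0^1c_3\,dt=0$ for CondOT. We can also verify this directly as a consistency check. Therefore
\[
\log\bigl(\sqrt{\lambda_k}\,D_{1,k}^{\mathrm M}\bigr)=h^3\int_0^1c_4\,dt+O(h^4).
\]
Exponentiating gives \eqref{m:condotscale} with $d_k=\int_0^1c_4\,dt$. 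Then \eqref{m:condotuniform} follows immediately from \eqref{m:objective}. For $h$ small, $D_{1,k}^{\mathrm M}>0$, so the absolute value drops out, and squaring gives $h^6d_k^2/\lambda_k+O(h^7)$. Finally, $d_k<0$ is evident from \eqref{m:dk}, because both terms are negative.

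The main obstacle is evaluating $\int_0^1c_4\,dt$ in closed form. First, we would reduce $c_4$ modulo exact derivatives. Terms such as $A''',\,(AA')',\,(A^2)'',\,A^3A'$ integrate to boundary values. At $t=0$ we have $A=-1$ and at $t=1$ we have $A=\lambda_k$, and the boundary values of $A'$ and $A''$ are computable. These boundary contributions should produce the rational constant $-\tfrac1{24}$ together with rational-in-$\lambda_k$ pieces. Second, the non-exact remainder, presumably something like $A^4$ or $A(A')^2$, is a rational function with denominator a power of $v_k$. We would substitute $v_k=(1+1/\lambda_k)(t-t_*)^2+\kappa^2$ and reduce by partial fractions to integrals $\int_0^1v_k^{-j}\,dt$. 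The base case is $\int_0^1dt/v_k=\sqrt{\lambda_k}\bigl[\arctan(\cdot)\bigr]_0^1=\tfrac{\pi}{2}\sqrt{\lambda_k}$. This is the source of the $\pi$, since the arctan arguments at the endpoints are $-\sqrt{\lambda_k}$ and $1/\sqrt{\lambda_k}$ up to scaling and sum to a right angle. We expect the rational non-$\pi$ parts to cancel against the boundary terms, leaving exactly $-\tfrac1{24}-\tfrac{\pi(1+\lambda_k)^3}{256\lambda_k^{3/2}}$. We would confirm this last bookkeeping step symbolically, and check it numerically against \eqref{m:products}.
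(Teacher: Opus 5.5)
Your asymptotic skeleton is sound and is a mild variant of the paper's argument. You expand each local defect about $t_{n+\frac12}$ and sum with the composite midpoint rule. The paper instead reuses its left-endpoint expansion $-\tfrac{h^4}{8}(A_k^2A_k'+A_k^4)(t_n)+O(h^5)$ and sums it as a left Riemann sum. Either way $e_k(h)=h^3\int_0^1c_4\,dt+O(h^4)$, and the exponentiation and squaring steps are as you describe.

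The gap is the value of $d_k$, which is part of the statement. You never compute $\int_0^1c_4\,dt$; you only predict that the rational pieces ``should'' cancel. The data you would feed into that computation is also wrong. For CondOT, $v_k(1)=1/\lambda_k$ and $v_k'(1)=2/\lambda_k$, so $A_k(1)=1$, not $\lambda_k$. With your value, the exact-derivative piece $\tfrac14\int_0^1A^2A'\,dt$ would give $(\lambda_k^3+1)/12$ instead of $\tfrac16$, so the deferred bookkeeping would not reproduce the stated constant. Your sample $c_3$ is also off: the $-x^2/2$ term contributes $-\tfrac12A^3$, so in fact $c_3=-\tfrac1{24}(A''+6AA'+4A^3)=-\tfrac1{48}v_k'''/v_k$.

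The missing idea is to use $v_k'''\equiv0$ pointwise, i.e.\ $A''=-6AA'-4A^3$, rather than only the integrated fact $\int_0^1c_3\,dt=0$. This identity makes $c_3\equiv0$. It also collapses your midpoint-centered coefficient $c_4=\tfrac1{16}AA''+\tfrac14A^2A'+\tfrac18A^4$ to $-\tfrac18(A^2A'+A^4)$, which is the paper's coefficient. Midpoint centering even spares you the differentiated identity the paper needs, and you then need no $(A')^2$ integral and no partial-fraction bookkeeping.

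The remaining integrals are direct. From $A_k(0)=-1$ and $A_k(1)=1$ you get $\int_0^1A^2A'\,dt=[A^3/3]_0^1=\tfrac23$. The substitution $y=((1+\lambda_k)t-\lambda_k)/\sqrt{\lambda_k}$ gives $v_k=(1+y^2)/(1+\lambda_k)$ and $A_k=(1+\lambda_k)y/(\sqrt{\lambda_k}(1+y^2))$, hence
\[
\int_0^1A_k^4\,dt=\frac{(1+\lambda_k)^3}{\lambda_k^{3/2}}\int_{-\sqrt{\lambda_k}}^{1/\sqrt{\lambda_k}}\frac{y^4}{(1+y^2)^4}\,dy=\frac{\pi(1+\lambda_k)^3}{32\lambda_k^{3/2}}-\frac13 .
\]
The last equality uses $y=\tan\varphi$: the $\varphi$-interval has length $\pi/2$, and the non-$\pi$ boundary terms contribute $-\tfrac13$ after the prefactor. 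Combining, $d_k=-\tfrac18\bigl(\tfrac23+\int_0^1A_k^4\,dt\bigr)$, which is exactly the stated formula.
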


Appendix~\ref{subsec:original-condot-uniform} derives
these expansions and evaluates $d_k$.
Since the coefficient in \eqref{m:condotuniform} is strictly
positive, CondOT has squared Wasserstein error
$\Theta(S^{-6})$ on the uniform grid.

The sign of $d_k$ also describes the remaining discrepancy:
for sufficiently large $S$, the generated standard deviation
is smaller than the target standard deviation in every
eigendirection. In the next section, we ask whether a small
change to the noise schedule can eliminate this remaining error.

\newpage
\section{Exact Finite-Budget Calibration Near CondOT}
\label{m:calibration}

On a uniform grid, we keep $\alpha_t=t$ and add a polynomial
correction to CondOT's noise schedule, with one coefficient per
distinct precision eigenvalue. We choose these coefficients jointly
to eliminate the remaining error in every eigendirection using
a single shared noise schedule.
Let $r$ be the number of distinct precision eigenvalues of the fixed
covariance. After reordering, let $\lambda_1,\ldots,\lambda_r$ contain
one representative of each distinct value, and set
\begin{equation}
 \beta_{\boldsymbol\theta}(t)
 =(1-t)\left(1+\sum_{j=1}^r\theta_jt^j\right).
 \label{m:calibration-family}
\end{equation}
This family reduces to CondOT at $\boldsymbol\theta=0$.
Every coefficient vector gives a smooth schedule with the required
endpoints.

\begin{theorem}
\label{m:exact-calibration}
Fix $\Sigma\succ0$. There is a unique vector
$\boldsymbol K=(K_1,\ldots,K_r)^\top$ solving
\begin{equation}
 \sum_{j=1}^r K_j\int_0^1
 \frac{\big[2t^j(1-t)^2\big]'''}
 {(1-t)^2+t^2/\lambda_k}\,dt
 =48d_k,\qquad k=1,\ldots,r,
 \label{m:calibration-linear-system}
\end{equation}
where $d_k$ is the CondOT coefficient in \eqref{m:dk}.
The first-order correction satisfies
\begin{equation}
 J_S\!\left(\beta_{\boldsymbol K/S}\right)=O(S^{-8})
 \qquad\text{as }S\to\infty.
 \label{m:calibration-leading}
\end{equation}
Moreover, for every sufficiently large integer $S$, there is a locally
unique parameter vector
$\boldsymbol\theta_S=\boldsymbol K/S+O(S^{-2})$
within this polynomial family such that
\begin{equation}
 J_S\!\left(\beta_{\boldsymbol\theta_S}\right)=0.
 \label{m:calibration-exact}
\end{equation}
Equivalently, the corresponding midpoint output satisfies
$Y_S^{\mathrm M}=\mu+\Sigma^{1/2}\epsilon$.
\end{theorem}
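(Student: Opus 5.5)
We reduce everything to the scalar products of Proposition~\ref{m:exactobjective}. By \eqref{m:objective}, $J_S=0$ is equivalent to $D_{1,k}^{\mathrm M}=1/\sqrt{\lambda_k}$ for every $k$, provided these products are positive. Since $v_k$, $A_k$ and $D_{1,k}^{\mathrm M}$ depend on $k$ only through $\lambda_k$, it suffices to impose this for the $r$ representatives. We therefore define
\[
 F_k(\boldsymbol\theta,h)=\sqrt{\lambda_k}\,D_{1,k}^{\mathrm M}(\beta_{\boldsymbol\theta};h)-1,\qquad h=S^{-1},
\]
and seek zeros of the map $F=(F_1,\ldots,F_r)$.

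The first step is a uniform asymptotic expansion of $F_k$ in $h$, jointly in $\boldsymbol\theta$ near $0$. We take logarithms of the product in \eqref{m:products}. On the uniform grid, $\log D_{1,k}^{\mathrm M}$ is a midpoint-rule Riemann sum of $\log$ of the one-step amplification factor. We compare it with the exact value $\int_0^1 A_k\,dt=\tfrac12\log(v_k(1)/v_k(0))=-\tfrac12\log\lambda_k$. This exact value does not depend on $\boldsymbol\theta$, because the endpoints are fixed; this is why only the $O(h^2)$ and higher local defect depends on $\boldsymbol\theta$.

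The local defect of midpoint applied to $\dot y=A_k(t)y$ is $h^3\,g_k[\beta](t)$ plus higher-order terms, where $g_k$ is a combination of $A_k''$, $A_kA_k'$ and $A_k^3$. Summing, and using the Euler--Maclaurin formula for midpoint sums, gives
\[
 F_k=h^2G_k(\boldsymbol\theta)+h^3H_k(\boldsymbol\theta)+O(h^4),
 \qquad G_k(\boldsymbol\theta)=c\int_0^1 g_k[\beta_{\boldsymbol\theta}]\,dt,
\]
and this expansion is smooth in $\boldsymbol\theta$. Theorem~\ref{m:condot} gives $G_k(0)=0$, and Corollary~\ref{m:condotuniformexpansion} gives $H_k(0)=d_k$. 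The Appendix~\ref{subsec:original-local-defect} expansion should supply this structure; I would cite it, checking that its remainders are uniform for $\boldsymbol\theta$ in a small ball, since $v_k$ stays bounded away from zero there.

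Next we linearize $G_k$ at $\boldsymbol\theta=0$. With $\beta=(1-t)(1+\theta_jt^j)$ we get $\delta v_k=2\theta_jt^j(1-t)^2$. The CondOT cancellation suggests that $g_k$ linearizes to a total-derivative-plus-remainder form, which after integration by parts leaves
\[
 \partial_{\theta_j}G_k(0)=\frac1{48}\int_0^1\frac{[2t^j(1-t)^2]'''}{(1-t)^2+t^2/\lambda_k}\,dt .
\]
The factor $1/48$ is a guess chosen to match \eqref{m:calibration-linear-system}. I expect this computation, which identifies the Jacobian with the matrix $M$ of \eqref{m:calibration-linear-system} up to that constant, to be the main obstacle. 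I would approach it by writing $A_k=\tfrac12(\log v_k)'$, noting that for CondOT the relevant combination reduces to a rational function, and differentiating under the integral.

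Uniqueness of $\boldsymbol K$ requires $M$ to be invertible. I would prove this by contradiction. If $Mc=0$, then the polynomial $q=[2p(t)(1-t)^2]'''$, with $p=\sum_j c_jt^j$ (so $\deg p\le r$), integrates to zero against the $r$ kernels $1/((1-t)^2+t^2/\lambda)$ at the $r$ distinct values $\lambda_k$. Via partial fractions or a Chebyshev-system (Haar) argument this should force $p=0$. If that fails, I would fall back on a Vandermonde-type determinant computation.

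With the expansion in hand, we substitute $\boldsymbol\theta=\boldsymbol\phi h$. This gives $F=h^3\big(M\boldsymbol\phi/48+\boldsymbol d\big)+h^4R(\boldsymbol\phi,h)$ with $R$ smooth. Taking $\boldsymbol\phi=\boldsymbol K$ yields $F_k=O(h^4)$. Hence the deviation $D_{1,k}^{\mathrm M}-1/\sqrt{\lambda_k}$ is $O(h^4)$, and by \eqref{m:objective} $J_S=O(h^8)$, which is \eqref{m:calibration-leading}.

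For exactness, we apply the implicit function theorem to $\Phi(\boldsymbol\phi,h)=h^{-3}F=M\boldsymbol\phi/48+\boldsymbol d+hR$ at $(\boldsymbol K,0)$. Its $\boldsymbol\phi$-Jacobian at that point is the invertible matrix $M/48$. This produces a locally unique $\boldsymbol\phi(h)=\boldsymbol K+O(h)$, hence $\boldsymbol\theta_S=\boldsymbol K/S+O(S^{-2})$, with $F=0$ and therefore $D_{1,k}^{\mathrm M}=1/\sqrt{\lambda_k}>0$ for all $k$. Then $J_S=0$, and since the output map is linear in $\epsilon$, $Y_S^{\mathrm M}=\mu+U\diag(\lambda_k^{-1/2})U^\top\epsilon=\mu+\Sigma^{1/2}\epsilon$.
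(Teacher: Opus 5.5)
You follow the paper's outline: a uniform-in-$\boldsymbol\theta$ expansion of the log scale error, a Jacobian given by the matrix of the linear system, a Chebyshev-type invertibility argument, and an implicit-function step. The invertibility sketch is fine, though it is easier on the transpose side, as in Lemma~\ref{lem:calibration-response-invertible}. The paper's implicit-function step is the contraction $T_S(\boldsymbol\theta)=\boldsymbol\theta-S^2L^{-1}\boldsymbol e$ at each fixed $S$. Two things fail as written.

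\textbf{The sign.} Proposition~\ref{prop:local-log-defect} gives $G_k(\boldsymbol\theta)=-\frac1{48}\int_0^1\partial_t^3v_k/v_k\,dt$. Because $\partial_t^3v_k(\cdot;0)\equiv0$, the quotient rule gives $\partial_{\theta_j}G_k(0)=-\frac1{48}M_{kj}$ at once, with no integration by parts. So the step you call the main obstacle is immediate. With your $+\frac1{48}$, $\Phi(\boldsymbol K,0)=M\boldsymbol K/48+\boldsymbol d=2\boldsymbol d\neq0$. Then neither the $O(S^{-8})$ claim nor the IFT base point holds. With the correct sign, $\Phi(\boldsymbol\phi,0)=\boldsymbol d-M\boldsymbol\phi/48$ vanishes at $\boldsymbol K$.

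\textbf{The real obstacle: uniform $\boldsymbol\theta$-dependence of the remainders.} The IFT at $(\boldsymbol K,0)$, with $h$ ranging over the discrete set $\{1/S\}\cup\{0\}$, needs $\partial_{\boldsymbol\phi}\Phi(\boldsymbol\phi,1/S)\to-M/48$ uniformly near $\boldsymbol K$. Equivalently, $\partial_{\theta_j}e_k=-M_{kj}/(48S^2)+O(S^{-3})$ uniformly for $\|\boldsymbol\theta\|\le R/S$.
\begin{itemize}
\item Checking that the remainders are uniform in $\boldsymbol\theta$ bounds $R$ but not $\partial_{\boldsymbol\phi}R$. So ``$R$ smooth'' is asserted, not derived.
\item With $C^0$ bounds alone you could still get existence, via Brouwer applied to $\boldsymbol\phi\mapsto\boldsymbol K+48hM^{-1}R(\boldsymbol\phi,h)$. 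You would not get the local uniqueness the theorem claims.
\item Appendix~\ref{subsec:original-local-defect} only yields $e_k=h^2G_k(\boldsymbol\theta)+O(h^3)$. The form $h^3H_k(\boldsymbol\theta)+O(h^4)$ with $H_k$ Lipschitz in $\boldsymbol\theta$ is established only at CondOT (Corollary~\ref{cor:condot-superconvergence}). So even your $O(S^{-8})$ step needs a fourth-order, parameter-uniform expansion for $\boldsymbol\theta\neq0$, plus an Euler--Maclaurin boundary term, none of which you supply.
\end{itemize}

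\textbf{How the paper handles both problems.} It differentiates the exact identity $e_k=\sum_n[\log g_k^{\mathrm M}(n)-\int_{t_n}^{t_{n+1}}A_k\,dt]$ in $\theta_j$. It then expands $\partial_{\theta_j}g_k^{\mathrm M}/g_k^{\mathrm M}$ using uniform bounds on mixed time and coefficient derivatives of $A_k$ over a compact ball where $v_k\ge c_k$. Finally, it integrates this derivative estimate along $u\boldsymbol\theta$ from CondOT, where $e_k=d_kS^{-3}+O(S^{-4})$ is already known. An alternative repair within your framework: the one-step defect is jointly smooth in $(h,t,\boldsymbol\theta)$. Taylor's formula with integral remainder then gives remainders whose $\boldsymbol\theta$-derivatives obey the same powers of $h$.
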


The two conclusions require different amounts of calibration.
Solving the linear system~\eqref{m:calibration-linear-system} once gives
$\boldsymbol K$ and the explicit schedule $\beta_{\boldsymbol K/S}$.
This cancels the leading error in every eigendirection, giving
\eqref{m:calibration-leading}. To obtain zero loss at a particular
budget, one instead solves the exact equations
\begin{equation}
 D_{1,k}^{\mathrm M}(\boldsymbol\theta_S)
 =\frac1{\sqrt{\lambda_k}},\qquad k=1,\ldots,r,
 \label{m:calibration-equations}
\end{equation}
where $D_{1,k}^{\mathrm M}$ is the midpoint product
in \eqref{m:products}, evaluated with \eqref{m:calibration-family}.
Solving these $r$ nonlinear equations determines the
$O(S^{-2})$ adjustment to $\boldsymbol K/S$. Repeated eigenvalues
require no additional equations.
Such a solution exists for every sufficiently large $S$,
with the calibrated coefficients and the budget threshold depending
on the target spectrum.
Appendix~\ref{app:exact-calibration} proves this result by constructing
an iteration that converges to the calibrated coefficients from
$\boldsymbol K/S$ for sufficiently large $S$.

\section{Numerical Experiments}
\label{m:experiments}

We use four centered Gaussian targets to examine two features of the
calibration problem: the number $r$ of distinct eigenvalues, which determines
the number of correction coefficients, and the spectral spread. All targets
have dimension $d=6$ and total variance $\operatorname{tr}(\Sigma)=6$.
For a positive vector $\boldsymbol w$, set
\begin{equation}
 \Sigma(\boldsymbol w)
 =\frac{6}{\sum_{i=1}^6w_i}\diag(w_1,\ldots,w_6).
 \label{m:spectrum}
\end{equation}
Table~\ref{tab:targets} specifies the four targets. The normalization preserves
the condition number $\kappa=\lambda_{\max}(\Sigma)/\lambda_{\min}(\Sigma)$.
The two-eigenvalue cases use each variance three times, while the
six-eigenvalue cases use geometric spacing. This gives a structured comparison
at fixed dimension and total variance.

\begin{table}[H]
 \centering
 \caption{Four Gaussian targets. The vectors $\boldsymbol w$ are normalized
 by~\eqref{m:spectrum}.}
 \label{tab:targets}
 \begin{tabular}{cccl}
 \toprule
 Target & $r$ & $\kappa$ & Unnormalized covariance eigenvalues $\boldsymbol w$\\
 \midrule
 A & 2 & $4$ & $(1,1,1,4,4,4)$\\
 B & 6 & $4$ & $(4^{j/5})_{j=0}^{5}$\\
 C & 2 & $10^4$ & $(1,1,1,10^4,10^4,10^4)$\\
 D & 6 & $10^4$ & $(10^{4j/5})_{j=0}^{5}$\\
 \bottomrule
 \end{tabular}
\end{table}

We fix $\alpha_t=t$, use the exact Gaussian field, and sample on uniform
grids. A budget of $S$ midpoint steps uses $2S$ field evaluations.
High-precision losses are evaluated from~\eqref{m:objective} using
90-digit decimal arithmetic. Sections~\ref{m:fixed-experiment}
and~\ref{m:calibration-experiment} use Target D as a common example,
combining six matching constraints with a broad spectrum.
Section~\ref{m:crossing-experiment} compares the calibrated schedules for
all four targets. Additional convergence results and the double-precision
check are in Appendices~\ref{app:four-target-results}
and~\ref{app:precision-check}.

\subsection{CondOT superconvergence}
\label{m:fixed-experiment}

We compare CondOT, $\beta_t=1-t$, with four fixed noise schedules:
\[
 (1-t)^2,\qquad 1-t^2,\qquad
 \cos(\pi t/2),\qquad (1-t)(1+0.1t).
\]
These alternatives satisfy the
assumptions of Section~\ref{m:analysis} and have
$J_S=\Theta(S^{-4})$ on uniform grids
(Appendix~\ref{app:squared-noise-baseline}), while CondOT has
$J_S=\Theta(S^{-6})$.
Figure~\ref{fig:accuracy} compares their losses for Target D at
$S=64,128,\ldots,65536$.
The curves approach the predicted rates. 

\begin{figure}[H]
 \centering
 \includegraphics[width=0.94\linewidth]{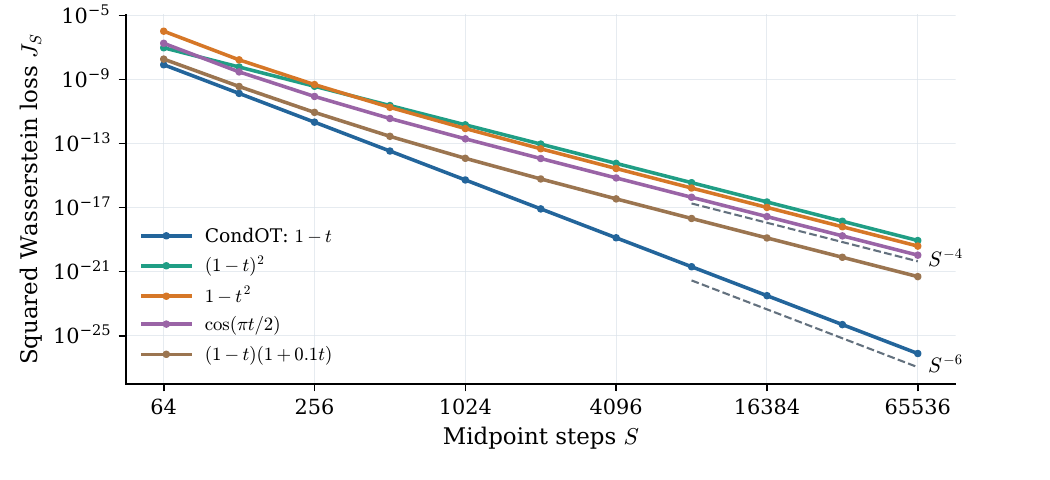}
 \caption{Fixed schedules for Target D. The four alternatives approach
 $S^{-4}$ squared Wasserstein decay, while CondOT achieves $S^{-6}$.
 Dashed lines indicate the asymptotic rates.}
 \label{fig:accuracy}
\end{figure}

\subsection{From first-order correction to exact calibration}
\label{m:calibration-experiment}

For the same target, we solve~\eqref{m:calibration-linear-system} once
for the six coefficients $\boldsymbol K$.
At each budget, we use the explicit schedule
$\beta_{\boldsymbol K/S}$ from~\eqref{m:calibration-family}.
Figure~\ref{fig:leading-correction} compares this schedule with CondOT
on the same grids as Figure~\ref{fig:accuracy}.
The corrected loss approaches eighth-order decay, consistent with
Theorem~\ref{m:exact-calibration}. The improvement is asymptotic.

\begin{figure}[H]
 \centering
 \includegraphics[width=0.94\linewidth]{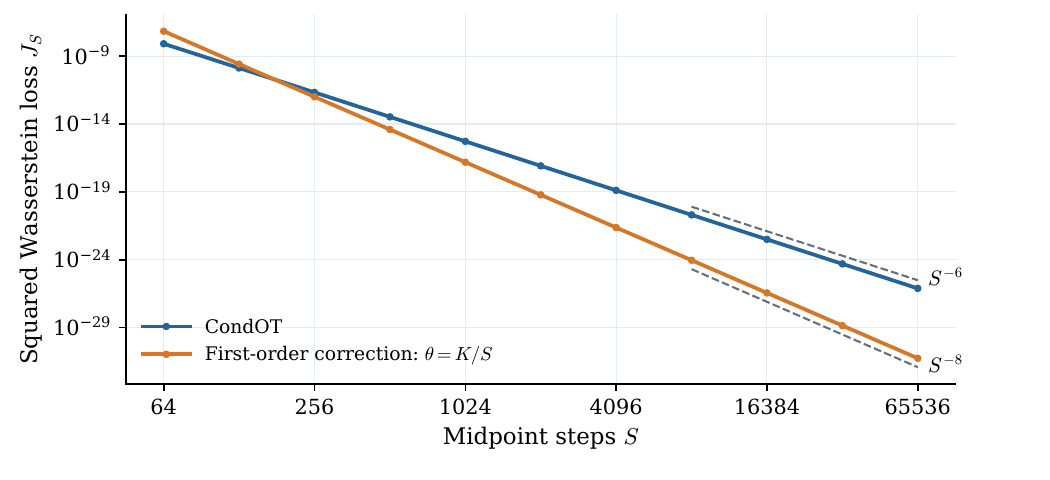}
 \caption{First-order correction for Target D. The explicit choice
 $\boldsymbol\theta=\boldsymbol K/S$ improves CondOT's $S^{-6}$ decay
 to $S^{-8}$. Dashed lines indicate these rates.}
 \label{fig:leading-correction}
\end{figure}

We then solve the exact matching equations~\eqref{m:calibration-equations}
using the coefficient iteration constructed in the proof
(Appendix~\ref{app:calibration-uniform-expansion}), starting from
$\boldsymbol K/S$.
Table~\ref{tab:calibration} compares the leading correction with the
numerically calibrated schedules at four budgets. The iteration reduces
the terminal loss below $10^{-170}$ in each case.
Appendix~\ref{app:calibration-numerics} gives the numerical procedure,
iteration counts, and results for all four targets, including failures
to reach the matching tolerance for Targets C and D at $S=64,128$.

\begin{table}[H]
 \centering
 \caption{Calibration for Target D using the proof's iteration.
 The last column reports numerical
 squared Wasserstein loss in 90-digit arithmetic.}
 \label{tab:calibration}
 \begin{tabular}{rrr}
 \toprule
 $S$ & $S^8J_S(\beta_{\boldsymbol K/S})$
 & $J_S(\beta_{\boldsymbol\theta_S})$ (numerical)\\
 \midrule
 $256$ & $1.8642\times10^{7}$ & $<10^{-170}$\\
$512$ & $1.8610\times10^{7}$ & $<10^{-170}$\\
$1024$ & $1.8602\times10^{7}$ & $<10^{-170}$\\
$2048$ & $1.8601\times10^{7}$ & $<10^{-170}$\\
 \bottomrule
 \end{tabular}
\end{table}

\subsection{How the target spectrum changes the calibrated schedule}
\label{m:crossing-experiment}

We now compare all four targets at the common budget $S=256$.
Figure~\ref{fig:four-crossings} shows the difference between each calibrated
noise schedule and CondOT. All panels use the same time and vertical scales.
The endpoints remain fixed, while the shape and size of the interior
correction depend on the target spectrum.
For these four targets, the calibrated corrections are larger for the broader spectra at
this budget: the maximum deviations are approximately
$2.42\times10^{-4}$, $8.15\times10^{-4}$, $3.15\times10^{-2}$,
and $4.07\times10^{-2}$ for A--D, respectively.
The two-eigenvalue corrections remain below CondOT in the interior,
whereas the six-eigenvalue examples cross it. These are features of the
selected targets, rather than universal consequences of $r$ or $\kappa$.
The coefficients are calibrated using the same procedure in every case.
Appendix~\ref{app:four-crossing-numerics} records the coefficients and
numerical deviations from CondOT.

\begin{figure}[H]
 \centering
 \includegraphics[width=\linewidth]{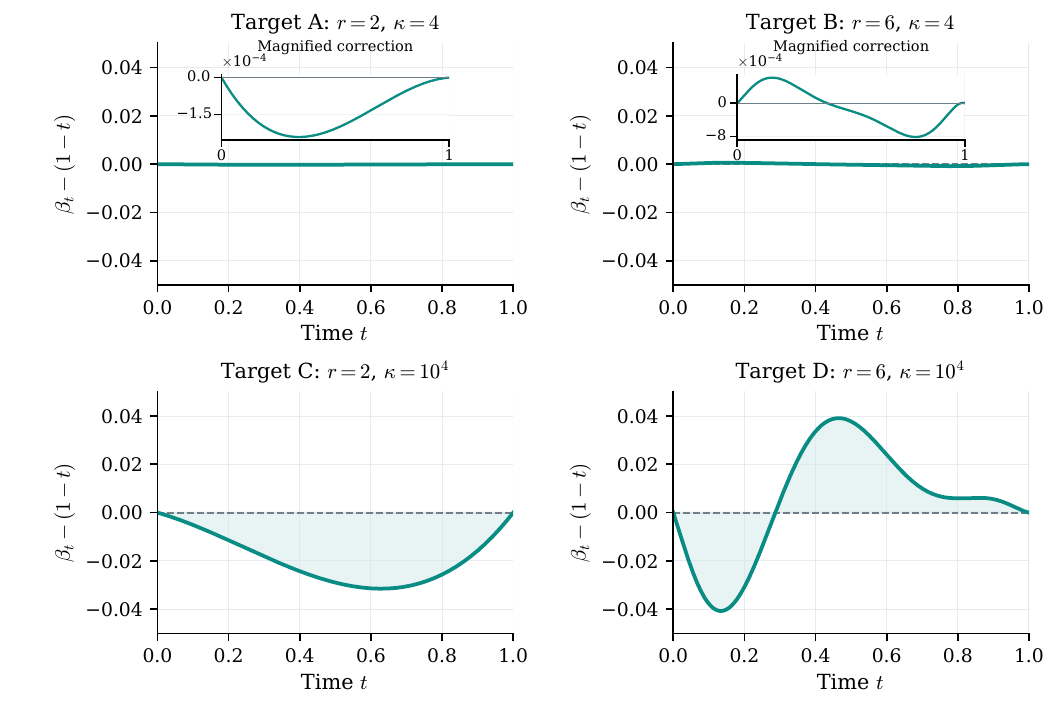}
 \caption{Calibrated schedule corrections at $S=256$.
 Top: modest spectral spread ($\kappa=4$). Bottom: broad spectral spread
 ($\kappa=10^4$). Left: two distinct eigenvalues. Right: six distinct
 eigenvalues. The horizontal line is CondOT, and all panels share the same
 vertical scale. The smaller corrections for Targets A and B are magnified.}
 \label{fig:four-crossings}
\end{figure}

\section{Related Work and Scope}
\label{m:related}

Flow matching learns generative vector fields through conditional regression \citep{lipman2023,guide2024,tong2024}, with close connections to stochastic interpolants \citep{albergo2025}. Generating samples requires numerically integrating the resulting ODE, introducing discretization error even when the exact vector field is available.

Gaussian models provide a tractable setting for analyzing the sampling
error. \citet{hurault2025} quantify Wasserstein errors in Gaussian
diffusion models, while \citet{benita2025spectral} derive exact
Gaussian output laws and optimize noise schedules using terminal
Wasserstein distance and KL divergence. \citet{chen2025} approach
schedule design through the vector field's Lipschitz regularity.
More directly related to discretization,
\citet{hurault2026geometry} derive Euler--Maruyama error expansions
and construct schedules that cancel the leading error in one
covariance eigendirection. 

Exact finite-step Gaussian sampling has also been studied.
For centered Gaussians with commuting covariances,
\citet{claustre2026gaussian} allow direction-dependent schedules
and show that Euler integrates the Gaussian Wasserstein geodesic
exactly on any time grid. Our setting uses midpoint sampling with
$\alpha_t=t$ and a single scalar noise schedule shared across all
covariance directions, and establishes exact terminal matching using corrections that vanish toward CondOT as the
sampling budget grows.

Mean exactness connects our analysis to \citet{bbdmschedules},
who study schedule design for Brownian Bridge Diffusion Models
under a Gaussian-mixture prior. Their surrogate reconstruction law
preserves each posterior component's mean, simplifying their
Wasserstein upper-bound objective to a weighted sum of squared
differences between generated and target standard deviations.
Fixing $\alpha_t=t$ gives the analogous simplification in our
Gaussian setting: the exact squared Wasserstein
objective~\eqref{m:objective} depends only on the covariance mismatch.

\section{Conclusion}

We analyzed how schedule choice affects midpoint sampling for Gaussian
flow matching. The linear signal schedule preserves the mean, and the
CondOT noise schedule cancels the leading numerical error, yielding
superconvergence even on nonuniform grids. On uniform grids, the remaining
error can be removed by a vanishing polynomial correction with one
coefficient per distinct precision eigenvalue. Its leading approximation improves the convergence rate beyond CondOT, while exact calibration gives zero terminal loss for every sufficiently large budget. This establishes exact finite-budget sampling within a family of scalar schedules approaching CondOT.

The analysis assumes a specific solver and a Gaussian
target with positive definite covariance.
Future work includes extending the analysis to Gaussian mixtures,
studying other numerical solvers, and examining how network approximation
error affects the benefits of calibration.

\label{m:mainend}
\section*{Acknowledgements}
This research was partially supported by the Israel Science Foundation (ISF) under Grants 951/24 and 409/24, and by the Council for Higher Education--Planning and Budgeting Committee.

\bibliography{references}
\bibliographystyle{tmlr}

\clearpage
\appendix
\numberwithin{equation}{section}
\numberwithin{theorem}{section}

\section{Problem Setup and Exact Sampling Error}
\label{app:setup-and-error}

This appendix supplies the derivations for Section~\ref{m:setup}.
We first obtain the Gaussian field, then establish universal mean exactness,
and finally derive the exact finite-step objective. 

\subsection{Gaussian probability path and exact field}
\label{sec:gaussian-fm}

Let the target distribution be
\begin{equation}
\label{eq:gaussian-data}
p_{\mathrm{data}}(z)=\mathcal N(z;\mu,\Sigma),
\qquad
\Sigma\succ0,
\end{equation}
and let the base distribution be
\[
p_{\mathrm{init}}=\mathcal N(0,I).
\]

\subsubsection{Gaussian probability path}

Draw
\[
z\sim p_{\mathrm{data}},
\qquad
\epsilon\sim\mathcal N(0,I),
\qquad
\epsilon\perp z.
\]
Let $\alpha,\beta\in C^1([0,1])$ satisfy the endpoint conditions
\begin{equation}
\label{eq:schedule-endpoints}
\alpha_0=0,
\qquad
\beta_0=1,
\qquad
\alpha_1=1,
\qquad
\beta_1=0,
\end{equation}
and the nondegeneracy condition
\[
\alpha_t^2+\beta_t^2>0
\qquad\text{for every }t\in[0,1].
\]

Define the interpolation
\begin{equation}
\label{eq:gaussian-path}
X_t=\alpha_tz+\beta_t\epsilon,
\qquad
0\leq t\leq1.
\end{equation}
Its data-conditional law is
\begin{equation}
\label{eq:conditional-path}
p_t(\cdot\mid z)=\mathcal N(\alpha_tz,\beta_t^2I),
\end{equation}
where, whenever $\beta_t=0$, the right-hand side is understood
as the Dirac measure $\delta_{\alpha_tz}$. In particular,
$p_1(\cdot\mid z)=\delta_z$.

Since $z$ and $\epsilon$ are independent Gaussian random vectors,
the marginal law $p_t$ of $X_t$ is
\begin{equation}
\label{eq:gaussian-marginal-path}
p_t=\mathcal N(\alpha_t\mu,\beta_t^2I+\alpha_t^2\Sigma).
\end{equation}
The endpoint conditions give
\[
p_0=p_{\mathrm{init}},
\qquad
p_1=p_{\mathrm{data}}.
\]

\subsubsection{Marginal vector field and exact flow}

We derive the marginal vector field directly from the interpolation
\eqref{eq:gaussian-path}. Define its marginal covariance by
\[
C_t\coloneqq\beta_t^2I+\alpha_t^2\Sigma.
\]
Since $\Sigma\succ0$ and $\alpha_t^2+\beta_t^2>0$,
$C_t$ is positive definite for every $t\in[0,1]$.

Differentiating the interpolation gives
\[
\dot X_t=\dot\alpha_tz+\dot\beta_t\epsilon.
\]
Define the marginal vector field by
\begin{equation}
\label{eq:marginal-field-definition}
u_t^{\mathrm{target}}(x)
\coloneqq\mathbb E[\dot X_t\mid X_t=x].
\end{equation}
Both $X_t$ and $\dot X_t$ are linear transformations of the
independent Gaussian random vectors $z$ and $\epsilon$, so they
are jointly Gaussian.

\paragraph{Gaussian regression.}
For jointly Gaussian random vectors $(A,B)$ with invertible
$\operatorname{Cov}(B)$,
\[
\mathbb E[A\mid B=b]
=\mathbb E[A]
+\operatorname{Cov}(A,B)\operatorname{Cov}(B)^{-1}
\bigl(b-\mathbb E[B]\bigr).
\]
Here,
\[
\mathbb E[X_t]=\alpha_t\mu,
\qquad
\mathbb E[\dot X_t]=\dot\alpha_t\mu,
\]
and independence gives
\[
\operatorname{Cov}(X_t)=C_t,
\qquad
\operatorname{Cov}(\dot X_t,X_t)
=\beta_t\dot\beta_t I+\alpha_t\dot\alpha_t\Sigma.
\]
Applying the regression identity therefore yields
\begin{equation}
\label{eq:exact-marginal-field}
u_t^{\mathrm{target}}(x)
=\dot\alpha_t\mu
+\left(\beta_t\dot\beta_t I+\alpha_t\dot\alpha_t\Sigma\right)
\left(\beta_t^2I+\alpha_t^2\Sigma\right)^{-1}
\left(x-\alpha_t\mu\right).
\end{equation}

\paragraph{Exact flow.}
Consider the target ODE
\begin{equation}
\label{eq:population-ode}
Y_0=\epsilon\sim\mathcal N(0,I),
\qquad
\frac{dY_t}{dt}=u_t^{\mathrm{target}}(Y_t).
\end{equation}
The field is continuous in $t$ and globally Lipschitz in $x$,
uniformly on $[0,1]$, because its affine coefficients are continuous
and $C_t^{-1}$ is bounded on this interval. Hence the ODE has a
unique solution.

We verify directly that this solution is
\[
Y_t=\alpha_t\mu+C_t^{1/2}\epsilon,
\]
where $C_t^{1/2}$ denotes the positive definite square root.
Indeed, $C_t$ is diagonal in the fixed eigenbasis of $\Sigma$,
so differentiation of its eigenvalues gives
\[
\frac{d}{dt}C_t^{1/2}
=\left(\beta_t\dot\beta_t I+\alpha_t\dot\alpha_t\Sigma\right)
C_t^{-1/2}.
\]
Consequently,
\begin{align*}
\frac{dY_t}{dt}
&=\dot\alpha_t\mu
+\left(\beta_t\dot\beta_t I+\alpha_t\dot\alpha_t\Sigma\right)
C_t^{-1/2}\epsilon
=u_t^{\mathrm{target}}(Y_t).
\end{align*}
Moreover, $C_0=I$ and $\alpha_0=0$, so the initial condition is
satisfied. It follows that
\[
Y_t\sim\mathcal N(\alpha_t\mu,C_t)=p_t
\qquad\text{for every }t\in[0,1],
\]
and in particular $Y_1\sim p_{\mathrm{data}}$.

\subsubsection{Spectral form of the marginal ODE}

For one Gaussian, the dynamics can be exposed directly in the
eigenbasis of the target covariance.  Diagonalize the target precision as

\begin{equation}
\label{eq:precision-basis}
\Sigma^{-1}
=
U\diag(\lambda_1,\ldots,\lambda_d)U^\top,
\qquad
\lambda_k>0.
\end{equation}
Define
\[
Y_t^U=U^\top Y_t,
\qquad
\epsilon^U=U^\top\epsilon,
\qquad
\mu^U=U^\top\mu.
\]
Then $\epsilon^U\sim\mathcal N(0,I)$ and the ODE decouples by
eigendirection:
\begin{equation}
\label{eq:scalar-gaussian-ode}
\frac{dY_{t,k}^U}{dt}
=
\dot\alpha_t\mu_k^U
+
\frac{
\beta_t\dot\beta_t+\alpha_t\dot\alpha_t/\lambda_k
}{
\beta_t^2+\alpha_t^2/\lambda_k
}
\left(Y_{t,k}^U-\alpha_t\mu_k^U\right).
\end{equation}
This scalar equation is the starting point
for the explicit-midpoint analysis below.

\subsection{A signal schedule that preserves the mean}
\label{sec:midpoint-analysis}

This subsection proves Theorem~\ref{m:mean}. We first record the
midpoint recursion and its general output law, which the
mean-exactness proof uses.
For the finite-step analysis, consider an
arbitrary prescribed time grid
\begin{equation}
\label{eq:general-time-grid}
0=t_0<t_1<\cdots<t_S=1,
\qquad
h_n=t_{n+1}-t_n,
\qquad
n=0,\ldots,S-1.
\end{equation}
We write
\begin{equation}
\label{eq:general-grid-midpoint}
t_{n+\frac12}
=
t_n+\frac{h_n}{2}
=
\frac{t_n+t_{n+1}}{2}.
\end{equation}
The uniform grid used later is the special case
\begin{equation}
\label{eq:uniform-grid}
t_n=nh,
\qquad
h=\frac1S,
\qquad
n=0,\ldots,S.
\end{equation}
For a general nonautonomous ODE
\[
\frac{dY_t}{dt}=f(t,Y_t),
\]
the explicit midpoint approximation $Y_n^{\mathrm M}\approx Y_{t_n}$ is
defined by
\begin{equation}
\label{eq:explicit-midpoint}
\begin{aligned}
Y_{n+\frac12}^{\mathrm M}
&=Y_n^{\mathrm M}+\frac{h_n}{2}f(t_n,Y_n^{\mathrm M}),
\\
Y_{n+1}^{\mathrm M}
&=Y_n^{\mathrm M}+h_n
f(t_{n+\frac12},Y_{n+\frac12}^{\mathrm M}).
\end{aligned}
\end{equation}
Thus each interval uses two vector-field evaluations: one at its left endpoint
and one at a predicted midpoint.  Consequently, $S$ midpoint steps require
$2S$ function evaluations, so an even fixed budget of $N_{\mathrm{FE}}$
evaluations corresponds to $S=N_{\mathrm{FE}}/2$ steps.
For a fixed eigendirection $k$, define
\begin{equation}
\label{eq:directional-variance}
v_k(t)
\coloneqq
\beta_t^2+\frac{\alpha_t^2}{\lambda_k},
\qquad
A_k(t)
\coloneqq
\frac{\dot v_k(t)}{2v_k(t)}.
\end{equation}
Then the scalar ODE \eqref{eq:scalar-gaussian-ode} becomes
\begin{equation}
\label{eq:affine-scalar-ode}
\frac{dY_{t,k}^U}{dt}
=
\dot\alpha_t\mu_k^U
+A_k(t)\left(Y_{t,k}^U-\alpha_t\mu_k^U\right).
\end{equation}

\subsubsection{General midpoint recursion and output law}
\label{subsec:original-midpoint-output}

Let $Y_{n,k}^{U,\mathrm M}$ denote the midpoint approximation at time $t_n$.
Applying \eqref{eq:explicit-midpoint} to
\eqref{eq:affine-scalar-ode} and collecting the coefficients of
$Y_{n,k}^{U,\mathrm M}$ and $\mu_k^U$ yields
\begin{equation}
\label{eq:midpoint-affine-recursion}
Y_{n+1,k}^{U,\mathrm M}
=
g_k^{\mathrm M}(n)Y_{n,k}^{U,\mathrm M}
+q_k^{\mathrm M}(n)\mu_k^U,
\end{equation}
where
\begin{align}
g_k^{\mathrm M}(n)
&=
1+h_nA_k(t_{n+\frac12})
+\frac{h_n^2}{2}A_k(t_{n+\frac12})A_k(t_n),
\label{eq:midpoint-g}
\\
q_k^{\mathrm M}(n)
&=
h_n\left(
\dot\alpha_{t_{n+\frac12}}
-\alpha_{t_{n+\frac12}}A_k(t_{n+\frac12})
\right) +\frac{h_n^2}{2}A_k(t_{n+\frac12})
\left(
\dot\alpha_{t_n}-\alpha_{t_n}A_k(t_n)
\right).
\label{eq:midpoint-q}
\end{align}

\begin{proposition}
\label{prop:general-midpoint-law}
For general schedules $\alpha_t$ and $\beta_t$ on the prescribed grid
\eqref{eq:general-time-grid}, the midpoint output satisfies
\begin{equation}
\label{eq:general-midpoint-output-coordinate}
Y_{S,k}^{U,\mathrm M}
=
D_{1,k}^{\mathrm M}\epsilon_k^U
+D_{2,k}^{\mathrm M}\mu_k^U,
\end{equation}
where
\begin{align}
D_{1,k}^{\mathrm M}
&=
\prod_{n=0}^{S-1}g_k^{\mathrm M}(n),
\label{eq:D1-definition}
\\
D_{2,k}^{\mathrm M}
&=
\sum_{i=0}^{S-1}
q_k^{\mathrm M}(i)
\prod_{n=i+1}^{S-1}g_k^{\mathrm M}(n).
\label{eq:D2-definition}
\end{align}
An empty product is interpreted as $1$.  Consequently,
\begin{equation}
\label{eq:general-midpoint-output-law}
p_S^{\mathrm M}
=
\mathcal N\!\left(
U\diag(D_{2,1}^{\mathrm M},\ldots,D_{2,d}^{\mathrm M})U^\top\mu,
\Sigma_{\mathrm M}
\right),
\end{equation}
where
\begin{equation}
\label{eq:midpoint-output-covariance}
\Sigma_{\mathrm M}
=
U\diag((D_{1,1}^{\mathrm M})^2,\ldots,
(D_{1,d}^{\mathrm M})^2)U^\top.
\end{equation}
\end{proposition}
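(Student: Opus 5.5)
The plan is to unroll the scalar affine recursion \eqref{eq:midpoint-affine-recursion} by induction on $n$, and then reassemble the coordinates through the orthogonal matrix $U$.

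\textbf{Recursion coefficients.} In each eigendirection $k$, I first confirm \eqref{eq:midpoint-g}--\eqref{eq:midpoint-q} by substituting \eqref{eq:affine-scalar-ode} into \eqref{eq:explicit-midpoint}. The predicted midpoint is
\[
Y_{n+\frac12,k}^{U,\mathrm M}=\Bigl(1+\tfrac{h_n}{2}A_k(t_n)\Bigr)Y_{n,k}^{U,\mathrm M}+\tfrac{h_n}{2}\bigl(\dot\alpha_{t_n}-\alpha_{t_n}A_k(t_n)\bigr)\mu_k^U .
\]
Inserting this into the second stage, $h_n\bigl[\dot\alpha_{t_{n+\frac12}}\mu_k^U+A_k(t_{n+\frac12})(Y_{n+\frac12,k}^{U,\mathrm M}-\alpha_{t_{n+\frac12}}\mu_k^U)\bigr]$, and collecting the coefficients of $Y_{n,k}^{U,\mathrm M}$ and $\mu_k^U$ gives exactly $g_k^{\mathrm M}(n)$ and $q_k^{\mathrm M}(n)$.

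\textbf{Induction.} I will prove the statement
\[
Y_{m,k}^{U,\mathrm M}=\Bigl(\prod_{n=0}^{m-1}g_k^{\mathrm M}(n)\Bigr)\epsilon_k^U+\Bigl(\sum_{i=0}^{m-1}q_k^{\mathrm M}(i)\prod_{n=i+1}^{m-1}g_k^{\mathrm M}(n)\Bigr)\mu_k^U
\]
for $m=0,\ldots,S$.
\begin{itemize}
\item The base case $m=0$ holds because $Y_0^{U}=U^\top\epsilon=\epsilon^U$, with an empty product equal to $1$ and an empty sum equal to $0$.
\item For the inductive step, multiply the formula at step $m$ by $g_k^{\mathrm M}(m)$ and add $q_k^{\mathrm M}(m)\mu_k^U$. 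This extends the product by one factor and appends the $i=m$ term, whose trailing product is empty.
\end{itemize}
Setting $m=S$ gives \eqref{eq:general-midpoint-output-coordinate} with \eqref{eq:D1-definition}--\eqref{eq:D2-definition}.

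\textbf{Output law.} Write the result as the vector identity
\[
U^\top Y_S^{\mathrm M}=\diag(D_{1,k}^{\mathrm M})\,\epsilon^U+\diag(D_{2,k}^{\mathrm M})\,U^\top\mu .
\]
This is an affine image of $\epsilon^U\sim\mathcal N(0,I)$, since the $D$'s are deterministic: they depend only on the schedules and the grid. Multiplying by $U$ gives the following.
\begin{itemize}
\item The mean is $U\diag(D_{2,k}^{\mathrm M})U^\top\mu$.
\item The covariance is $U\diag(D_{1,k}^{\mathrm M})\diag(D_{1,k}^{\mathrm M})U^\top$, which is \eqref{eq:midpoint-output-covariance}.
\end{itemize}
This establishes \eqref{eq:general-midpoint-output-law}.

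\textbf{Main obstacle.} The only real subtlety is well-definedness. $A_k(t)=\dot v_k/(2v_k)$ requires $v_k(t)>0$ at the evaluation times $t_n$ and $t_{n+\frac12}$. This follows from $\alpha_t^2+\beta_t^2>0$ and $\lambda_k>0$, since
\[
v_k\ge \min(1,1/\lambda_k)\,(\alpha_t^2+\beta_t^2)>0 .
\]
It also needs $v_k\in C^1$, which holds because $\alpha,\beta\in C^1$. Beyond that, the bookkeeping of the midpoint substitution is the only place where care is needed.
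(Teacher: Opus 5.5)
Your proposal is correct and is essentially the paper's proof: iterate the affine recursion \eqref{eq:midpoint-affine-recursion} to write $Y_{S,k}^{U,\mathrm M}$ in terms of $\epsilon_k^U$ and $\mu_k^U$, then read off the mean and covariance from $\epsilon^U\sim\mathcal N(0,I)$. You make explicit what the paper leaves implicit: the check of the coefficients $g_k^{\mathrm M}$ and $q_k^{\mathrm M}$, the induction with empty products, and the positivity of $v_k$ at the evaluation times.
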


\begin{proof}
Iterating the affine recursion \eqref{eq:midpoint-affine-recursion} gives
\eqref{eq:general-midpoint-output-coordinate}.  Since
$\epsilon^U\sim\mathcal N(0,I)$, its mean and covariance give
\eqref{eq:general-midpoint-output-law}.
\end{proof}

\subsubsection{Universal mean exactness of the linear signal schedule}
\label{subsec:original-mean-proof}

The general output law in Proposition~\ref{prop:general-midpoint-law}
allows a finite-step mean error.
Proposition~\ref{prop:linear-alpha-unique} shows that the linear signal
schedule is precisely the schedule that removes the terminal mean error
for every time grid and every noise schedule satisfying the conditions
stated below.

\begin{proposition}
\label{prop:linear-alpha-unique}
Let $\alpha\in C^1([0,1])\cap C^2((0,1))$ be fixed independently of $S$, the
time grid, and $\beta$, and satisfy $\alpha_0=0$ and $\alpha_1=1$.  For any
fixed $\Sigma\succ0$, explicit midpoint has the exact terminal mean
\begin{equation}
\label{eq:universal-terminal-mean}
\mathbb E\!\left[Y_{S,k}^{U,\mathrm M}\right]=\mu_k^U
\qquad\text{for every }k
\end{equation}
for every $S\geq1$, every grid of the form
\eqref{eq:general-time-grid}, every target mean $\mu$, and every noise
schedule $\beta_t$ satisfying the conditions in Appendix A.1.1
if and only if
\begin{equation}
\alpha_t=t,
\qquad 0\leq t\leq1.
\label{eq:linear-alpha-restriction}
\end{equation}
For this schedule and every such grid and $\beta_t$, explicit midpoint in fact
reproduces the exact mean at every grid point and predicted midpoint:
\begin{equation}
\label{eq:midpoint-grid-mean}
\mathbb E\!\left[Y_{n,k}^{U,\mathrm M}\right]
=
t_n\mu_k^U,
\qquad
n=0,\ldots,S,
\end{equation}
and
\begin{equation}
\label{eq:predicted-midpoint-mean}
\mathbb E\!\left[Y_{n+\frac12,k}^{U,\mathrm M}\right]
=
t_{n+\frac12}\mu_k^U,
\qquad
n=0,\ldots,S-1.
\end{equation}
Consequently, $D_{2,k}^{\mathrm M}=1$ and
\begin{equation}
\label{eq:midpoint-output-coordinate}
Y_{S,k}^{U,\mathrm M}
=
\mu_k^U+D_{1,k}^{\mathrm M}\epsilon_k^U,
\end{equation}
so
\begin{equation}
\label{eq:midpoint-output-law}
p_S^{\mathrm M}=\mathcal N(\mu,\Sigma_{\mathrm M}).
\end{equation}
\end{proposition}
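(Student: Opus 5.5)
The approach works with the mean recursion directly. Taking expectations in \eqref{eq:midpoint-affine-recursion} and in the predictor step of \eqref{eq:explicit-midpoint}, and using $\mathbb E[\epsilon_k^U]=0$, the means $m_n\coloneqq\mathbb E[Y_{n,k}^{U,\mathrm M}]$ satisfy the midpoint scheme applied to the affine scalar ODE \eqref{eq:affine-scalar-ode} with deterministic initial value $m_0=0$. Writing $m=\alpha\mu_k^U+e$, the field becomes $\dot\alpha_t\mu_k^U+A_k(t)e$. Thus the mean deviation from the exact curve $\alpha_t\mu_k^U$ is driven only by how the scheme integrates $\dot\alpha$.

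\textbf{Sufficiency.} Let $\alpha_t=t$ and suppose inductively that $m_n=t_n\mu_k^U$. The predictor then gives
\[
m_{n+\frac12}=t_n\mu_k^U+\tfrac{h_n}{2}\bigl(\mu_k^U+A_k(t_n)\cdot 0\bigr)=t_{n+\frac12}\mu_k^U .
\]
The corrector gives $m_{n+1}=t_n\mu_k^U+h_n\mu_k^U=t_{n+1}\mu_k^U$, since the $A_k$ term again multiplies zero. This holds for any $\beta$, grid, and $\mu$; the only requirement is that $A_k$ be finite at the evaluation points, which follows from $v_k>0$. Induction yields \eqref{eq:midpoint-grid-mean} and \eqref{eq:predicted-midpoint-mean}. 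Setting $n=S$ and comparing with \eqref{eq:general-midpoint-output-coordinate} gives $D_{2,k}^{\mathrm M}\mu_k^U=\mu_k^U$ for all $\mu$, so $D_{2,k}^{\mathrm M}=1$. Then \eqref{eq:midpoint-output-coordinate} and \eqref{eq:midpoint-output-law} follow from Proposition~\ref{prop:general-midpoint-law}.

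\textbf{Necessity.} Specialize to $S=1$ with the single step $h_0=1$, $t_0=0$, $t_{1/2}=1/2$. Because $\mu$ is arbitrary, exactness requires $D_{2,k}^{\mathrm M}=q_k^{\mathrm M}(0)=1$. From \eqref{eq:midpoint-q}, with $\alpha_0=0$ and $A_k(0)=\dot v_k(0)/2=\beta_0\dot\beta_0=\dot\beta_0$, this reads
\[
\dot\alpha_{1/2}-\alpha_{1/2}A_k(\tfrac12)+\tfrac12A_k(\tfrac12)\dot\alpha_0=1 .
\]
The key idea is to vary $\beta$ so that $A_k(1/2)$ takes at least two distinct values while $\dot\alpha$ and $\alpha$ stay fixed. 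One way is to perturb $\beta$ only near $t=1/2$, changing $\beta_{1/2}\dot\beta_{1/2}$ while keeping $\beta_{1/2}$ fixed. The condition is affine in $A_k(1/2)$, so both its slope and intercept must vanish:
\[
\alpha_{1/2}=\tfrac12\dot\alpha_0,\qquad \dot\alpha_{1/2}=1 .
\]
This uses only one grid point and is not yet enough. The plan is to use general one-step grids $0=t_0<t_1=1$, which are not available, so instead I use two-step grids $\{0,s,1\}$ and longer grids.

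The cleanest route is an inductive, grid-local argument. Take the grid $0=t_0<t_1=\tau<\dots$ and exploit the freedom in $\beta$ on each subinterval. Exactness for every grid and every $\beta$ forces the mean to be exact at every grid point. To see this, first show that the final condition being affine in independently perturbable quantities $A_k$ at later nodes forces the error to vanish; a cleaner alternative is to choose $\beta$ so that later steps act as a known transport. I expect this step to be the main obstacle. The plan for it is to perturb $\beta$ locally near a single midpoint $t_{n+1/2}$, changing $A_k(t_{n+1/2})$ alone, and to require invariance of the terminal mean. The coefficient of $A_k(t_{n+1/2})$ in the terminal mean is (product of later $g$'s)$\cdot h_n\bigl(\tfrac{h_n}{2}\dot\alpha_{t_n}+e\text{-terms}-\alpha_{t_{n+1/2}}+\tfrac{h_n}{2}\cdots\bigr)$, and the later $g$'s can be kept nonzero. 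This yields, for every grid containing consecutive nodes $a<b$, a relation of the form $\alpha_{(a+b)/2}=m_a+\tfrac{b-a}{2}(\dot\alpha_a+A_k(a)(m_a-\alpha_a))$, that is, the predicted midpoint mean is exact.

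Starting from $a=0$, where $m_0=0=\alpha_0$, this gives $\alpha_{b/2}=\tfrac b2\dot\alpha_0$ for all $b\in(0,1]$. Hence $\alpha$ is linear on $[0,1/2]$, with slope $c=\dot\alpha_0$. Next take $a\in(0,1/2]$ and $b\le1$. Exactness of the mean at node $a$ (obtained by the same local-perturbation argument, or from the $S=1$ identities above applied inductively) extends linearity up to $(a+b)/2$ and covers $[0,1)$. Continuity and $\alpha_1=1$ then force $c=1$, so $\alpha_t=t$.
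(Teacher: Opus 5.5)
Your sufficiency argument is the paper's. Your necessity argument takes a genuinely different route, and it works once one step is repaired.

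\textbf{Comparison with the paper.} The paper propagates exactness \emph{backward} from $t_S$. Varying $A_k(t_{n+\frac12})$ in the corrector and $A_k(t_n)$ in the predictor forces exact means at every predicted midpoint and every node, and no later multipliers ever enter. The resulting identity $\alpha_{t_{n+\frac12}}=\alpha_{t_n}+\frac{h_n}{2}\dot\alpha_{t_n}$ is then used only on uniform grids, where Taylor's theorem gives $\ddot\alpha=0$ at nearby points. That is where the hypothesis $\alpha\in C^2((0,1))$ is used.

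You instead read off the coefficient of $A_k(t_{n+\frac12})$ in the terminal mean directly. This brings in the factor $\prod_{m>n}g_k^{\mathrm M}(m)$. Your claim that it can be kept nonzero is correct but needs a line of justification. Note that $g_k^{\mathrm M}(m)=1+h_mA_k(t_{m+\frac12})\bigl(1+\frac{h_m}{2}A_k(t_m)\bigr)$ either equals $1$ or is a nonconstant affine function of $A_k(t_{m+\frac12})$, which you may perturb independently.

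In exchange, placing the first node $b$ freely gives $\alpha_{b/2}=\frac b2\dot\alpha_0$, hence linearity on $[0,\frac12]$, without differentiating $\alpha$ twice. Your route therefore needs only $\alpha\in C^1([0,1])$, and it shows that the $C^2((0,1))$ assumption is unnecessary.

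\textbf{The step that fails as written.} In your last step, taking $a\in(0,\frac12]$ and $b\le 1$ only reaches $(a+b)/2\le\frac34$. One pass therefore does not cover $[0,1)$. There are two easy repairs.

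First, you can iterate. Suppose $\alpha_t=ct$ on $[0,T]$, and take $a\le T$ as the first node. The first corrector step, together with $m_{\frac12}=\alpha_{a/2}$, gives $m_a=a\dot\alpha_{a/2}=ca=\alpha_a$. Your midpoint identity on $[a,b]$ then gives $\alpha_{(a+b)/2}=c\,\frac{a+b}{2}$, so $\alpha$ is linear on $[0,\frac{1+T}{2}]$. Since $T_j=1-2^{-j-1}\to1$, this reaches all of $[0,1)$.

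Second, more cleanly, combine node exactness (obtained by varying $A_k(a)$) with your identity. This gives $\alpha_{a+\delta}=\alpha_a+\delta\dot\alpha_a$ for every $a\in[0,1)$ and $0<\delta\le\frac{1-a}{2}$. So $\alpha$ is affine on each interval $[a,\frac{1+a}{2}]$, and these overlapping intervals force a single slope on $[0,1)$.

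With either repair, continuity and $\alpha_1=1$ finish the proof as you say. Your $S=1$ computation is correct but is not needed for the conclusion.
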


\begin{proof}
First suppose that $\alpha_t=t$.  The initial mean is
$\mathbb E[Y_{0,k}^{U,\mathrm M}]=0=t_0\mu_k^U$.  If
$\mathbb E[Y_{n,k}^{U,\mathrm M}]=t_n\mu_k^U$, then taking expectations in
the predictor and corrector stages gives
\begin{align*}
\mathbb E\!\left[Y_{n+\frac12,k}^{U,\mathrm M}\right]
&=
t_n\mu_k^U
+\frac{h_n}{2}\left[
\mu_k^U+A_k(t_n)(t_n\mu_k^U-t_n\mu_k^U)
\right]
=t_{n+\frac12}\mu_k^U,
\\
\mathbb E\!\left[Y_{n+1,k}^{U,\mathrm M}\right]
&=
t_n\mu_k^U
+h_n\left[
\mu_k^U
+A_k(t_{n+\frac12})
(t_{n+\frac12}\mu_k^U-t_{n+\frac12}\mu_k^U)
\right]
=t_{n+1}\mu_k^U.
\end{align*}
Thus induction proves \eqref{eq:midpoint-grid-mean} and
\eqref{eq:predicted-midpoint-mean}, and $t_S=1$ gives the exact terminal mean.
Proposition~\ref{prop:general-midpoint-law} also gives
\[
\mathbb E\!\left[Y_{S,k}^{U,\mathrm M}\right]
=
D_{2,k}^{\mathrm M}\mu_k^U.
\]
The coefficient $D_{2,k}^{\mathrm M}$ is independent of $\mu$, and the
preceding argument holds for every possible value of $\mu_k^U$.  Hence
$D_{2,k}^{\mathrm M}=1$.  Substitution into
\eqref{eq:general-midpoint-output-coordinate} proves
\eqref{eq:midpoint-output-coordinate}, and
\eqref{eq:midpoint-output-law} follows from
\eqref{eq:general-midpoint-output-law}.

Conversely, suppose that \eqref{eq:universal-terminal-mean} holds for every
$S$, prescribed grid, $\beta_t$, and $\mu$.  Fix an eigendirection $k$ and
take
$\mu_k^U=1$. This is allowed because
\eqref{eq:universal-terminal-mean} is assumed for every target mean.  Taking
expectations in the two stages of \eqref{eq:explicit-midpoint} gives
\begin{align}
\mathbb E\!\left[Y_{n+\frac12,k}^{U,\mathrm M}\right]
&=
\mathbb E\!\left[Y_{n,k}^{U,\mathrm M}\right]
+\frac{h_n}{2}
\left[
\dot\alpha_{t_n}
+A_k(t_n)
\left(
\mathbb E\!\left[Y_{n,k}^{U,\mathrm M}\right]-\alpha_{t_n}
\right)
\right],
\label{eq:uniqueness-mean-predictor}
\\
\mathbb E\!\left[Y_{n+1,k}^{U,\mathrm M}\right]
&=
\mathbb E\!\left[Y_{n,k}^{U,\mathrm M}\right]
+h_n
\left[
\dot\alpha_{t_{n+\frac12}}
+A_k(t_{n+\frac12})
\left(
\mathbb E\!\left[Y_{n+\frac12,k}^{U,\mathrm M}\right]
-\alpha_{t_{n+\frac12}}
\right)
\right].
\label{eq:uniqueness-mean-corrector}
\end{align}

We first record how the generality of $\beta_t$ will be used.
For this argument, it suffices to consider a subclass of the permitted
noise schedules. Fix a grid and start from $\beta_t=1-t$.
Around every interior evaluation time $s$, choose a smooth perturbation
with compact support inside $(0,1)$ that vanishes at $s$ but produces
a prescribed sufficiently small derivative change there.
The supports can be chosen disjoint and sufficiently small to contain
no other evaluation times. These perturbations therefore leave the
values of $\beta$ at all evaluation times fixed while changing its
derivatives there independently.
For sufficiently small amplitudes, $\beta$ retains the same endpoints
and remains positive for $t<1$. Together with $\alpha_1=1$, this
ensures $\alpha_t^2+\beta_t^2>0$ throughout $[0,1]$.

Moreover,
\begin{equation}
A_k(s)
=
\frac{
\beta_s\dot\beta_s+\alpha_s\dot\alpha_s/\lambda_k
}{v_k(s)},
\qquad
\delta A_k(s)
=
\frac{\beta_s}{v_k(s)}\,\delta\dot\beta_s.
\label{eq:A-variation-for-uniqueness}
\end{equation}
At each interior evaluation time $s$, the perturbations preserve
$\beta_s=1-s>0$, and hence $v_k(s)\geq\beta_s^2>0$.
Thus the coefficient $\beta_s/v_k(s)$ in the second identity of
\eqref{eq:A-variation-for-uniqueness} is nonzero, so a local change
in $\dot\beta_s$ produces any sufficiently small change in $A_k(s)$.
Since the perturbations have disjoint supports, these changes can
be made independently at all sampled interior evaluation times.

We now argue backward through the midpoint steps.  Suppose, for some $n$, that the following identity remains true when each of
the sampled values
\[
A_k(t_1),\ldots,A_k(t_{S-1}),
\qquad
A_k(t_{\frac12}),\ldots,A_k(t_{S-\frac12})
\]
is varied independently by a sufficiently small amount:
\begin{equation}
\mathbb E\!\left[Y_{n+1,k}^{U,\mathrm M}\right]
=
\alpha_{t_{n+1}}.
\label{eq:backward-exact-endpoint}
\end{equation}
For $n=S-1$, \eqref{eq:backward-exact-endpoint} is exactly the terminal-mean
condition \eqref{eq:universal-terminal-mean}.  In
\eqref{eq:uniqueness-mean-corrector}, neither
$\mathbb E[Y_{n,k}^{U,\mathrm M}]$ nor the predicted midpoint mean depends on
$A_k(t_{n+\frac12})$.  Since the left-hand side must remain equal to
$\alpha_{t_{n+1}}$ as $A_k(t_{n+\frac12})$ varies, its coefficient must vanish:
\begin{equation}
\mathbb E\!\left[Y_{n+\frac12,k}^{U,\mathrm M}\right]
=
\alpha_{t_{n+\frac12}}.
\label{eq:backward-exact-midpoint}
\end{equation}
The part of the corrector that is independent of $A_k(t_{n+\frac12})$ also gives
\begin{equation}
\alpha_{t_{n+1}}
=
\mathbb E\!\left[Y_{n,k}^{U,\mathrm M}\right]
+h_n\dot\alpha_{t_{n+\frac12}}.
\label{eq:backward-corrector-constant}
\end{equation}

For $n\geq1$, the expectation
$\mathbb E[Y_{n,k}^{U,\mathrm M}]$ depends only on earlier vector-field
evaluations and is therefore independent of $A_k(t_n)$.  Substituting
\eqref{eq:backward-exact-midpoint} into
\eqref{eq:uniqueness-mean-predictor} and varying $A_k(t_n)$ forces
\begin{equation}
\mathbb E\!\left[Y_{n,k}^{U,\mathrm M}\right]
=
\alpha_{t_n}.
\label{eq:backward-exact-gridpoint}
\end{equation}
We have therefore shown that exactness at the right endpoint of step $n$,
namely
\[
\mathbb E\!\left[Y_{n+1,k}^{U,\mathrm M}\right]=\alpha_{t_{n+1}},
\]
implies exactness at its left endpoint:
\[
\mathbb E\!\left[Y_{n,k}^{U,\mathrm M}\right]=\alpha_{t_n}.
\]
This left endpoint is also the right endpoint of step $n-1$.  Hence
\eqref{eq:backward-exact-gridpoint} is precisely the hypothesis needed to
repeat the same argument with $n$ replaced by $n-1$.  Continuing in this way
propagates exactness backward from $t_S$ to $t_0$.
For $n=0$, \eqref{eq:backward-exact-gridpoint} follows directly from the
initial condition:
$\mathbb E[Y_{0,k}^{U,\mathrm M}]=0=\alpha_0$.  The parts of
\eqref{eq:uniqueness-mean-predictor} and
\eqref{eq:backward-corrector-constant} that remain after these residuals
vanish now give
\begin{equation}
\label{eq:necessary-alpha-grid-identities}
\alpha_{t_{n+\frac12}}
=
\alpha_{t_n}+\frac{h_n}{2}\dot\alpha_{t_n},
\qquad
\alpha_{t_{n+1}}
=
\alpha_{t_n}+h_n\dot\alpha_{t_{n+\frac12}}.
\end{equation}
Thus exactness at $t_{n+1}$ implies exactness at $t_n$ and at the predicted
midpoint.  Starting from $t_S=1$ and repeating this argument backward proves
\eqref{eq:necessary-alpha-grid-identities} on every interval of every grid.
\\
\noindent Since the assumed mean exactness holds for every prescribed grid,
\eqref{eq:necessary-alpha-grid-identities} holds in particular on the uniform
grids \eqref{eq:uniform-grid}.  Applying its first identity on these grids
finishes the argument.  For every $n=1,\ldots,S-1$, Taylor's theorem gives a
point $\xi_{n,S}\in(t_n,t_{n+\frac12})$ such that
\[
\alpha_{t_{n+\frac12}}
=
\alpha_{t_n}
+\frac h2\dot\alpha_{t_n}
+\frac{h^2}{8}\ddot\alpha_{\xi_{n,S}}.
\]
Comparison with \eqref{eq:necessary-alpha-grid-identities} shows that
$\ddot\alpha_{\xi_{n,S}}=0$.  Given any $t\in(0,1)$, for all sufficiently fine
uniform grids the cell containing $t$ is an interior cell and contains such a
point $\xi_{n,S}$.  Since $|\xi_{n,S}-t|\leq h\to0$, continuity gives
$\ddot\alpha_t=0$.  Hence $\alpha$ is affine, and the endpoint conditions
force $\alpha_t=t$.
\end{proof} 
\noindent From this point onward, we fix $\alpha_t=t$ and design only the noise schedule
$\beta_t$.

\subsection{The exact finite-step objective}
\label{sec:objectives}
\label{app:computation}

Subsection~\ref{sec:midpoint-analysis} produced the exact Gaussian law of the
explicit-midpoint sampler under $\alpha_t=t$.  Schedule design now becomes an
exact comparison between $p_S^{\mathrm M}$ in
\eqref{eq:midpoint-output-law} and the target Gaussian law in
\eqref{eq:gaussian-data}.

\noindent The midpoint output and target distribution have the same mean $\mu$.
Consequently, the squared $2$-Wasserstein distance between them is
\begin{align*}
W_2^2(p_S^{\mathrm M},p_{\mathrm{data}})
&=
\operatorname{tr}
\left(
\Sigma_{\mathrm M}+\Sigma
-2(\Sigma^{1/2}\Sigma_{\mathrm M}\Sigma^{1/2})^{1/2}
\right).
\end{align*}
Moreover, $\Sigma$ and $\Sigma_{\mathrm M}$ share the eigenbasis $U$.
Their directional standard deviations are, respectively,
$\lambda_k^{-1/2}$ and $|D_{1,k}^{\mathrm M}|$.  Therefore the distance
reduces exactly to
\begin{equation}
\boxed{
\label{eq:w2-objective}
J_S
\coloneqq
W_2^2(p_S^{\mathrm M},p_{\mathrm{data}})
=
\sum_{k=1}^d
\left(
\left|D_{1,k}^{\mathrm M}\right|
-\frac{1}{\sqrt{\lambda_k}}
\right)^2.
}
\end{equation}

\section{CondOT Superconvergence}
\label{sec:condot-midpoint-asymptotics}

This appendix provides the general error expansion used in
Section~\ref{m:analysis}, followed by the proofs of
Theorem~\ref{m:condot} and Corollary~\ref{m:condotuniformexpansion}.

Except when a particular schedule is substituted explicitly,
$\beta_t$ denotes any fixed function in $C^1([0,1])$ with
$\beta_0=1$ and $\beta_1=0$ that satisfies the additional regularity
assumptions stated below. Here, fixed means that the schedule is
independent of the grid as it is refined.
For the prescribed grid \eqref{eq:general-time-grid}, define its mesh
size by
\begin{equation}
\label{eq:mesh-size}
h_{\max}
\coloneqq
\max_{0\leq n\leq S-1}h_n.
\end{equation}
All general-grid asymptotic statements take $h_{\max}\to0$. We later specialize to the uniform grid,
for which $h_{\max}=h=S^{-1}$.

\subsection{General-grid midpoint multiplier and logarithmic scale error}
\label{subsec:midpoint-positivity}
Under the standing restriction $\alpha_t=t$, fix an eigendirection $k$.  Then
\begin{equation}
\label{eq:mean-exact-directional-variance}
v_k(t)=\beta_t^2+\frac{t^2}{\lambda_k},
\qquad
A_k(t)=\frac{\dot v_k(t)}{2v_k(t)}.
\end{equation}
The one-step multiplier is
\begin{equation}
\label{eq:general-midpoint-factor}
g_k^{\mathrm M}(n)
=
1+h_nA_k(t_{n+\frac12})
+\frac{h_n^2}{2}
A_k(t_{n+\frac12})A_k(t_n).
\end{equation}
The target standard deviation in direction $k$ is $\lambda_k^{-1/2}$, while
the midpoint sampler output has standard deviation $|D_{1,k}^{\mathrm M}|$.
For the high-resolution analysis below, assume that
$v_k\in C^4([0,1])$ and that there exists $c_k>0$ such that
$v_k(t)\geq c_k$ for every $t\in[0,1]$.
Then $A_k$ and its first three derivatives are bounded.
Since $h_n\leq h_{\max}$, for all sufficiently small $h_{\max}$, every
multiplier $g_k^{\mathrm M}(n)$ is positive, uniformly in $n$, and hence
$D_{1,k}^{\mathrm M}>0$.
The ratio $\sqrt{\lambda_k}D_{1,k}^{\mathrm M}$ is then the numerical scale
divided by the target scale, and its logarithmic relative error is
\begin{equation}
\label{eq:general-grid-log-scale-error}
e_k
\coloneqq
\log\!\left(\sqrt{\lambda_k}D_{1,k}^{\mathrm M}\right).
\end{equation}
It is zero exactly when the terminal scale is correct, and its sign records
underestimation or overestimation.  This is not a new objective.  Since
\[
D_{1,k}^{\mathrm M}-\frac1{\sqrt{\lambda_k}}
=
\frac{e^{e_k}-1}{\sqrt{\lambda_k}},
\]
and $e^x-1=x+\frac{x^2}{2}+O(|x|^3)$, we obtain
\begin{equation}
\label{eq:general-grid-log-error-to-w2}
\left(
D_{1,k}^{\mathrm M}-\frac1{\sqrt{\lambda_k}}
\right)^2
=
\frac{e_k^2+e_k^3+O(e_k^4)}{\lambda_k}
=
\frac{e_k^2}{\lambda_k}
+O\!\left(|e_k|^3\right).
\end{equation}

\noindent Thus the logarithmic error determines the directional squared Wasserstein
scale error in \eqref{eq:w2-objective}.  At coarse resolution the exact
objective continues to use $|D_{1,k}^{\mathrm M}|$, even if multiplier
positivity is not guaranteed.
Since $v_k(0)=1$ and $v_k(1)=\lambda_k^{-1}$,
\begin{equation}
\label{eq:exact-log-multiplier}
\int_0^1A_k(t)\,dt
=
\frac12[\log v_k(t)]_0^1
=
-\frac12\log\lambda_k.
\end{equation}
Using $D_{1,k}^{\mathrm M}=\prod_{n=0}^{S-1}g_k^{\mathrm M}(n)$ gives
\begin{align}
e_k
&=
\sum_{n=0}^{S-1}\log g_k^{\mathrm M}(n)
-\int_0^1A_k(t)\,dt =
\sum_{n=0}^{S-1}
\left[
\log g_k^{\mathrm M}(n)
-\int_{t_n}^{t_{n+1}}A_k(t)\,dt
\right].
\label{eq:general-grid-global-error-local-defects}
\end{align}

\subsection{The leading midpoint defect on a general grid}
\label{subsec:original-local-defect}

\begin{proposition}
\label{prop:local-log-defect}
Suppose $v_k\in C^4([0,1])$ and $v_k(t)\geq c_k>0$.  For each step of a
prescribed grid, as $h_{\max}\to0$,
\begin{align}
\log g_k^{\mathrm M}(n)
-\int_{t_n}^{t_{n+1}}A_k(t)\,dt
&=
-\frac{h_n^3}{24}
\left(A_k''+6A_kA_k'+4A_k^3\right)(t_n)
+O(h_n^4)
\label{eq:local-log-defect-A}
\\
&=
-\frac{h_n^3}{48}
\frac{v_k'''(t_n)}{v_k(t_n)}
+O(h_n^4),
\label{eq:local-log-defect-v}
\end{align}
where, for each fixed schedule and eigendirection $k$, the $O(h_n^4)$ term is
bounded by $C h_n^4$ with the same constant $C$ for every step and every
sufficiently fine prescribed grid.
\end{proposition}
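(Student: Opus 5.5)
The plan is a direct Taylor expansion of both sides about the left endpoint $t_n$, dropping the index $k$ and writing $A=A_k$, $h=h_n$. The regularity hypotheses do most of the work. Since $v\in C^4$ and $v\geq c_k>0$, we have $A=v'/(2v)\in C^3([0,1])$, so $A,A',A'',A'''$ are bounded uniformly on $[0,1]$. Every Taylor remainder below is then controlled by these sup-norms alone, which gives the step- and grid-independent constant $C$ claimed in the statement.

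First I would expand the multiplier. With $a=A(t_n)$, $a'=A'(t_n)$, $a''=A''(t_n)$, Taylor's theorem with Lagrange remainder gives
\[
A(t_{n+\frac12})=a+\tfrac h2a'+\tfrac{h^2}{8}a''+O(h^3).
\]
Substituting into the multiplier formula yields
\[
g=1+x,\qquad x=ha+\tfrac{h^2}{2}(a'+a^2)+\tfrac{h^3}{8}a''+\tfrac{h^3}{4}aa'+O(h^4).
\]
Here the $h^3$ cross term comes from $\tfrac{h^2}{2}\cdot\tfrac h2 a'\cdot a$.

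Next I would take $\log(1+x)=x-x^2/2+x^3/3+O(x^4)$, which is legitimate uniformly because $|x|\leq Ch$. Collecting orders:
\begin{itemize}
\item the $h$ term is $ha$;
\item the $h^2$ term is $\tfrac{h^2}{2}(a'+a^2)-\tfrac{h^2}{2}a^2=\tfrac{h^2}{2}a'$;
\item the $h^3$ term is $\tfrac{h^3}{8}a''+\tfrac{h^3}{4}aa'-\tfrac{h^3}{2}a(a'+a^2)+\tfrac{h^3}{3}a^3$.
\end{itemize}
The integral $\int_{t_n}^{t_n+h}A=ha+\tfrac{h^2}{2}a'+\tfrac{h^3}{6}a''+O(h^4)$ cancels the first two orders exactly. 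The remaining $h^3$ defect is
\[
a''\left(\tfrac18-\tfrac16\right)+aa'\left(\tfrac14-\tfrac12\right)+a^3\left(\tfrac13-\tfrac12\right)=-\tfrac1{24}a''-\tfrac14aa'-\tfrac16a^3,
\]
which equals $-\tfrac1{24}(A''+6AA'+4A^3)(t_n)$. This proves \eqref{eq:local-log-defect-A}.

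For \eqref{eq:local-log-defect-v} I would use the identity $v'=2Av$. Differentiating gives $v''=2(A'+2A^2)v$, and differentiating again gives
\[
v'''=2(A''+4AA')v+2(A'+2A^2)\cdot 2Av=2(A''+6AA'+4A^3)v.
\]
Hence $A''+6AA'+4A^3=v'''/(2v)$, which turns the coefficient $-1/24$ into $-1/48$.

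The main obstacle is the bookkeeping of the third-order coefficients, together with the uniformity claim. Uniformity needs $C^3$ control of $A$ up to the endpoints and the bound $|x|\leq Ch$. Both follow from $v\in C^4([0,1])$ and $v\geq c_k$, so the $O(h^4)$ constant depends only on the schedule and on $k$.
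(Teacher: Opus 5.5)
Your proposal is correct and follows the paper's proof essentially line for line. Both arguments Taylor-expand $A_k(t_{n+\frac12})$ about $t_n$, expand $\log(1+x)$ through third order, subtract the integral's expansion, and convert using $v_k'''=2v_k(A_k''+6A_kA_k'+4A_k^3)$. Your explicit note that the uniform constant comes from $\sup|A_k'''|$ and $|x|\le Ch_n$ spells out the uniformity claim more carefully than the paper does.
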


\begin{proof}
Taylor expanding around $t_n$ gives
\[
A_k(t_{n+\frac12})
=
A_k(t_n)
+\frac{h_n}{2}A_k'(t_n)
+\frac{h_n^2}{8}A_k''(t_n)
+O(h_n^3).
\]
Substituting this into \eqref{eq:general-midpoint-factor} yields
\begin{align*}
g_k^{\mathrm M}(n)
&=
1
+h_n\left(A_k+\frac{h_n}{2}A_k'+\frac{h_n^2}{8}A_k''\right)(t_n)
+\frac{h_n^2}{2}
\left(A_k+\frac{h_n}{2}A_k'\right)(t_n)A_k(t_n)
+O(h_n^4)
\\
&=
1+h_nA_k(t_n)
+\frac{h_n^2}{2}(A_k'+A_k^2)(t_n)
+h_n^3\left(\frac18A_k''+\frac14A_kA_k'\right)(t_n)
+O(h_n^4).
\end{align*}

\noindent Set $x=g_k^{\mathrm M}(n)-1$.  The preceding expansion gives
\[
x^2
=
h_n^2A_k^2(t_n)
+h_n^3(A_kA_k'+A_k^3)(t_n)
+O(h_n^4),
\qquad
x^3=h_n^3A_k^3(t_n)+O(h_n^4).
\]
Therefore, using $\log(1+x)=x-x^2/2+x^3/3+O(x^4)$,
\begin{align}
\log g_k^{\mathrm M}(n)
&=
h_nA_k(t_n)+\frac{h_n^2}{2}A_k'(t_n)
\notag
\\
&\quad
+h_n^3\left(
\frac18A_k''
+\frac14A_kA_k'
-\frac12(A_kA_k'+A_k^3)
+\frac13A_k^3
\right)(t_n)
+O(h_n^4)
\notag
\\
&=
h_nA_k(t_n)+\frac{h_n^2}{2}A_k'(t_n)
+h_n^3\left(
\frac18A_k''-\frac14A_kA_k'-\frac16A_k^3
\right)(t_n)
+O(h_n^4).
\label{eq:log-factor-expansion}
\end{align}
The exact integral satisfies
\begin{equation}
\int_{t_n}^{t_{n+1}}A_k(t)\,dt
=
h_nA_k(t_n)+\frac{h_n^2}{2}A_k'(t_n)
+\frac{h_n^3}{6}A_k''(t_n)+O(h_n^4).
\label{eq:integral-expansion}
\end{equation}
Subtracting \eqref{eq:integral-expansion} from
\eqref{eq:log-factor-expansion} gives
\eqref{eq:local-log-defect-A}.  Finally, differentiating
$v_k'=2A_kv_k$ twice gives
\[
v_k'''
=
2v_k(A_k''+6A_kA_k'+4A_k^3).
\]
This proves \eqref{eq:local-log-defect-v}.
\end{proof}
\noindent Summing Proposition~\ref{prop:local-log-defect} over the prescribed
grid gives
\begin{equation}
\label{eq:general-grid-global-log-error}
e_k
=
-\frac1{48}
\sum_{n=0}^{S-1}
h_n^3\frac{v_k'''(t_n)}{v_k(t_n)}
+O\!\left(\sum_{n=0}^{S-1}h_n^4\right).
\end{equation}
Because $\sum_nh_n=1$ and $h_n\leq h_{\max}$,
\begin{equation}
\label{eq:mesh-power-bounds}
\sum_{n=0}^{S-1}h_n^3\leq h_{\max}^2,
\qquad
\sum_{n=0}^{S-1}h_n^4\leq h_{\max}^3.
\end{equation}
Hence $e_k=O(h_{\max}^2)$.  Combining
\eqref{eq:general-grid-global-log-error} with
\eqref{eq:general-grid-log-error-to-w2} and summing over the eigendirections
gives the sharper discrete expansion
\begin{equation}
\label{eq:general-grid-global-w2-error}
\boxed{
\begin{aligned}
W_2^2(p_S^{\mathrm M},p_{\mathrm{data}})
&=
\frac1{48^2}
\sum_{k=1}^d\frac1{\lambda_k}
\left[
\sum_{n=0}^{S-1}
h_n^3\frac{v_k'''(t_n)}{v_k(t_n)}
\right]^2
+O(h_{\max}^5)
=O(h_{\max}^4).
\end{aligned}
}
\end{equation}

\subsection{CondOT cancellation on a general grid}
\label{subsec:original-condot-general}

For CondOT,
\begin{equation}
\label{eq:condot-path}
\alpha_t=t,
\qquad
\beta_t=1-t,
\end{equation}
and therefore
\begin{equation}
\label{eq:condot-variance}
v_k(t)
=
(1-t)^2+\frac{t^2}{\lambda_k}.
\end{equation}
This variance is a positive quadratic, so $v_k'''\equiv0$.  Hence the complete
leading defect in Proposition~\ref{prop:local-log-defect} vanishes in every
eigendirection and for every $\lambda_k>0$.

\begin{proposition}
\label{prop:general-grid-condot-superconvergence}
For CondOT, the local logarithmic defect on every step of a prescribed grid
satisfies
\begin{equation}
\label{eq:condot-general-grid-local-defect}
\log g_k^{\mathrm M}(n)
-\int_{t_n}^{t_{n+1}}A_k(t)\,dt
=
-\frac{h_n^4}{8}
\left(A_k^2A_k'+A_k^4\right)(t_n)
+O(h_n^5),
\end{equation}
where the remainder constant is independent of $n$ and of the prescribed
grid.  Consequently,
\begin{equation}
\label{eq:condot-general-grid-log-error}
e_k
=
-\frac18\sum_{n=0}^{S-1}h_n^4
\left(A_k^2A_k'+A_k^4\right)(t_n)
+O\!\left(\sum_{n=0}^{S-1}h_n^5\right)
=O(h_{\max}^3),
\end{equation}
and
\begin{equation}
\label{eq:condot-general-grid-scale-error}
D_{1,k}^{\mathrm M}-\frac1{\sqrt{\lambda_k}}
=
-\frac1{8\sqrt{\lambda_k}}
\sum_{n=0}^{S-1}h_n^4
\left(A_k^2A_k'+A_k^4\right)(t_n)
+O(h_{\max}^4).
\end{equation}
The global squared Wasserstein error therefore obeys
\begin{equation}
\label{eq:condot-general-grid-w2}
\boxed{
\begin{aligned}
W_2^2(p_S^{\mathrm M},p_{\mathrm{data}})
&=
\frac1{64}\sum_{k=1}^d\frac1{\lambda_k}
\left[
\sum_{n=0}^{S-1}h_n^4
\left(A_k^2A_k'+A_k^4\right)(t_n)
\right]^2
+O(h_{\max}^7)=O(h_{\max}^6).
\end{aligned}
}
\end{equation}
Thus CondOT improves the logarithmic error by one power of $h_{\max}$ and the
squared Wasserstein error by two powers on every refining prescribed grid.
\end{proposition}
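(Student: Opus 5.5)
The argument extends the Taylor expansion of Proposition~\ref{prop:local-log-defect} by one more order, specialized to CondOT. There $v_k$ is quadratic, so $v_k'''\equiv0$, and the identity $v_k'''=2v_k(A_k''+6A_kA_k'+4A_k^3)$ gives
\[
A_k''=-6A_kA_k'-4A_k^3 .
\]
Differentiating this relation expresses $A_k'''$ polynomially in $A_k$ and $A_k'$ as well. Since $v_k\ge c_k>0$ and $v_k$ is a polynomial, $A_k$ is smooth on $[0,1]$ with all derivatives bounded. This makes every Taylor remainder below uniform in $n$ and in the grid.

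First, expand $g_k^{\mathrm M}(n)$ to order $h_n^4$. Substitute
\[
A_k(t_{n+\frac12})=A_k+\tfrac{h}{2}A_k'+\tfrac{h^2}{8}A_k''+\tfrac{h^3}{48}A_k'''+O(h^4)
\]
(evaluated at $t_n$) into \eqref{eq:general-midpoint-factor}. The $h^4$ coefficient of $g$ is then $\frac1{48}A_k'''+\frac1{16}A_kA_k''$.

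Next, take the logarithm through fourth order using $\log(1+x)=x-\frac{x^2}{2}+\frac{x^3}{3}-\frac{x^4}{4}+O(x^5)$, keeping the $h^4$ contributions of $x^2$, $x^3$ and $x^4$. Subtract the exact integral
\[
\int_{t_n}^{t_{n+1}}A_k=hA_k+\tfrac{h^2}{2}A_k'+\tfrac{h^3}{6}A_k''+\tfrac{h^4}{24}A_k'''+O(h^5).
\]
The $h^3$ term vanishes by Proposition~\ref{prop:local-log-defect}. In the $h^4$ coefficient, eliminate $A_k''$ and $A_k'''$ with the CondOT relation. The coefficient should then collapse to $-\frac18(A_k^2A_k'+A_k^4)$, which is \eqref{eq:condot-general-grid-local-defect}. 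This bookkeeping is the main obstacle, since the target constant only emerges after the substitutions. As a check I would verify it against the scalar case $A_k\equiv a$ constant: then $g=1+ha+h^2a^2/2$, and $\log g-ha=-h^4a^4/8+O(h^5)$, which fixes the $A_k^4$ coefficient. A second check is an $A_k$ with $A_k'\neq0$ that satisfies the ODE.

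Summing over steps with \eqref{eq:general-grid-global-error-local-defects} gives the expansion of $e_k$. The bounds $\sum_n h_n^4\le h_{\max}^3$ and $\sum_n h_n^5\le h_{\max}^4$ (as in \eqref{eq:mesh-power-bounds}) give $e_k=O(h_{\max}^3)$. Since $D_{1,k}^{\mathrm M}-\lambda_k^{-1/2}=(e^{e_k}-1)/\sqrt{\lambda_k}=(e_k+O(e_k^2))/\sqrt{\lambda_k}$ and $e_k^2=O(h_{\max}^6)$, we obtain \eqref{eq:condot-general-grid-scale-error}. Positivity of $D_{1,k}^{\mathrm M}$ for fine grids (Section~\ref{subsec:midpoint-positivity}) lets us drop the absolute values in \eqref{eq:w2-objective}. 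Then \eqref{eq:general-grid-log-error-to-w2} gives squared directional errors
\[
\frac{e_k^2}{\lambda_k}+O(|e_k|^3).
\]
Writing $e_k=L_k+R_k$, with $L_k=O(h_{\max}^3)$ the leading sum and $R_k=O(h_{\max}^4)$, gives $e_k^2=L_k^2+O(h_{\max}^7)$. Summing over $k$ yields \eqref{eq:condot-general-grid-w2} and $J_S=O(h_{\max}^6)$.
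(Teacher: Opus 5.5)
Your proposal is correct and follows the paper's proof essentially step for step. The shared steps are the third-order expansion of $A_k(t_{n+\frac12})$, the same $h_n^4$ coefficient $\frac1{48}A_k'''+\frac1{16}A_kA_k''$ of $g_k^{\mathrm M}(n)$, and the fourth-order logarithm minus the integral. Both then eliminate $A_k''$ and $A_k'''$ via $A_k''+6A_kA_k'+4A_k^3=0$ and its derivative, and finish with the mesh-power bounds and $D_{1,k}^{\mathrm M}-\lambda_k^{-1/2}=(e^{e_k}-1)/\sqrt{\lambda_k}$.

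Drop the constant-$A_k$ sanity check, because it is wrong. For $A_k\equiv a$ one actually gets $\log g-ha=-\frac16h^3a^3+\frac18h^4a^4+O(h^5)$, not $-\frac18h^4a^4$. Moreover, a constant $a\neq0$ violates the CondOT relation, so it can only test the unreduced coefficient $-\frac1{48}A_k'''-\frac1{16}A_kA_k''-\frac18(A_k')^2+\frac18A_k^4$. It cannot test the reduced form $-\frac18(A_k^2A_k'+A_k^4)$, whose $A_k^4$ term changes sign when $A_k''$ is substituted. Your second check, an $A_k$ that actually satisfies the relation, is the valid one.
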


\begin{proof}
Because the CondOT variance in \eqref{eq:condot-variance} is positive and
quadratic, $A_k=v_k'/(2v_k)$ is smooth on $[0,1]$.  Fix a step, abbreviate
$A=A_k(t_n)$ and likewise for its derivatives, and expand
\[
A_k(t_{n+\frac12})
=
A+\frac{h_n}{2}A'
+\frac{h_n^2}{8}A''
+\frac{h_n^3}{48}A'''
+O(h_n^4).
\]
Substitution into \eqref{eq:general-midpoint-factor} gives
\begin{align*}
g_k^{\mathrm M}(n)
&=
1+h_nA
+\frac{h_n^2}{2}(A'+A^2)
+h_n^3\left(\frac18A''+\frac14AA'\right)
\\
&\quad
+h_n^4\left(\frac1{48}A'''+\frac1{16}AA''\right)
+O(h_n^5).
\end{align*}
Using
$\log(1+x)=x-x^2/2+x^3/3-x^4/4+O(x^5)$ and retaining terms through order
$h_n^4$ yields
\begin{align*}
\log g_k^{\mathrm M}(n)
&=
h_nA+\frac{h_n^2}{2}A'
+h_n^3\left(
\frac18A''-\frac14AA'-\frac16A^3
\right)
\\
&\quad
+h_n^4\left(
\frac1{48}A'''
-\frac1{16}AA''
-\frac18(A')^2
+\frac18A^4
\right)
+O(h_n^5),
\end{align*}
whereas
\[
\int_{t_n}^{t_{n+1}}A_k(t)\,dt
=
h_nA+\frac{h_n^2}{2}A'
+\frac{h_n^3}{6}A''
+\frac{h_n^4}{24}A'''
+O(h_n^5).
\]
Since $v_k'''=0$ and
\[
v_k'''
=
2v_k\left(A_k''+6A_kA_k'+4A_k^3\right),
\]
we have
\[
A_k''+6A_kA_k'+4A_k^3=0.
\]
Differentiating this identity gives
\[
A_k'''
+6(A_k')^2
+6A_kA_k''
+12A_k^2A_k'
=0.
\]
Subtracting the integral expansion from the logarithmic expansion and using
these two identities reduces the fourth-order coefficient to
$-\tfrac18(A_k^2A_k'+A_k^4)$, proving
\eqref{eq:condot-general-grid-local-defect}.

\noindent Summing the local defects gives the first equality in
\eqref{eq:condot-general-grid-log-error}.  The bounds
\[
\sum_{n=0}^{S-1}h_n^4\leq h_{\max}^3,
\qquad
\sum_{n=0}^{S-1}h_n^5\leq h_{\max}^4
\]
give its final estimate.  Finally,
\[
D_{1,k}^{\mathrm M}-\frac1{\sqrt{\lambda_k}}
=
\frac{\exp(e_k)-1}{\sqrt{\lambda_k}}
=
\frac{e_k}{\sqrt{\lambda_k}}+O(e_k^2)
\]
together with \eqref{eq:condot-general-grid-log-error} proves
\eqref{eq:condot-general-grid-scale-error}.  Its leading sum is
$O(h_{\max}^3)$ and its remainder is $O(h_{\max}^4)$, so squaring produces an
$O(h_{\max}^7)$ cross term.  Summing over the eigendirections gives
\eqref{eq:condot-general-grid-w2}.  In addition, direct differentiation of
\eqref{eq:condot-variance} gives
\[
A_k'+A_k^2=\frac1{\lambda_kv_k^2},
\qquad
A_k^2A_k'+A_k^4
=\frac{A_k^2}{\lambda_kv_k^2}\geq0,
\]
so the leading logarithmic bias is nonpositive on every prescribed grid.
\end{proof}

\subsection{Uniform-grid generic asymptotics}

We now specialize to the uniform grid \eqref{eq:uniform-grid}, for which
$h_n=h=S^{-1}$ and $h_{\max}=h$.  The one-step multiplier becomes
\begin{equation}
\label{eq:uniform-midpoint-factor}
g_k^{\mathrm M}(n)
=
1+hA_k(t_{n+\frac12})
+\frac{h^2}{2}
A_k(t_{n+\frac12})A_k(t_n).
\end{equation}
We write the logarithmic error on this grid as
\begin{equation}
\label{eq:log-scale-error}
e_k(h)
\coloneqq
e_k
=
\log\!\left(\sqrt{\lambda_k}D_{1,k}^{\mathrm M}\right).
\end{equation}
Equation~\eqref{eq:general-grid-log-error-to-w2} then becomes
\begin{equation}
\label{eq:log-error-to-w2}
\left(
D_{1,k}^{\mathrm M}-\frac1{\sqrt{\lambda_k}}
\right)^2
=
\frac{e_k(h)^2+e_k(h)^3+O(e_k(h)^4)}{\lambda_k}
=
\frac{e_k(h)^2}{\lambda_k}
+O\!\left(|e_k(h)|^3\right).
\end{equation}
Moreover, \eqref{eq:general-grid-global-error-local-defects} specializes to
\begin{align}
e_k(h)
&=
\sum_{n=0}^{S-1}\log g_k^{\mathrm M}(n)
-\int_0^1A_k(t)\,dt
\notag
\\
&=
\sum_{n=0}^{S-1}
\left[
\log g_k^{\mathrm M}(n)
-\int_{t_n}^{t_n+h}A_k(t)\,dt
\right].
\label{eq:global-error-local-defects}
\end{align}
Summing Proposition~\ref{prop:local-log-defect} over the $S=h^{-1}$ steps
gives
\begin{equation}
e_k(h)
=
-\frac{h^2}{48}
\left[
h\sum_{n=0}^{S-1}
\frac{v_k'''(t_n)}{v_k(t_n)}
\right]
+O(h^3),
\label{eq:uniform-global-error-sum}
\end{equation}
because the local remainders sum to $S\,O(h^4)=O(h^3)$.  Set
$f_k=v_k'''/v_k$.  Under the stated assumptions, $f_k\in C^1([0,1])$, and
the mean value theorem gives
\[
\left|
h\sum_{n=0}^{S-1}f_k(t_n)-\int_0^1f_k(t)\,dt
\right|
\leq
\frac12\|f_k'\|_\infty Sh^2
=
O(h).
\]
Thus the bracketed term equals the integral plus $O(h)$.  After
multiplication by $h^2$, this contributes another $O(h^3)$ term, and hence
\begin{equation}
\label{eq:generic-global-log-error}
e_k(h)
=
-\frac{h^2}{48}
\int_0^1\frac{v_k'''(t)}{v_k(t)}\,dt
+O(h^3).
\end{equation}
Combining this with \eqref{eq:log-error-to-w2} and summing over the
eigendirections gives
\begin{equation}
\label{eq:generic-global-w2-error}
\boxed{
\begin{aligned}
W_2^2(p_S^{\mathrm M},p_{\mathrm{data}})=
\frac{h^4}{48^2}
\sum_{k=1}^d
\frac{1}{\lambda_k}
\left(
\int_0^1\frac{v_k'''(t)}{v_k(t)}\,dt
\right)^2
+O(h^5).
\end{aligned}
}
\end{equation}
The expansions \eqref{eq:generic-global-log-error} and
\eqref{eq:generic-global-w2-error} apply to any fixed noise schedule for which
$v_k\in C^4([0,1])$ and $v_k$ is bounded away from zero.

\paragraph{The four fixed comparison schedules.}
\label{app:squared-noise-baseline}
The four comparison schedules in Section~\ref{m:experiments} are smooth,
nonincreasing, and positive for $t<1$, with the required endpoints.
For each fixed $\lambda_k>0$, the continuous variance
$v_k(t)=\beta_t^2+t^2/\lambda_k$ is positive on $[0,1]$ and therefore
bounded away from zero. Their third variance derivatives are
\[
\begin{array}{c|c}
 \beta_t & v_k'''(t)\\ \hline
 (1-t)^2 & -24(1-t)\\
 1-t^2 & 24t\\
 \cos(\pi t/2) & (\pi^3/2)\sin(\pi t)\\
 (1-t)(1+0.1t) & 1.08+0.24t
\end{array}
\]
For each schedule, $v_k'''$ has a strict, constant sign on $(0,1)$.
Thus $\int_0^1v_k'''(t)/v_k(t)\,dt\ne0$ for every positive precision.
Equation~\eqref{eq:generic-global-w2-error} then has a strictly positive
leading coefficient, proving $J_S=\Theta(S^{-4})$ for every fixed
positive precision spectrum. 

\begin{samepage}
\subsection{CondOT midpoint superconvergence on the uniform grid}
\label{subsec:original-condot-uniform}

We now specialize the CondOT cancellation to the uniform grid in order to
recover a grid-independent leading coefficient.

\begin{corollary}
\label{cor:condot-superconvergence}
For CondOT on the uniform grid \eqref{eq:uniform-grid},
\begin{equation}
\label{eq:condot-log-error}
e_k(h)
=
d_kh^3+O(h^4),
\end{equation}
where
\begin{align}
\label{eq:condot-bias}
d_k
&=
-\frac18
\left(
\frac23+\int_0^1A_k(t)^4\,dt
\right) =
-\frac1{24}
-\frac{\pi(1+\lambda_k)^3}{256\lambda_k^{3/2}}
<0.
\end{align}
Consequently,
\begin{align}
D_{1,k}^{\mathrm M}-\frac1{\sqrt{\lambda_k}}
&=
\frac{d_k}{\sqrt{\lambda_k}}h^3+O(h^4),
\label{eq:condot-scale-error}
\\
\left(
D_{1,k}^{\mathrm M}-\frac1{\sqrt{\lambda_k}}
\right)^2
&=
\frac{d_k^2}{\lambda_k}h^6+O(h^7).
\label{eq:condot-squared-scale-error}
\end{align}
Summing over the eigendirections gives
\begin{equation}
\label{eq:condot-global-w2}
\boxed{
W_2^2(p_S^{\mathrm M},p_{\mathrm{data}})
=
h^6\sum_{k=1}^d\frac{d_k^2}{\lambda_k}
+O(h^7).
}
\end{equation}
\end{corollary}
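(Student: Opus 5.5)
The plan is to specialize Proposition~\ref{prop:general-grid-condot-superconvergence} to $h_n=h=S^{-1}$, replace the resulting Riemann sum by an integral, and then evaluate that integral in closed form. On the uniform grid, \eqref{eq:condot-general-grid-log-error} reads
\[
e_k(h)=-\frac{h^3}{8}\Bigl[h\sum_{n=0}^{S-1}f_k(t_n)\Bigr]+O(h^4),
\qquad f_k\coloneqq A_k^2A_k'+A_k^4 .
\]
The remainder is $O(\sum_n h^5)=O(h^4)$.

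For CondOT, $v_k$ is a positive quadratic, so $A_k$ is smooth on $[0,1]$ and $f_k\in C^1([0,1])$. The same mean-value estimate used for \eqref{eq:generic-global-log-error} gives
\[
h\sum_n f_k(t_n)=\int_0^1 f_k\,dt+O(h).
\]
Hence $e_k(h)=d_kh^3+O(h^4)$ with $d_k=-\tfrac18\int_0^1(A_k^2A_k'+A_k^4)\,dt$.

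The first piece of this integral is exact: $\int_0^1A_k^2A_k'\,dt=\tfrac13[A_k^3]_0^1$. From $v_k(0)=1$, $v_k'(0)=-2$, $v_k(1)=1/\lambda_k$ and $v_k'(1)=2/\lambda_k$ we get $A_k(0)=-1$ and $A_k(1)=1$. This piece is therefore $\tfrac23$, which gives the first form of $d_k$ in \eqref{eq:condot-bias}.

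For the second piece, I would complete the square. With $a=1+1/\lambda_k$ and $c=1/(1+\lambda_k)$, substitute $u=t-1/a$, so that
\[
v_k=au^2+c,\qquad A_k=\frac{au}{au^2+c},\qquad \int_0^1A_k^4\,dt=\int_{-\lambda_k/(1+\lambda_k)}^{1/(1+\lambda_k)}\frac{a^4u^4}{(au^2+c)^4}\,du .
\]
I would evaluate this with the standard reduction formulas for $\int u^4(au^2+c)^{-4}du$, which produce a rational part plus a multiple of $\arctan(\sqrt{a/c}\,u)$. Since $\sqrt{a/c}=(1+\lambda_k)/\sqrt{\lambda_k}$, the arctangent arguments at the two limits are $1/\sqrt{\lambda_k}$ and $-\sqrt{\lambda_k}$. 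Their difference is $\arctan(1/\sqrt{\lambda_k})+\arctan\sqrt{\lambda_k}=\pi/2$, and this is the source of $\pi$ in \eqref{eq:condot-bias}. The transcendental coefficient should reduce to $\pi(1+\lambda_k)^3/(32\lambda_k^{3/2})$.

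The main obstacle is the bookkeeping of the rational boundary terms. The target formula requires them to combine to exactly $-\tfrac13$, so that $-\tfrac18(\tfrac23-\tfrac13)=-\tfrac1{24}$. I would check this by evaluating the boundary terms at $u=1/(1+\lambda_k)$ and $u=-\lambda_k/(1+\lambda_k)$. At both limits $au^2+c$ simplifies, to $1/\lambda_k$ and $1$ respectively, so the rational terms should collapse. Negativity of $d_k$ is then immediate, since $\int_0^1 A_k^4\,dt\ge0$ and $\tfrac23>0$.

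For the consequences, I would use
\[
D_{1,k}^{\mathrm M}-\frac1{\sqrt{\lambda_k}}=\frac{e^{e_k}-1}{\sqrt{\lambda_k}}=\frac{e_k+O(e_k^2)}{\sqrt{\lambda_k}} .
\]
With $e_k=d_kh^3+O(h^4)$ this gives \eqref{eq:condot-scale-error}. Squaring, the leading term is $d_k^2h^6/\lambda_k$ and the cross term is $O(h^7)$, which is \eqref{eq:condot-squared-scale-error}. Summing over $k$ in \eqref{eq:w2-objective} gives \eqref{eq:condot-global-w2}; positivity of $D_{1,k}^{\mathrm M}$ for small $h$ lets us drop the absolute values.
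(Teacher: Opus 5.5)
Your proposal is correct and follows the paper's proof. Both arguments specialize the general-grid CondOT expansion to $h_n=h$, replace the left Riemann sum by $\int_0^1(A_k^2A_k'+A_k^4)\,dt$, integrate $A_k^2A_k'$ exactly using $A_k(0)=-1$ and $A_k(1)=1$, evaluate $\int_0^1A_k^4\,dt$ by completing the square (the paper's variable is your $u$ rescaled by $(1+\lambda_k)/\sqrt{\lambda_k}$), and then exponentiate.

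The bookkeeping you flag does close. With $y=\tan\varphi$, the integrand becomes $\sin^4\varphi\cos^2\varphi$ on an interval of length $\pi/2$ starting at $\varphi_0=-\arctan\sqrt{\lambda_k}$. That integral equals $\frac{\pi}{32}+\frac{1}{24}\sin^3(2\varphi_0)=\frac{\pi}{32}-\frac{\lambda_k^{3/2}}{3(1+\lambda_k)^3}$, and multiplying by the prefactor $(1+\lambda_k)^3/\lambda_k^{3/2}$ gives exactly the $-\frac13$.
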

\end{samepage}

\begin{proof}
On the uniform grid, \eqref{eq:condot-general-grid-log-error} becomes
\begin{align*}
e_k(h)
&=
-\frac{h^3}{8}
\left[
h\sum_{n=0}^{S-1}
\left(A_k^2A_k'+A_k^4\right)(t_n)
\right]
+O(h^4)
\\
&=
-\frac{h^3}{8}
\int_0^1
\left(A_k^2A_k'+A_k^4\right)(t)\,dt
+O(h^4),
\end{align*}
where the bracketed expression is a left Riemann sum of a smooth function.

\noindent Under \eqref{eq:condot-variance},
\[
A_k(t)
=
\frac{t/\lambda_k-(1-t)}
{(1-t)^2+t^2/\lambda_k}.
\]
Thus $A_k(0)=-1$ and $A_k(1)=1$, so
\[
\int_0^1A_k(t)^2A_k'(t)\,dt
=
\left[\frac{A_k(t)^3}{3}\right]_0^1
=
\frac23.
\]
For the fourth-power term, set
\[
y
=
\frac{(1+\lambda_k)t-\lambda_k}{\sqrt{\lambda_k}}.
\]
Then
\[
A_k(t)
=
\frac{(1+\lambda_k)y}
{\sqrt{\lambda_k}(1+y^2)},
\qquad
dt
=
\frac{\sqrt{\lambda_k}}{1+\lambda_k}\,dy,
\]
and the endpoints $t=0,1$ correspond to
$y=-\sqrt{\lambda_k}$ and $y=1/\sqrt{\lambda_k}$.  Hence
\begin{align*}
\int_0^1A_k(t)^4\,dt
&=
\frac{(1+\lambda_k)^3}{\lambda_k^{3/2}}
\int_{-\sqrt{\lambda_k}}^{1/\sqrt{\lambda_k}}
\frac{y^4}{(1+y^2)^4}\,dy
\\
&=
\frac{\pi(1+\lambda_k)^3}{32\lambda_k^{3/2}}
-\frac13,
\end{align*}
where the last integral is evaluated using $y=\tan\varphi$.

\noindent Substituting the two integrals into the global expansion gives
\[
e_k(h)
=
-\frac18
\left(
\frac23+\int_0^1A_k(t)^4\,dt
\right)h^3
+O(h^4)
=
d_kh^3+O(h^4),
\]
with
\[
d_k
=
-\frac1{24}
-\frac{\pi(1+\lambda_k)^3}{256\lambda_k^{3/2}}
<0.
\]
This proves \eqref{eq:condot-log-error} and
\eqref{eq:condot-bias}.

\noindent Finally, by the definition of $e_k(h)$,
\[
D_{1,k}^{\mathrm M}
=
\frac{\exp(e_k(h))}{\sqrt{\lambda_k}}.
\]
Since $e_k(h)=d_kh^3+O(h^4)$,
\[
D_{1,k}^{\mathrm M}
-\frac1{\sqrt{\lambda_k}}
=
\frac{\exp(e_k(h))-1}{\sqrt{\lambda_k}}
=
\frac{d_k}{\sqrt{\lambda_k}}h^3+O(h^4).
\]
Squaring this expression and summing the exact directional objective
\eqref{eq:w2-objective} proves
\eqref{eq:condot-scale-error}--\eqref{eq:condot-global-w2}.
\end{proof}

\section{Exact finite-budget calibration near CondOT}
\label{app:exact-calibration}

This appendix proves Theorem~\ref{m:exact-calibration}. We keep the
Gaussian target fixed, use its exact vector field, and take
$\alpha_t=t$ on a uniform grid with step size $1/S$.
We first introduce the correction family and the matrix describing
how its coefficients affect the sampling errors, and prove that
this matrix is invertible. We then use two error estimates to
construct the leading correction and prove exact calibration.
The derivation of these estimates, including the formula for
the matrix, is given at the end of
Appendix~\ref{app:calibration-uniform-expansion}.

\subsection{The correction family and its linear system}
\label{app:calibration-family}

As in Section~\ref{m:calibration}, let
$\lambda_1,\ldots,\lambda_r$ be the distinct positive precision
eigenvalues. Repeated eigenvalues give identical scalar dynamics, so
one equation for each distinct value suffices. The proposed family is
\begin{equation}
 \beta_{\boldsymbol\theta}(t)
 =(1-t)\left(1+\sum_{j=1}^r\theta_jt^j\right).
 \label{eq:calibration-family}
\end{equation}
It is one scalar noise schedule shared by all eigendirections.
At $\boldsymbol\theta=0$ it is CondOT, and for every choice of
coefficients it has the required endpoints
$\beta_{\boldsymbol\theta}(0)=1$ and
$\beta_{\boldsymbol\theta}(1)=0$.
Being polynomial, it is smooth for every coefficient vector.
Since $\alpha_t=t$, the nondegeneracy condition also holds for
every coefficient vector: at $t=0$,
$\beta_{\boldsymbol\theta}(0)=1$, and for $t>0$,
$t^2+\beta_{\boldsymbol\theta}(t)^2>0$.

\paragraph{The matrix in the leading correction equations.}
We use one matrix to write the linear system compactly:
\begin{equation}
 L_{kj}=-\frac1{48}\int_0^1
 \frac{\big[2t^j(1-t)^2\big]'''}
 {(1-t)^2+t^2/\lambda_k}\,dt,
 \qquad 1\leq k,j\leq r.
 \label{eq:calibration-response}
\end{equation}
Primes here denote derivatives with respect to $t$.
The coefficients already computed for CondOT are
\begin{equation}
 d_k=-\frac1{24}
 -\frac{\pi(1+\lambda_k)^3}{256\lambda_k^{3/2}}<0.
 \label{eq:calibration-condot-coefficient}
\end{equation}
The equations in~\eqref{m:calibration-linear-system} are therefore
$L\boldsymbol K=-(d_1,\ldots,d_r)^\top$.
The factor $-1/48$ in $L$ accounts for the sign and factor $48$ in the
main-text equations. We next show that this system always has a unique
solution for a fixed set of distinct positive precisions.

\begin{lemma}
\label{lem:calibration-response-invertible}
For every finite set of distinct positive precisions,
the matrix $L$ in~\eqref{eq:calibration-response} is invertible.
\end{lemma}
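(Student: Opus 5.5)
The plan is to prove injectivity of $L$ by reinterpreting the homogeneous system as a statement about a polynomial and a totally positive kernel. Suppose $\boldsymbol c\in\mathbb R^r$ satisfies $\sum_j L_{kj}c_j=0$ for $k=1,\ldots,r$. Set
\[
P(t)=\sum_{j=1}^r 2c_jt^j(1-t)^2,\qquad Q=P''',\qquad \mu_k=\frac1{\lambda_k}.
\]
Then $Q$ is a polynomial of degree at most $r-1$, and the hypothesis reads
\[
F(\mu_k)=0 \quad\text{for } r \text{ distinct positive values } \mu_k,
\qquad\text{where}\quad
F(\mu)\coloneqq\int_0^1\frac{Q(t)}{(1-t)^2+\mu t^2}\,dt .
\]
The argument then splits into two steps. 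First, show that these $r$ zeros force $Q\equiv0$. Second, show that $Q\equiv0$ forces $\boldsymbol c=0$.

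The second step is elementary. If $P'''\equiv0$, then $P$ has degree at most $2$. However, $P=2(1-t)^2\sum_j c_jt^j$ is divisible by $t(1-t)^2$, so if it is nonzero its degree is at least $3$. Hence $P\equiv0$, so $\sum_jc_jt^j\equiv0$, and the distinct powers $t^1,\ldots,t^r$ give $\boldsymbol c=0$.

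For the first step, I will change variables to expose a Cauchy kernel. On $(0,1)$, put
\[
x=\frac{t^2}{(1-t)^2}\in(0,\infty).
\]
This map is strictly increasing, and
\[
\frac1{(1-t)^2+\mu t^2}=\frac{1}{(1-t)^2}\cdot\frac1{1+\mu x}.
\]
Therefore
\[
F(\mu)=\int_0^\infty \frac{\phi(x)}{1+\mu x}\,dx,
\qquad
\phi(x)=\frac{Q(t(x))}{(1-t(x))^2}\,\frac{dt}{dx}.
\]
The factor $(1-t)^{-2}\,dt/dx$ is positive, so $\phi$ has exactly the same sign pattern as $Q$ along $(0,1)$. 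In particular, $\phi$ has at most $r-1$ sign changes. Integrability is harmless because the original integrand is continuous on $[0,1]$. The kernel $K(\mu,x)=1/(1+\mu x)=\mu^{-1}/(x+\mu^{-1})$ is, up to positive row factors, a Cauchy kernel $1/(x_i+y_j)$. Its minors are positive multiples of Cauchy determinants, which are strictly positive for ordered distinct points. So $K$ is strictly totally positive on $(0,\infty)^2$.

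The key step is the variation-diminishing property of strictly totally positive kernels (Karlin). If $\phi\not\equiv0$ has at most $r-1$ sign changes, then $F(\mu)=\int K(\mu,x)\phi(x)\,dx$ has at most $r-1$ zeros on $(0,\infty)$, counting zeros rather than merely sign changes; this stronger form holds for strict total positivity. That contradicts the $r$ distinct zeros $\mu_k$, so $\phi\equiv0$, hence $Q\equiv0$, and the second step finishes the proof.

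I expect the main obstacle to be invoking the zero-counting form of the variation-diminishing theorem cleanly. If I want to avoid citing it, I would argue directly by contradiction. Let $\xi_1<\cdots<\xi_m$, with $m\le r-1$, be the sign changes of $\phi$. Construct coefficients $a_1,\ldots,a_r$, not all zero, such that
\[
G(x)=\sum_k a_k\,\frac1{1+\mu_kx}
\]
vanishes exactly at the $\xi_i$ and changes sign there. Cauchy-kernel determinants guarantee such a combination exists, and its numerator is a polynomial of degree at most $r-1$ whose roots can be prescribed. Then $G\phi$ has constant sign and is not identically zero, whereas $\int G\phi\,dx=\sum_k a_kF(\mu_k)=0$, a contradiction.
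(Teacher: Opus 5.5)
Your proof is correct, but its main line runs in the opposite direction from the paper's. The paper takes a \emph{left} null vector $a$ and forms $W(t)=\sum_k a_k/((1-t)^2+t^2/\lambda_k)$. Because the polynomials $[2t^j(1-t)^2]'''$ span all polynomials of degree at most $r-1$, $W$ is orthogonal to that whole space. The paper bounds the sign changes of $W$ by writing it as a positive factor times a nonzero polynomial of degree at most $r-1$ in $((1-t)/t)^2$ (nonzero by evaluating at $-1/\lambda_k$), and then integrates $W$ against a polynomial with matching sign changes. You take a \emph{right} null vector instead. The function whose sign changes you count is then the polynomial $Q=P'''$, where the bound is trivial, and the work moves to the kernel side. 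You handle it either with Karlin's zero-counting variation-diminishing theorem or, in your fallback, by building $\sum_k a_k/(1+\mu_k x)$ with prescribed sign changes. The fallback is precisely the transpose of the paper's proof. Its existence claim should be justified not by ``Cauchy determinants'' but by the fact that $a\mapsto\sum_k a_k\prod_{\ell\ne k}(1+\mu_\ell x)$ is a bijection onto polynomials of degree at most $r-1$: it is injective by evaluating at $x=-1/\mu_k$, and bijectivity follows by counting dimensions. This is the same partial-fraction identity the paper uses, applied in the surjective direction. Correspondingly, your step $Q\equiv0\Rightarrow c=0$ is the injectivity counterpart of the paper's spanning observation. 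The Karlin route buys a conceptual explanation and more generality: any strictly sign-regular kernel, and any $r$-dimensional test space whose nonzero members have at most $r-1$ sign changes, would do. The cost is a cited theorem, whereas the paper's proof and your fallback are self-contained. One small correction: with $\mu$ and $x$ both increasing, $1/(1+\mu x)$ has negative $2\times2$ minors, so it is strictly sign-regular rather than strictly totally positive in $(\mu,x)$. Your factorization $\mu^{-1}/(x+\mu^{-1})$ gives strict total positivity in $(\mu^{-1},x)$. Zero counting ignores the orientation of the $\mu$-axis, and the variation-diminishing theorem also holds for strictly sign-regular kernels, so the conclusion is unaffected.
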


\begin{proof}
We argue by contradiction. If the square matrix $L$ were singular,
there would be a nonzero vector
$\boldsymbol a=(a_1,\ldots,a_r)^\top$ such that
$\boldsymbol a^\top L=0$. In other words,
$\sum_{k=1}^r a_kL_{kj}=0$ for every column $j$.
Define the continuous function
\[
 W(t)=\sum_{k=1}^r
 \frac{a_k}{(1-t)^2+t^2/\lambda_k}.
\]
Each denominator is positive on $[0,1]$ because $\lambda_k>0$ and
$t$ and $1-t$ cannot both be zero.

\noindent\textbf{Step 1: The null-vector equations imply orthogonality.}
Substituting~\eqref{eq:calibration-response} and moving the finite sum
inside the integral gives, for $j=1,\ldots,r$,
\[
 0=-48\sum_{k=1}^r a_kL_{kj}
   =\int_0^1W(t)\big[2t^j(1-t)^2\big]'''\,dt.
\]
The polynomial $[2t^j(1-t)^2]'''$ has degree $j-1$ and leading
coefficient $2j(j+1)(j+2)$, which is nonzero.
Indeed, the highest power before differentiating is $2t^{j+2}$.
Consequently, these $r$ polynomials have respective degrees
$0,1,\ldots,r-1$ and are linearly independent: in a vanishing linear
combination, the coefficient of $t^{r-1}$ first forces the coefficient
of the last polynomial to be zero; the coefficient of $t^{r-2}$ then
forces the preceding one to be zero, and so on.
The space of polynomials of degree at most $r-1$ has dimension $r$,
so these polynomials form a basis of that space.

Every polynomial $p$ of degree at most $r-1$ is thus a linear
combination of them. By linearity of integration, the preceding
equalities imply
\[
 \int_0^1W(t)p(t)\,dt=0
 \qquad\text{for every polynomial }p\text{ with }\deg p\leq r-1.
\]

We will construct one such polynomial for which this integral is
strictly positive, giving a contradiction.

\noindent\textbf{Step 2: The function $W$ has at most $r-1$ sign changes.}
We now rewrite the same function $W$ as a single fraction. This will
allow us to count its zeros by examining the polynomial in its numerator.
For $0<t<1$, we can factor $t^2$ out of each denominator:
\[
 (1-t)^2+\frac{t^2}{\lambda_k}
 =t^2\left[\left(\frac{1-t}{t}\right)^2+\frac1{\lambda_k}\right].
\]
Substituting this identity into the original definition of $W$ and
taking the common factor $1/t^2$ outside the sum gives
\[
 W(t)=\frac1{t^2}\sum_{k=1}^r
 \frac{a_k}{\left(\frac{1-t}{t}\right)^2+1/\lambda_k}.
\]
For this algebraic calculation, temporarily write
$z=((1-t)/t)^2$. The sum is then
$\sum_{k=1}^r a_k/(z+1/\lambda_k)$.
To combine these fractions, use the common denominator
$\prod_{k=1}^r(z+1/\lambda_k)$.
For example, with two terms this is the familiar identity
\[
 \frac{a_1}{z+1/\lambda_1}+\frac{a_2}{z+1/\lambda_2}
 =\frac{a_1(z+1/\lambda_2)+a_2(z+1/\lambda_1)}
 {(z+1/\lambda_1)(z+1/\lambda_2)}.
\]
For $r$ terms, the numerator of the $k$th fraction is multiplied by
all denominator factors except its own. Thus the combined numerator
is the polynomial
\[
 P(z)=\sum_{k=1}^r a_k
       \prod_{\ell\ne k}\left(z+\frac1{\lambda_\ell}\right),
 \qquad
 \sum_{k=1}^r\frac{a_k}{z+1/\lambda_k}
 =\frac{P(z)}{\displaystyle\prod_{k=1}^r(z+1/\lambda_k)}.
\]
Restoring $z=((1-t)/t)^2$ and the factor $1/t^2$, we obtain
\begin{equation}
 W(t)=
 \frac{P\!\left(\left(\frac{1-t}{t}\right)^2\right)}
 {t^2\displaystyle\prod_{k=1}^r
       \left[\left(\frac{1-t}{t}\right)^2+\frac1{\lambda_k}\right]}.
 \label{eq:calibration-rational-signs}
\end{equation}
This is an algebraic identity for the original $W(t)$ on $(0,1)$;
$P$ simply names the numerator after combining the fractions.

Each term defining $P$ contains $r-1$ linear factors, so its degree
is at most $r-1$. We must also check that these terms do not cancel
to make $P$ identically zero. As a polynomial, $P$ is defined for all
real arguments, even though the arguments used in
\eqref{eq:calibration-rational-signs} are positive.
For each $k$, evaluate it at $z=-1/\lambda_k$.
Every term other than the $k$th contains the factor
$z+1/\lambda_k$ and therefore vanishes. The remaining term gives
\[
 P(-1/\lambda_k)
 =a_k\prod_{\ell\ne k}
       \left(\frac1{\lambda_\ell}-\frac1{\lambda_k}\right).
\]
The product is nonzero because the precisions are distinct.
If $P$ were identically zero, the left-hand side would vanish for
every $k$, forcing every $a_k=0$. This contradicts
$\boldsymbol a\ne0$. Hence $P$ is a nonzero polynomial of degree
at most $r-1$.

Finally, every denominator factor in
\eqref{eq:calibration-rational-signs} is positive for $0<t<1$.
Therefore $W(t)=0$ exactly when
$P(((1-t)/t)^2)=0$, and $W(t)$ has the same sign as that numerator.
The argument $((1-t)/t)^2$ is strictly decreasing from $+\infty$ to
zero as $t$ increases from zero to one, since its derivative is
$-2(1-t)/t^3<0$.
Each positive root of $P$ can consequently correspond to only one
value of $t$; nonpositive roots correspond to none in $(0,1)$.
A nonzero polynomial of degree at most $r-1$ has at most $r-1$
distinct real roots, so $W$ has at most $r-1$ zeros in $(0,1)$.
Since $W$ is continuous, every sign change must pass through a zero.
It therefore has at most $r-1$ sign changes, and it cannot be
identically zero.

\noindent\textbf{Step 3: A polynomial with matching sign changes gives a contradiction.}
Choose a polynomial $Q$ with one simple root at each interior point
where $W$ changes sign, and with no other roots. Its degree is the
number of these points, hence at most $r-1$.
At each such point, both $Q$ and $W$ change sign, so their product
keeps the same sign. At any other zero of $W$, neither function
changes sign. Thus $QW$ has one fixed sign wherever it is nonzero.
Multiplying $Q$ by $-1$ if necessary makes $QW\geq0$.
If $W$ has no sign changes, take $Q=1$ or $Q=-1$ instead.

Since $W$ is nonzero somewhere in $(0,1)$ and $Q$ is nonzero wherever
$W$ is nonzero, the continuous product $QW$ is strictly positive on
some open interval. Hence
\[
 \int_0^1Q(t)W(t)\,dt>0.
\]
But $\deg Q\leq r-1$, so Step 1 says that this same integral is zero.
The contradiction rules out a nonzero left null vector and proves
that $L$ is invertible.
\end{proof}

\subsection{From the leading correction to an exact solution}
\label{app:calibration-uniform-expansion}

Appendix~\ref{app:calibration-family} established that $L$ is invertible.
We now use this fact to choose the schedule coefficients.
The argument has two stages: first choose $\boldsymbol K/S$ to cancel
the leading error. Then repeatedly adjust the coefficients and prove that they converge to values yielding exactly zero terminal loss at the fixed sampling budget.
We explain these two stages before deriving the error estimates they
use. 

\paragraph{What must the coefficients achieve?}
The target standard deviation in direction $k$ is
$1/\sqrt{\lambda_k}$, while the sampler produces the scale
$D_{1,k}^{\mathrm M}(\boldsymbol\theta)$.
For coefficients close to zero and sufficiently large $S$, all
midpoint factors are positive; this is verified below.
We can therefore use the logarithmic error already introduced in
Appendix~\ref{sec:condot-midpoint-asymptotics}:
\begin{equation}
 e_k(1/S;\boldsymbol\theta)
 =\log\!\left(\sqrt{\lambda_k}
 D_{1,k}^{\mathrm M}(\boldsymbol\theta)\right).
 \label{eq:calibration-log-error}
\end{equation}
The expression inside the logarithm is the generated scale divided
by the target scale. Hence
\[
 e_k(1/S;\boldsymbol\theta)=0
 \quad\Longleftrightarrow\quad
 D_{1,k}^{\mathrm M}(\boldsymbol\theta)=\frac1{\sqrt{\lambda_k}}.
\]
Thus we need to make $r$ scalar errors zero using the $r$ coefficients
in $\boldsymbol\theta$. 

\paragraph{The two estimates used in the proof.}
The first estimate below describes the error itself. The second
describes how that error changes when one coefficient changes.
We state them together, explain their consequences, and then prove
them at the end of this subsection.
Vector norms measure Euclidean length. For matrices, the norm
measures the largest factor by which the matrix can stretch a vector.

\begin{samepage}
\begin{lemma}
\label{lem:calibration-normalized-error}
Fix $R>0$. For all sufficiently large $S$ and all coefficient vectors
with $\|\boldsymbol\theta\|\leq R/S$,
\begin{align}
 e_k(1/S;\boldsymbol\theta)
 &=\frac{d_k}{S^3}
   +\frac1{S^2}\sum_{j=1}^rL_{kj}\theta_j
   +O(S^{-4}),
 \label{eq:calibration-normalized-C1}\\
 \partial_{\theta_j}e_k(1/S;\boldsymbol\theta)
 &=\frac{L_{kj}}{S^2}+O(S^{-3}).
 \label{eq:calibration-nearby-derivative}
\end{align}
The bounds on the remainders hold with the same constants for all
coefficients in this set. The constants may depend on $R$ and the
fixed target spectrum, but not on $S$.
\end{lemma}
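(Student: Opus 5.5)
The idea is to treat $\boldsymbol\theta$ as a small perturbation of CondOT and track how it enters the local logarithmic defects of Proposition~\ref{prop:local-log-defect}. Write $\beta_{\boldsymbol\theta}=(1-t)(1+\phi_{\boldsymbol\theta}(t))$ with $\phi_{\boldsymbol\theta}=\sum_j\theta_jt^j$. Then
\[
 v_k^{\boldsymbol\theta}=v_k^0+w_{\boldsymbol\theta},
 \qquad
 w_{\boldsymbol\theta}=(1-t)^2(2\phi_{\boldsymbol\theta}+\phi_{\boldsymbol\theta}^2),
\]
where $v_k^0$ is the CondOT variance. For $\|\boldsymbol\theta\|\le R/S$ and $S$ large, $w_{\boldsymbol\theta}$ and all its derivatives are $O(R/S)$ uniformly on $[0,1]$. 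Hence $v_k^{\boldsymbol\theta}\ge c_k/2$, the functions $A_k^{\boldsymbol\theta}$ and their derivatives up to order five are uniformly bounded, and all midpoint factors are positive. The positivity claim follows as in Appendix~\ref{subsec:midpoint-positivity}.

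\textbf{Step 1: expansion \eqref{eq:calibration-normalized-C1}.} I redo the local expansion of $\log g_k^{\mathrm M}(n)-\int A_k$ through order $h^4$, exactly as in the proof of Proposition~\ref{prop:general-grid-condot-superconvergence} but without assuming $v'''=0$. The resulting local defect is
\[
 -\tfrac{h^3}{48}\,\frac{v'''}{v}(t_n)+h^4 B(t_n)+O(h^5),
\]
where $B$ is a universal polynomial in $A,A',A'',A'''$, and the $O(h^5)$ constant is uniform because it depends only on bounds for the derivatives. Summing over the uniform grid and using the Euler--Maclaurin correction $h\sum f(t_n)=\int_0^1 f+\tfrac h2[f(0)-f(1)]+O(h^2)$ gives
\[
 e_k=-\tfrac{h^2}{48}\int_0^1\frac{v'''}{v}\,dt+O\!\left(h^3\|v'''\|_{C^1}\right)+h^3\!\int_0^1 B\,dt+O(h^4).
\]
Next I linearize in $\boldsymbol\theta$:
\begin{itemize}
\item We have $v'''=w_{\boldsymbol\theta}'''=\sum_j\theta_j[2t^j(1-t)^2]'''+O(\|\boldsymbol\theta\|^2)$. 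Replacing $v$ by $v_k^0$ in the denominator costs $O(\|\boldsymbol\theta\|^2)$. So the $h^2$ term equals $h^2\sum_jL_{kj}\theta_j+O(h^2S^{-2})=h^2\sum_jL_{kj}\theta_j+O(S^{-4})$.
\item The $h^3$ term involving $v'''$ is $O(h^3/S)=O(S^{-4})$.
\item $\int B$ at $\boldsymbol\theta$ differs from its CondOT value by $O(1/S)$. At CondOT, Corollary~\ref{cor:condot-superconvergence} identifies $\int B$ with $d_k$, so this term is $d_kh^3+O(S^{-4})$.
\end{itemize}
Together these give \eqref{eq:calibration-normalized-C1}, with constants depending only on $R$ and the spectrum.

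\textbf{Step 2: the derivative \eqref{eq:calibration-nearby-derivative}.} I differentiate the exact finite sum
\[
 e_k=\sum_n\left[\log g_k^{\mathrm M}(n)-\int_{t_n}^{t_{n+1}}A_k\right]
\]
term by term in $\theta_j$. Since $\partial_{\theta_j}$ commutes with the Taylor expansion, the local defect derivative is
\[
 -\tfrac{h^3}{48}\,\partial_{\theta_j}(v'''/v)+h^4\,\partial_{\theta_j}B+O(h^5).
\]
This requires uniform bounds on $\partial_{\theta_j}$ of the remainder, which I get by writing each remainder in integral form: it is a smooth function of $(h,\boldsymbol\theta)$ jointly, because $A_k$ depends polynomially on $\boldsymbol\theta$ over a positive denominator. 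We have $\partial_{\theta_j}(v'''/v)=[2t^j(1-t)^2]'''/v_k^0+O(1/S)$, and summing over $n$ gives $L_{kj}h^2+O(h^3)$.

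I expect Step 2 to be the main obstacle. The difficulty is justifying that the $O(h^5)$ local remainders remain $O(h^5)$ after differentiation in $\boldsymbol\theta$, uniformly over the ball. I would first try handling this by viewing the local defect as a $C^\infty$ function $F(h,t,\boldsymbol\theta)$ with $F=O(h^3)$ and applying Taylor's theorem in $h$ with $\boldsymbol\theta$-differentiable coefficients on a compact parameter set. Alternatively, I would use Cauchy estimates, treating $\boldsymbol\theta$ in a small complex neighbourhood where everything is analytic.
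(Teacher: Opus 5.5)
Your proposal is correct, but it runs in the opposite order from the paper.

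\textbf{The paper's route.} The paper proves only the derivative estimate \eqref{eq:calibration-nearby-derivative} directly. It expands the exact quotient $\partial_{\theta_j}g_k^{\mathrm M}/g_k^{\mathrm M}$ through order $S^{-3}$, using separate time-Taylor expansions of $A_k$ and $\partial_{\theta_j}A_k$, so no remainder ever has to be differentiated in $\boldsymbol\theta$. It then obtains \eqref{eq:calibration-normalized-C1} by integrating this derivative along the ray $u\boldsymbol\theta$, $0\le u\le1$. It adds the CondOT value $e_k(1/S;0)=d_k/S^3+O(S^{-4})$ from Corollary~\ref{cor:condot-superconvergence}, with remainder $O(S^{-3}\|\boldsymbol\theta\|)=O(S^{-4})$. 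The $d_k$ term is therefore inherited from $\boldsymbol\theta=0$, and for $\boldsymbol\theta\neq0$ only the third-order defect is ever needed.

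\textbf{Your route.} You derive \eqref{eq:calibration-normalized-C1} independently. You use a fourth-order defect $-\tfrac{h^3}{48}v'''/v+h^4B+O(h^5)$ valid on the whole ball, with $\int_0^1B$ moving by $O(1/S)$ from its CondOT value $d_k$, and you linearize $v'''/v$ in $\boldsymbol\theta$. This is sound. The Euler--Maclaurin boundary correction is unnecessary, since $f=v'''/v$ and $f'$ are already $O(1/S)$, so the plain left-Riemann bound suffices after the factor $h^2$. For the derivative, you write the local defect as a jointly smooth $F(h,t,\boldsymbol\theta)$ with integral-form Taylor remainder and differentiate under the integral on a compact set. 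That is a valid and more systematic substitute for the paper's hand expansion.

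\textbf{What each buys.} The paper's approach avoids the general fourth-order computation and handles the remainder issue concretely. Yours gives a self-contained value expansion with an explicit $S^{-3}$ coefficient $\int_0^1B$ away from CondOT. Note, though, that once your Step~2 is established, your Step~1 is redundant. Integrating Step~2 along $u\boldsymbol\theta$ and invoking the CondOT corollary reproduces \eqref{eq:calibration-normalized-C1} immediately, which is exactly the paper's shortcut.
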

\end{samepage}

In~\eqref{eq:calibration-normalized-C1}, the first term is CondOT's
remaining error. The second term is the leading change caused by the
schedule correction. The last term is what remains beyond these two
contributions.
Equation~\eqref{eq:calibration-nearby-derivative} says that changing
$\theta_j$ by a small amount changes the error in direction $k$
at leading rate $L_{kj}/S^2$.
This explains the relevant sizes: a coefficient change of order
$1/S$ can cancel an error of order $S^{-3}$; a further change of
order $S^{-2}$ can address the error of order $S^{-4}$ left afterward.

\paragraph{First choose the explicit leading correction.}
By Appendix~\ref{app:calibration-family}, there is a unique solution
$\boldsymbol K$ to
\begin{equation}
 L\boldsymbol K=-(d_1,\ldots,d_r)^\top.
 \label{eq:calibration-leading-coefficients}
\end{equation}
This is exactly the linear system in the main text.
Take $\boldsymbol\theta=\boldsymbol K/S$ in the first estimate.
For every $k$,
\begin{align*}
 e_k(1/S;\boldsymbol K/S)
 &=\frac{d_k}{S^3}
   +\frac1{S^2}\sum_{j=1}^rL_{kj}\frac{K_j}{S}
   +O(S^{-4})\\
 &=\frac1{S^3}
   \underbrace{\left(d_k+\sum_{j=1}^rL_{kj}K_j\right)}_{=\,0}
   +O(S^{-4})
 =O(S^{-4}).
\end{align*}
The displayed cancellation is the reason for choosing
$\boldsymbol K$. It removes the leading term in every eigendirection.

\begin{corollary}
\label{cor:calibration-first-order}
The schedule $\beta_{\boldsymbol K/S}$ satisfies
\[
J_S(\beta_{\boldsymbol K/S})=O(S^{-8})
\qquad\text{as }S\to\infty.
\]
\end{corollary}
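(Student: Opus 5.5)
We derive the corollary from the computation just above, which gives $e_k(1/S;\boldsymbol K/S)=O(S^{-4})$ for every distinct precision $\lambda_k$, $k=1,\ldots,r$. The remaining task is to turn this logarithmic estimate into a bound on $J_S$ using the exact objective \eqref{eq:w2-objective}.

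First we check that Lemma~\ref{lem:calibration-normalized-error} applies. Set $R=\|\boldsymbol K\|$ (or any larger constant). Then $\|\boldsymbol K/S\|\leq R/S$, so the lemma's remainder constants are uniform in $S$ for all sufficiently large $S$, and the displayed cancellation holds with a constant independent of $S$. Next we need $D_{1,k}^{\mathrm M}(\boldsymbol K/S)>0$, which makes $e_k$ well defined and lets us drop the absolute value in \eqref{eq:w2-objective}. This is the positivity statement the appendix says is verified for coefficients near zero and large $S$. If it must be argued here, note that $\beta_{\boldsymbol K/S}\to 1-t$ in $C^4([0,1])$ as $S\to\infty$. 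Hence $v_k$ stays bounded below by, say, half the CondOT minimum of $(1-t)^2+t^2/\lambda_k$, and $A_k$ stays uniformly bounded. It follows that every factor $1+hA_k(t_{n+\frac12})+\tfrac{h^2}{2}A_k(t_{n+\frac12})A_k(t_n)$ is positive once $h=1/S$ is small.

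Then we apply \eqref{eq:log-error-to-w2}:
\[
\left(D_{1,k}^{\mathrm M}-\frac1{\sqrt{\lambda_k}}\right)^2=\frac{(e^{e_k}-1)^2}{\lambda_k}=\frac{e_k^2}{\lambda_k}+O(|e_k|^3)=O(S^{-8}).
\]
Every eigendirection $k\in\{1,\ldots,d\}$ has a precision equal to one of the representatives $\lambda_1,\ldots,\lambda_r$. Its scalar dynamics, and hence its $D_{1,k}^{\mathrm M}$, coincide with those of the representative, so the same bound holds for all $d$ directions. Summing the $d$ terms of \eqref{eq:w2-objective} gives $J_S(\beta_{\boldsymbol K/S})=O(S^{-8})$.

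The only delicate point is the uniformity in the first step. Lemma~\ref{lem:calibration-normalized-error} must hold with constants uniform over the ball $\|\boldsymbol\theta\|\leq R/S$, and positivity must hold along the $S$-dependent family of schedules. Both follow from uniform $C^4$ bounds on $\beta_{\boldsymbol\theta}$ over that shrinking ball, which we take as given from the lemma's proof.
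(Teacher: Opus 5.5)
Your proposal is correct and follows essentially the same route as the paper. Both arguments take $e_k(1/S;\boldsymbol K/S)=O(S^{-4})$, convert it to an $O(S^{-4})$ scale error via $\exp(u)-1=u+O(u^2)$, square it, and sum over the eigendirections, including repeated eigenvalues. Your explicit positivity check, which justifies dropping the absolute value in the objective, is supplied in the paper by Step 1 of the proof of Lemma~\ref{lem:calibration-normalized-error}.
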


\begin{proof}
Exponentiating~\eqref{eq:calibration-log-error} gives
\[
 D_{1,k}^{\mathrm M}(\boldsymbol K/S)-\frac1{\sqrt{\lambda_k}}
 =\frac{\exp\!\left(e_k(1/S;\boldsymbol K/S)\right)-1}
 {\sqrt{\lambda_k}}.
\]
For a small number $u$, $\exp(u)-1=u+O(u^2)$.
The logarithmic error just obtained is $O(S^{-4})$, so the scale
error on the left is also $O(S^{-4})$.
Because $\alpha_t=t$ already makes the mean exact, the Wasserstein
objective squares each scale error and sums over the
fixed number of eigendirections. Squaring gives $O(S^{-8})$, and
the finite sum has the same order, including any repeated eigenvalues.

\end{proof}

\paragraph{Next remove the remaining error exactly.}
The first correction leaves errors of size $O(S^{-4})$.
We now prove that these errors can be removed by an additional
coefficient change of size $O(S^{-2})$.
We give a rule for repeatedly adjusting the schedule coefficients and prove that the coefficients converge to values yielding exactly zero terminal loss at the fixed sampling budget \(S\).

\begin{proposition}
\label{prop:appendix-exact-calibration}
For every sufficiently large integer $S$, there are coefficients
\begin{equation}
 \boldsymbol\theta_S
 =\frac{\boldsymbol K}{S}+O(S^{-2})
 \label{eq:calibration-exact-parameters}
\end{equation}
such that midpoint sampling with noise schedule
$\beta_{\boldsymbol\theta_S}$ satisfies
\begin{equation}
 Y_S^{\mathrm M}=\mu+\Sigma^{1/2}\epsilon,
 \qquad J_S(\beta_{\boldsymbol\theta_S})=0.
 \label{eq:calibration-pathwise-exactness}
\end{equation}
The equality for $Y_S^{\mathrm M}$ holds for the same initial noise
$\epsilon$. The calibrated coefficients are locally unique within
the chosen polynomial family.
\end{proposition}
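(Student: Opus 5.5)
We plan to prove Proposition~\ref{prop:appendix-exact-calibration} by a contraction-mapping argument in a rescaled coordinate, using only Lemma~\ref{lem:calibration-normalized-error} and the invertibility of $L$ from Lemma~\ref{lem:calibration-response-invertible}.

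\textbf{Change of variables.} Write $\boldsymbol\theta=\boldsymbol K/S+\boldsymbol\eta/S^2$. Fix $\rho>0$, to be chosen below, and restrict to $\|\boldsymbol\eta\|\leq\rho$. Then $\|\boldsymbol\theta\|\leq(\|\boldsymbol K\|+1)/S$ for $S\geq\rho$. We take $R=\|\boldsymbol K\|+1$ in Lemma~\ref{lem:calibration-normalized-error}.

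\textbf{Reformulation as a fixed point.} Define the rescaled error map $F_S(\boldsymbol\eta)\in\mathbb R^r$ by
\[
 F_{S,k}(\boldsymbol\eta)=S^4\,e_k\!\left(1/S;\boldsymbol K/S+\boldsymbol\eta/S^2\right),\qquad k=1,\ldots,r.
\]
Substituting into \eqref{eq:calibration-normalized-C1} and using $L\boldsymbol K=-\boldsymbol d$ gives
\[
 F_S(\boldsymbol\eta)=L\boldsymbol\eta+\boldsymbol r_S(\boldsymbol\eta),\qquad \sup_{\|\boldsymbol\eta\|\leq\rho}\|\boldsymbol r_S(\boldsymbol\eta)\|\leq C_0.
\]
Here $C_0$ comes from the uniform $O(S^{-4})$ remainder. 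By \eqref{eq:calibration-nearby-derivative} and the chain rule, $\partial_{\eta_j}\theta_i=\delta_{ij}/S^2$, so the Jacobian is
\[
 DF_S(\boldsymbol\eta)=S^2\,D_{\boldsymbol\theta}e=L+O(S^{-1})
\]
uniformly on the ball. Now define
\[
 T_S(\boldsymbol\eta)=\boldsymbol\eta-L^{-1}F_S(\boldsymbol\eta).
\]
This is exactly the coefficient iteration started from $\boldsymbol K/S$, i.e.\ $\boldsymbol\eta=0$. Its Jacobian is $I-L^{-1}DF_S=O(S^{-1})$, so $T_S$ is a contraction with constant at most $1/2$ for $S$ large.

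\textbf{Self-mapping of the ball.} Since $T_S(\boldsymbol\eta)=-L^{-1}\boldsymbol r_S(\boldsymbol\eta)$, we have
\[
 \|T_S(\boldsymbol\eta)\|\leq\|L^{-1}\|C_0.
\]
The constant $C_0$ depends on $\rho$ only through $R$, which is fixed as long as $\rho\leq S$. Choosing $\rho=\|L^{-1}\|C_0$ therefore makes the closed ball invariant.

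\textbf{Main obstacle: uniformity of the constants.} The delicate step is that the constants in Lemma~\ref{lem:calibration-normalized-error} must be uniform on the ball and independent of $S$. We would verify the apparent circularity away: $\rho$ is chosen after $R$, and $R$ is chosen independently of $\rho$. Once this holds, Banach's fixed-point theorem yields a unique $\boldsymbol\eta_S$ in the ball with $F_S(\boldsymbol\eta_S)=0$, i.e.\ $e_k=0$ for all $k$. This gives $\boldsymbol\theta_S=\boldsymbol K/S+O(S^{-2})$.

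\textbf{Local uniqueness.} Uniqueness of the fixed point in the ball gives local uniqueness of $\boldsymbol\theta_S$.

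\textbf{From the logarithmic error to the conclusions.}
\begin{itemize}
\item[(i)] Lemma~\ref{lem:calibration-normalized-error} is stated for all such $\boldsymbol\theta$ and presupposes positivity of the midpoint factors, so $D_{1,k}^{\mathrm M}>0$. Then $e_k=0$ gives $D_{1,k}^{\mathrm M}=\lambda_k^{-1/2}$ for each distinct value, and hence for repeated eigenvalues as well.
\item[(ii)] By Proposition~\ref{prop:linear-alpha-unique}, $Y_{S,k}^{U,\mathrm M}=\mu_k^U+\lambda_k^{-1/2}\epsilon_k^U$. Rotating back by $U$ yields $Y_S^{\mathrm M}=\mu+\Sigma^{1/2}\epsilon$ pathwise.
\item[(iii)] By \eqref{eq:w2-objective}, $J_S=0$.
\end{itemize}
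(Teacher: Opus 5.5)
Your proof is correct and follows essentially the same route as the paper: it uses the same update $\boldsymbol\theta\mapsto\boldsymbol\theta-S^2L^{-1}\boldsymbol e$, written in your rescaled coordinate $\boldsymbol\eta$. As in the paper, a Jacobian bound of order $S^{-1}$ makes this a contraction with an invariant ball of radius $O(S^{-2})$ about $\boldsymbol K/S$, and positivity of the midpoint factors plus mean exactness then give the conclusions. The only differences are cosmetic: you get invariance directly from $T_S(\boldsymbol\eta)=-L^{-1}\boldsymbol r_S(\boldsymbol\eta)$ rather than from the triangle inequality plus the first-update bound, and your uniqueness neighborhood is the $O(S^{-2})$ ball rather than the paper's larger region $\|\boldsymbol\theta\|\leq R/S$ (which your contraction estimate also covers).
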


\begin{proof}
\noindent\textbf{Step 1: Correct the current error.}
Starting from a coefficient vector
$\boldsymbol\theta$, define its next value by
\[
 T_S(\boldsymbol\theta)
 =\boldsymbol\theta-S^2L^{-1}
 \begin{pmatrix}
  e_1(1/S;\boldsymbol\theta)\\[-1mm]
  \vdots\\[-1mm]
  e_r(1/S;\boldsymbol\theta)
 \end{pmatrix}.
\]
The matrix $L/S^2$ describes how coefficient changes affect the
errors, so its inverse $S^2L^{-1}$ converts the current errors into
a coefficient correction. For an exactly linear error with this
derivative, the update would remove the error in one step.
Our errors are nonlinear, so we will repeat the update.

If the update leaves the coefficients unchanged, its correction
term must be zero. Multiplying that term by $L/S^2$ recovers
the error vector, so all $r$ errors must also be zero.
Conversely, zero errors give a zero correction and leave
the coefficients unchanged.

\noindent\textbf{Step 2: Show that the update brings nearby vectors closer.}
Apply the preceding error estimates with
$R=\|\boldsymbol K\|+1$.
This choice leaves a margin of $1/S$ around the starting vector
$\boldsymbol K/S$. The further adjustments will have size
$O(S^{-2})$, which is smaller than this margin for sufficiently
large $S$. Indeed, if
$\|\boldsymbol\theta-\boldsymbol K/S\|\leq B/S^2$
for a constant $B>0$ independent of $S$, then
\[
 \|\boldsymbol\theta\|
 \leq \frac{\|\boldsymbol K\|}{S}+\frac{B}{S^2}
 \leq \frac{\|\boldsymbol K\|+1}{S}
 =\frac{R}{S}
 \qquad\text{whenever }S\geq B.
\]
Thus coefficients within $O(S^{-2})$ of $\boldsymbol K/S$
remain in the region where our error estimates apply.

The Jacobian of a map is the matrix of its first derivatives; it
describes how much the output changes when the input changes.
Write $DT_S$ for the Jacobian of this update.
By~\eqref{eq:calibration-nearby-derivative}, the Jacobian of the error
vector has the matrix form $L/S^2+O(S^{-3})$. Differentiating the
update therefore gives
\begin{align*}
 DT_S(\boldsymbol\theta)
 &=I-S^2L^{-1}\left(\frac{L}{S^2}+O(S^{-3})\right)\\
 &=I-L^{-1}L+O(S^{-1})
 =O(S^{-1}).
\end{align*}
The identity terms cancel. The remaining derivative is small, so
for sufficiently large $S$ its norm is at most $1/2$ everywhere in
the stated set.
We now explain why the derivative bound implies that the update
brings any two coefficient vectors closer together.

Take two vectors $\boldsymbol\theta$ and $\boldsymbol\eta$ satisfying
$\|\boldsymbol\theta\|,\|\boldsymbol\eta\|\leq R/S$.
To compare their updated values, move along the straight line
from $\boldsymbol\eta$ to $\boldsymbol\theta$. Its points are
\[
 (1-u)\boldsymbol\eta+u\boldsymbol\theta,
 \qquad 0\leq u\leq1.
\]
At $u=0$ we are at $\boldsymbol\eta$, and at $u=1$ we reach
$\boldsymbol\theta$. Every intermediate point remains in our region,
because the triangle inequality gives
\[
 \|(1-u)\boldsymbol\eta+u\boldsymbol\theta\|
 \leq (1-u)\|\boldsymbol\eta\|+u\|\boldsymbol\theta\|
 \leq (1-u)\frac{R}{S}+u\frac{R}{S}
 =\frac{R}{S}.
\]
Thus the bound $\|DT_S\|\leq1/2$ holds along the entire line.

By the chain rule, the rate at which the updated vector changes
along this line is
\[
 \frac{d}{du}
 T_S\big((1-u)\boldsymbol\eta+u\boldsymbol\theta\big)
 =
 DT_S\big((1-u)\boldsymbol\eta+u\boldsymbol\theta\big)
 (\boldsymbol\theta-\boldsymbol\eta).
\]
The fundamental theorem of calculus expresses the total change
as the integral of this derivative:
\[
 T_S(\boldsymbol\theta)-T_S(\boldsymbol\eta)
 =
 \int_0^1
 DT_S\big((1-u)\boldsymbol\eta+u\boldsymbol\theta\big)
 (\boldsymbol\theta-\boldsymbol\eta)\,du.
\]
Taking norms, the norm of an integral is at most the integral
of the norms. Also, $\|DT_S\|\leq1/2$ means that multiplying
any vector by this matrix produces a vector at most half as long.
Consequently,
\begin{equation}
\begin{aligned}
 \|T_S(\boldsymbol\theta)-T_S(\boldsymbol\eta)\|
 &\leq
 \int_0^1
 \left\|
 DT_S\big((1-u)\boldsymbol\eta+u\boldsymbol\theta\big)
 (\boldsymbol\theta-\boldsymbol\eta)
 \right\|\,du\\
 &\leq
 \int_0^1 \frac12
 \|\boldsymbol\theta-\boldsymbol\eta\|\,du\\
 &=\frac12\|\boldsymbol\theta-\boldsymbol\eta\|.
\end{aligned}
\label{eq:calibration-distance-halving}
\end{equation}
Therefore, after applying the update to both vectors, they are
at most half as far apart as before. A map with this property
is called a \emph{contraction}.

\noindent\textbf{Step 3: Show that the updates stay close to $\boldsymbol K/S$.}
We start at $\boldsymbol K/S$, whose errors are $O(S^{-4})$.
Multiplication by the fixed matrix $L^{-1}$ and by $S^2$ shows that
the first coefficient update has size $O(S^{-2})$.
Thus there is a constant $C>0$, independent of $S$, such that
\[
 \|T_S(\boldsymbol K/S)-\boldsymbol K/S\|
 \leq\frac{C}{S^2}.
\]
Consider all coefficient vectors within distance $2C/S^2$ of
$\boldsymbol K/S$. For sufficiently large $S$, these vectors lie
in the set from Step 2, since
\[
 \|\boldsymbol\theta\|
 \leq\frac{\|\boldsymbol K\|}{S}+\frac{2C}{S^2}
 \leq\frac{\|\boldsymbol K\|+1}{S}=\frac{R}{S}.
\]
For any such vector, the triangle inequality and
\eqref{eq:calibration-distance-halving} imply
\begin{align*}
 \|T_S(\boldsymbol\theta)-\boldsymbol K/S\|
 &\leq
 \|T_S(\boldsymbol\theta)-T_S(\boldsymbol K/S)\|
 +\|T_S(\boldsymbol K/S)-\boldsymbol K/S\|\\
 &\leq\frac12\|\boldsymbol\theta-\boldsymbol K/S\|
       +\frac{C}{S^2}
 \leq\frac12\frac{2C}{S^2}+\frac{C}{S^2}
 =\frac{2C}{S^2}.
\end{align*}
So an update cannot leave this smaller closed ball.
Starting at its center and applying the update repeatedly keeps
every subsequent vector in the region where Step 2 applies.

\noindent\textbf{Step 4: Show that the updates converge to an exact solution.}
The first update changes the coefficients by at most $C/S^2$,
as shown in Step 3.

To bound the second change, apply the contraction inequality
to the starting vector and the vector after the first update.
Their outputs are the vectors after the first and second updates.
Thus the distance between these outputs (the size of the second
change) is at most half the size of the first change.

Repeating this argument shows that each change is at most half
as large as the preceding one. Therefore:
\[
 \text{first change}\leq\frac{C}{S^2},\qquad
 \text{second change}\leq\frac{C}{2S^2},\qquad
 \text{third change}\leq\frac{C}{4S^2},
 \quad\ldots.
\]
Adding the lengths of all these changes gives
\[
 \frac{C}{S^2}
 +\frac{C}{2S^2}
 +\frac{C}{4S^2}
 +\cdots
 =
 \frac{C}{S^2}
 \left(1+\frac12+\frac14+\cdots\right)
 =\frac{2C}{S^2}.
\]
Hence the total distance traveled by the coefficient vectors,
even over infinitely many updates, is at most $2C/S^2$. The sum of the remaining changes after
the first $m$ updates is at most $2C/(2^mS^2)$, which tends to zero
as $m\to\infty$.
Once sufficiently many updates have been made, all later
vectors are arbitrarily close to one another.
Thus the coefficient vectors form a Cauchy sequence: any two
sufficiently late vectors are arbitrarily close. Since every
Cauchy sequence in $\mathbb R^r$ converges, the updates approach
a limiting coefficient vector, which we denote by
$\boldsymbol\theta_S$.

The ball is closed, so the limit remains in it. The map $T_S$ is
continuous, so taking limits in the update rule gives
$T_S(\boldsymbol\theta_S)=\boldsymbol\theta_S$.
By Step 1, all logarithmic errors at this limit are exactly zero.
Also,
\[
 \|\boldsymbol\theta_S-\boldsymbol K/S\|
 \leq\frac{2C}{S^2},
\]
which proves~\eqref{eq:calibration-exact-parameters}.
This proves existence of an exact root for the fixed finite budget.

For uniqueness, suppose two vectors in the region of Step 2 both
solve the exact equations. Both are fixed points, so
\eqref{eq:calibration-distance-halving} says that their distance is
at most half of itself. That distance must be zero.
This gives the asserted local uniqueness.
Every estimate above holds once $S$ exceeds a fixed threshold, so
the argument applies separately to every integer beyond that threshold.

\noindent\textbf{Step 5: Translate zero logarithmic error into exact sampling.}
For sufficiently large $S$, the midpoint factors at
$\boldsymbol\theta_S$ are positive, and
\[
 e_k(1/S;\boldsymbol\theta_S)=0
 \quad\Longrightarrow\quad
 D_{1,k}^{\mathrm M}(\boldsymbol\theta_S)
 =\frac1{\sqrt{\lambda_k}}.
\]
Repeated eigenvalues have the same multiplier, so this gives the
correct scale in every eigendirection.
The mean is already exact because $\alpha_t=t$
(Proposition~\ref{prop:linear-alpha-unique}).
Consequently, in the covariance eigenbasis,
\[
 Y_{S,k}^{U,\mathrm M}
 =\mu_k^U+\frac{\epsilon_k^U}{\sqrt{\lambda_k}},
\]
with eigenvalues repeated according to their multiplicities.
Transforming back gives $Y_S^{\mathrm M}=\mu+\Sigma^{1/2}\epsilon$.
Every term in the exact squared Wasserstein objective is therefore zero.
\end{proof}

\paragraph{Derivation of the two error estimates.}
It remains to prove the two estimates in
Lemma~\ref{lem:calibration-normalized-error}.
We first prove the coefficient derivative estimate
\eqref{eq:calibration-nearby-derivative}, then integrate that
estimate to obtain the error formula
\eqref{eq:calibration-normalized-C1}.
Step 1 provides bounds that apply uniformly to the schedules
under consideration. Step 2 derives the derivative expansion from
the exact midpoint factors. Step 3 identifies its leading coefficient
as $L_{kj}$, and Step 4 combines it with the known CondOT error.

\begin{proof}[Proof of Lemma~\ref{lem:calibration-normalized-error}]
\noindent\textbf{Step 1: Keep all denominators and logarithms well defined.}
Work on the fixed closed coefficient ball
$\|\boldsymbol\theta\|\leq1$.
For sufficiently large $S$, this ball contains all coefficients
satisfying $\|\boldsymbol\theta\|\leq R/S$.

Recall the variance and scalar field:
\[
 v_k(t;\boldsymbol\theta)
 =\beta_{\boldsymbol\theta}(t)^2+\frac{t^2}{\lambda_k},
 \qquad
 A_k(t;\boldsymbol\theta)
 =\frac{\partial_t v_k(t;\boldsymbol\theta)}
 {2v_k(t;\boldsymbol\theta)}.
\]
For every coefficient vector,
$v_k(0;\boldsymbol\theta)=1$, while for $t>0$,
\[
 v_k(t;\boldsymbol\theta)
 \geq\frac{t^2}{\lambda_k}>0.
\]
The variance is continuous in both $t$ and $\boldsymbol\theta$.
Since $[0,1]$ and the closed coefficient ball form a compact set,
the variance attains a strictly positive minimum on this set.
Thus there exists $c_k>0$ such that
\[
 v_k(t;\boldsymbol\theta)\geq c_k
 \qquad\text{for all }t\in[0,1]
 \text{ and }\|\boldsymbol\theta\|\leq1.
\]
This lower bound is independent of $t$, $\boldsymbol\theta$,
and $S$ within the stated region.

The variance $v_k$ is a polynomial in $t$ and the coefficients,
and $A_k$ is a ratio of such polynomials. Differentiating $A_k$
with respect to time, the coefficients, or both produces
polynomial expressions divided by powers of $v_k$.
Since $v_k$ never vanishes in this region, these derivatives
are continuous there. Time and the coefficients range over
a closed, bounded set, so each derivative needed below has
a finite maximum in absolute value. Thus each derivative has
a bound that works for all times and all coefficients in the ball.

The same argument bounds $A_k$ itself. The midpoint factor is
\[
 g_k^{\mathrm M}(n;\boldsymbol\theta)
 =1+\frac{A_k(t_{n+\frac12};\boldsymbol\theta)}{S}
   +\frac{A_k(t_{n+\frac12};\boldsymbol\theta)
           A_k(t_n;\boldsymbol\theta)}{2S^2}.
\]
Both evaluations of \(A_k\), at the start and midpoint of the step, are bounded in absolute value by the same constant, for every step and every coefficient vector in the chosen ball.
The two terms added to $1$ are therefore $O(S^{-1})$ and
$O(S^{-2})$, with constants independent of $n$ and
$\boldsymbol\theta$ in the ball.
Consequently, for sufficiently large $S$,
\[
 \big|g_k^{\mathrm M}(n;\boldsymbol\theta)-1\big|
 \leq\frac12,
 \qquad\text{so}\qquad
 \frac12\leq g_k^{\mathrm M}(n;\boldsymbol\theta)\leq\frac32,
\]
for every step and every coefficient vector under consideration.
All midpoint factors are therefore positive. Their product is
positive as well, so the logarithmic error is well defined.

\noindent\textbf{Step 2: Derive the coefficient derivative of the error.}
Equation~\eqref{eq:uniform-global-error-sum} gives, with $h=1/S$,
\begin{equation}
 e_k(1/S;\boldsymbol\theta)
 =-\frac1{48S^2}
 \left[
 \frac1S\sum_{n=0}^{S-1}
 \frac{\partial_t^3v_k(t_n;\boldsymbol\theta)}
      {v_k(t_n;\boldsymbol\theta)}
 \right]+O(S^{-3}).
 \label{eq:calibration-differentiated-expansion}
\end{equation}
We need a corresponding formula for
$\partial_{\theta_j}e_k(1/S;\boldsymbol\theta)$.
The displayed sum can be differentiated term by term. However, a bound
on the remainder itself does not tell us how quickly it changes with
$\theta_j$. We therefore calculate the derivative from the exact
midpoint factors and bound the omitted terms in that calculation.

\medskip
\noindent\emph{(a) Differentiate the exact expression.}
Equation~\eqref{eq:global-error-local-defects} writes the total
logarithmic error as a sum of one-step errors.
Differentiating this exact identity gives
\[
 \partial_{\theta_j}e_k(1/S;\boldsymbol\theta)
 =\sum_{n=0}^{S-1}\left[
 \frac{\partial_{\theta_j}g_k^{\mathrm M}(n;\boldsymbol\theta)}
      {g_k^{\mathrm M}(n;\boldsymbol\theta)}
 -\int_{t_n}^{t_{n+1}}
       \partial_{\theta_j}A_k(t;\boldsymbol\theta)\,dt
 \right].
\]
Here we used the derivative of a logarithm. Differentiation with respect to \(\theta_j\) can pass inside the integral because both \(A_k\) and its derivative with respect to \(\theta_j\) are continuous on the region established in Step 1. We will expand the two terms
inside the brackets and then subtract them.

Throughout this step, primes mean time derivatives, and
$\partial_{\theta_j}$ means a coefficient derivative.
These derivatives can be taken in either order because \(A_k\)
is smooth on the region established in Step 1.
When arguments are omitted in the next displays, the fields and
their derivatives are evaluated at $(t_n;\boldsymbol\theta)$.

\medskip
\noindent\emph{(b) Expand the derivative of the midpoint factor.}
Taylor's theorem, applied in time from $t_n$ to $t_n+1/(2S)$,
gives the following two expansions:
\begin{align*}
 A_k(t_{n+\frac12};\boldsymbol\theta)
 &=A_k+\frac{A_k'}{2S}+\frac{A_k''}{8S^2}+O(S^{-3}),\\
 \partial_{\theta_j}A_k(t_{n+\frac12};\boldsymbol\theta)
 &=\partial_{\theta_j}A_k
   +\frac{\partial_{\theta_j}A_k'}{2S}
   +\frac{\partial_{\theta_j}A_k''}{8S^2}+O(S^{-3}).
\end{align*}
The second line is a Taylor expansion of the function
$t\mapsto\partial_{\theta_j}A_k(t;\boldsymbol\theta)$ itself.
For each line, Taylor's theorem bounds the remainder by a constant
times $(1/(2S))^3$, using the third time derivative of that function.
Step 1 bounds both $\partial_t^3A_k$ and
$\partial_t^3\partial_{\theta_j}A_k$ uniformly, so both remainders
are $O(S^{-3})$ with constants independent of $n$ and the coefficients.

Differentiate the midpoint factor from Step 1 using the product rule:
\begin{align*}
 \partial_{\theta_j}g_k^{\mathrm M}(n;\boldsymbol\theta)
 &=\frac{\partial_{\theta_j}A_k(t_{n+\frac12};\boldsymbol\theta)}{S}+\frac{1}{2S^2}\Big[
   \partial_{\theta_j}A_k(t_{n+\frac12};\boldsymbol\theta)
       A_k(t_n;\boldsymbol\theta)
   +A_k(t_{n+\frac12};\boldsymbol\theta)
       \partial_{\theta_j}A_k(t_n;\boldsymbol\theta)\Big].
\end{align*}
Substitute the two Taylor expansions above and collect equal powers
of $1/S$. This gives
\begin{align*}
 \partial_{\theta_j}g_k^{\mathrm M}(n;\boldsymbol\theta)
 &=\frac{\partial_{\theta_j}A_k}{S}
 +\frac{\partial_{\theta_j}A_k'
        +2A_k\partial_{\theta_j}A_k}{2S^2}+\frac1{S^3}\left[
   \frac{\partial_{\theta_j}A_k''}{8}
   +\frac{A_k'\partial_{\theta_j}A_k
          +A_k\partial_{\theta_j}A_k'}{4}
   \right]+O(S^{-4}).
\end{align*}

We also need the reciprocal of the midpoint factor. Substituting the first Taylor expansion of $A_k$ above into
the definition of $g_k^{\mathrm M}$ gives
\[
 g_k^{\mathrm M}(n;\boldsymbol\theta)
 =1+\frac{A_k}{S}+\frac{A_k'+A_k^2}{2S^2}+O(S^{-3}).
\]
For the factor on this step, use the exact algebraic identity
\[
 \frac1{g_k^{\mathrm M}}
 =1-(g_k^{\mathrm M}-1)+(g_k^{\mathrm M}-1)^2
   -\frac{(g_k^{\mathrm M}-1)^3}{g_k^{\mathrm M}}.
\]
The last term is $O(S^{-3})$, because
$g_k^{\mathrm M}-1=O(S^{-1})$ and $1/g_k^{\mathrm M}\leq2$.
Substituting the factor expansion into this identity gives
\[
 \frac1{g_k^{\mathrm M}(n;\boldsymbol\theta)}
 =1-\frac{A_k}{S}+\frac{A_k^2-A_k'}{2S^2}+O(S^{-3}).
\]
The coefficient of $S^{-2}$ here is
$A_k^2-(A_k'+A_k^2)/2=(A_k^2-A_k')/2$.
All bounds used here are uniform over the steps and coefficients.

Multiply the expansions of $\partial_{\theta_j}g_k^{\mathrm M}$
and $1/g_k^{\mathrm M}$. The terms of order $S^{-2}$ containing
$A_k\partial_{\theta_j}A_k$ cancel. The result is
\begin{align*}
 \frac{\partial_{\theta_j}g_k^{\mathrm M}(n;\boldsymbol\theta)}
      {g_k^{\mathrm M}(n;\boldsymbol\theta)}
 &=\frac{\partial_{\theta_j}A_k}{S}
   +\frac{\partial_{\theta_j}A_k'}{2S^2}+\frac1{S^3}\left[
   \frac{\partial_{\theta_j}A_k''}{8}
   -\frac{A_k'\partial_{\theta_j}A_k
          +A_k\partial_{\theta_j}A_k'}{4}
   -\frac{A_k^2\partial_{\theta_j}A_k}{2}
   \right]+O(S^{-4}).
\end{align*}
The remainder stays $O(S^{-4})$: the $O(S^{-4})$ remainder of
$\partial_{\theta_j}g_k^{\mathrm M}$ multiplies a bounded reciprocal,
and the reciprocal's $O(S^{-3})$ remainder multiplies
$\partial_{\theta_j}g_k^{\mathrm M}=O(S^{-1})$.
Products of displayed terms whose powers add to at least four
also contribute only $O(S^{-4})$.

\medskip
\noindent\emph{(c) Subtract the integral term.}
Taylor expanding $\partial_{\theta_j}A_k(t;\boldsymbol\theta)$
about $t_n$ and integrating over an interval of length $1/S$ gives
\[
 \int_{t_n}^{t_{n+1}}
 \partial_{\theta_j}A_k(t;\boldsymbol\theta)\,dt
 =\frac{\partial_{\theta_j}A_k}{S}
  +\frac{\partial_{\theta_j}A_k'}{2S^2}
  +\frac{\partial_{\theta_j}A_k''}{6S^3}
  +O(S^{-4}).
\]

Now subtract this formula from the preceding expansion of
$\partial_{\theta_j}g_k^{\mathrm M}/g_k^{\mathrm M}$.
The terms of orders $S^{-1}$ and $S^{-2}$ are identical and cancel.
At order $S^{-3}$, the coefficient of
$\partial_{\theta_j}A_k''$ becomes $1/8-1/6=-1/24$.
Thus the derivative of the one-step error is
\begin{align*}
 &\frac{\partial_{\theta_j}g_k^{\mathrm M}(n;\boldsymbol\theta)}
       {g_k^{\mathrm M}(n;\boldsymbol\theta)}
 -\int_{t_n}^{t_{n+1}}
       \partial_{\theta_j}A_k(t;\boldsymbol\theta)\,dt\\
 &\quad=-\frac1{24S^3}\left[
   \partial_{\theta_j}A_k''
   +6A_k'\partial_{\theta_j}A_k
   +6A_k\partial_{\theta_j}A_k'
   +12A_k^2\partial_{\theta_j}A_k
   \right]+O(S^{-4})\\
 &\quad=-\frac1{24S^3}
   \partial_{\theta_j}\big(A_k''+6A_kA_k'+4A_k^3\big)
   +O(S^{-4}).
\end{align*}
The last equality uses the product rule on $A_kA_k'$ and the
power rule on $A_k^3$.

To write this in terms of $v_k$, use the exact identity established
in the proof of Proposition~\ref{prop:local-log-defect}:
\[
 \frac{\partial_t^3v_k}{v_k}
 =2\big(A_k''+6A_kA_k'+4A_k^3\big).
\]
Differentiating this exact identity with respect to $\theta_j$
gives the one-step formula
\begin{align*}
 &\frac{\partial_{\theta_j}g_k^{\mathrm M}(n;\boldsymbol\theta)}
       {g_k^{\mathrm M}(n;\boldsymbol\theta)}
 -\int_{t_n}^{t_{n+1}}
       \partial_{\theta_j}A_k(t;\boldsymbol\theta)\,dt\\
 &\qquad=-\frac1{48S^3}
 \partial_{\theta_j}\left(
 \frac{\partial_t^3v_k(t_n;\boldsymbol\theta)}
      {v_k(t_n;\boldsymbol\theta)}\right)+O(S^{-4}).
\end{align*}

\medskip
\noindent\emph{(d) Sum the steps and replace the sum by an integral.}
The remainder bound above is the same for every step. Consequently,
the sum of the $S$ remainders is $S\,O(S^{-4})=O(S^{-3})$.
Substitution into the exact sum in part (a) gives
\[
 \partial_{\theta_j}e_k(1/S;\boldsymbol\theta)
 =-\frac1{48S^2}
 \left[
 \frac1S\sum_{n=0}^{S-1}
 \partial_{\theta_j}\left(
 \frac{\partial_t^3v_k(t_n;\boldsymbol\theta)}
      {v_k(t_n;\boldsymbol\theta)}
 \right)
 \right]+O(S^{-3}).
\]
The bracket is the left-endpoint Riemann sum for the integral below.
Its integrand has a bounded time derivative by Step 1.
On an interval of length $1/S$, the mean value theorem therefore
bounds the difference between the integrand and its left-endpoint
value by a constant times $1/S$.
Integrating this difference over one interval gives $O(S^{-2})$;
summing over all $S$ intervals gives $O(S^{-1})$.
Finally, the bracket is multiplied by $1/(48S^2)$, so replacing
the sum by the integral changes the formula by $O(S^{-3})$.
We obtain
\begin{equation}
 \partial_{\theta_j}e_k(1/S;\boldsymbol\theta)
 =-\frac1{48S^2}\int_0^1
 \partial_{\theta_j}
 \left(\frac{\partial_t^3v_k(t;\boldsymbol\theta)}
 {v_k(t;\boldsymbol\theta)}\right)\,dt+O(S^{-3}).
 \label{eq:calibration-parameter-derivatives}
\end{equation}
Every remainder bound in this step uses constants independent
of $S$, the step index, and the coefficients in the fixed ball
chosen in Step 1.

\noindent\textbf{Step 3: Evaluate the leading derivative at CondOT and control its change nearby.}
We first evaluate the integral in
\eqref{eq:calibration-parameter-derivatives} at
$\boldsymbol\theta=0$.
From the polynomial family~\eqref{eq:calibration-family},
\[
 \partial_{\theta_j}\beta_{\boldsymbol\theta}(t)=t^j(1-t).
\]
Since $v_k=\beta_{\boldsymbol\theta}^2+t^2/\lambda_k$,
the chain rule gives
\[
 \partial_{\theta_j}v_k(t;\boldsymbol\theta)
 =2\beta_{\boldsymbol\theta}(t)t^j(1-t).
\]
At $\boldsymbol\theta=0$, the schedule is $1-t$, so
\[
 \partial_{\theta_j}v_k(t;0)=2t^j(1-t)^2,
 \qquad v_k(t;0)=(1-t)^2+\frac{t^2}{\lambda_k}.
\]
This last variance is quadratic in $t$, hence its third time
derivative is zero.

Apply the quotient rule to the differentiated integrand:
\[
 \partial_{\theta_j}\left(\frac{\partial_t^3v_k}{v_k}\right)
 =\frac{\partial_t^3\partial_{\theta_j}v_k}{v_k}
  -\frac{(\partial_t^3v_k)(\partial_{\theta_j}v_k)}{v_k^2}.
\]
The time and coefficient derivatives can be taken in either order
because $v_k$ is a polynomial in these variables.
At $\boldsymbol\theta=0$, the second term vanishes, and substitution
of the formulas above gives
\[
 \left.
 \partial_{\theta_j}\left(\frac{\partial_t^3v_k}{v_k}\right)
 \right|_{\boldsymbol\theta=0}
 =\frac{\big[2t^j(1-t)^2\big]'''}
 {(1-t)^2+t^2/\lambda_k}.
\]
Multiplying by $-1/48$ and integrating over $[0,1]$ gives
exactly $L_{kj}$, as defined in~\eqref{eq:calibration-response}.

We next compare the integral at $\boldsymbol\theta$ with its value
at CondOT, where $\boldsymbol\theta=0$. We first bound the change
of the integrand at each fixed time $t$, and then integrate this
bound over time.

Fix $t\in[0,1]$. To move from $0$ to $\boldsymbol\theta$,
consider the coefficient vector $u\boldsymbol\theta$ as $u$
increases from $0$ to $1$. This path stays in the ball from Step 1,
because
\[
 \|u\boldsymbol\theta\|
 =u\|\boldsymbol\theta\|
 \leq\|\boldsymbol\theta\|.
\]
Here $u$ describes movement in the coefficients; the time $t$
remains fixed.

Along this path, the $\ell$th coefficient is $u\theta_\ell$,
whose derivative with respect to $u$ is $\theta_\ell$.
The chain rule therefore gives
\[
 \frac{d}{du}\left[
 \partial_{\theta_j}\left(\frac{\partial_t^3v_k}{v_k}\right)
       (t;u\boldsymbol\theta)
 \right]
 =
 \sum_{\ell=1}^r\theta_\ell
 \partial_{\theta_\ell}\partial_{\theta_j}
       \left(\frac{\partial_t^3v_k}{v_k}\right)
       (t;u\boldsymbol\theta).
\]
The sum accounts for the fact that all coefficients change along
the path. Each derivative on the right is taken with respect to
the coefficients and then evaluated at $u\boldsymbol\theta$.

Integrating this derivative from $u=0$ to $u=1$ gives the value
at the end of the path minus the value at its start:
\begin{align*}
 &\partial_{\theta_j}\left(\frac{\partial_t^3v_k}{v_k}\right)
       (t;\boldsymbol\theta)
 -\partial_{\theta_j}\left(\frac{\partial_t^3v_k}{v_k}\right)(t;0)\\
 &\quad=\int_0^1\sum_{\ell=1}^r\theta_\ell
 \partial_{\theta_\ell}\partial_{\theta_j}
       \left(\frac{\partial_t^3v_k}{v_k}\right)
       (t;u\boldsymbol\theta)\,du.
\end{align*}
This is the fundamental theorem of calculus applied along the path.

We now bound the right-hand side. Its second coefficient derivatives
are polynomial expressions divided by powers of $v_k$.
The bounds established in Step 1 therefore bound their absolute
values by a common constant, independent of $t$, $u$, and the
coefficients in the ball.
Taking absolute values inside the integral gives
\begin{align*}
 &\left|
 \partial_{\theta_j}\left(\frac{\partial_t^3v_k}{v_k}\right)
       (t;\boldsymbol\theta)
 -\partial_{\theta_j}\left(\frac{\partial_t^3v_k}{v_k}\right)(t;0)
 \right|\\
 &\quad\leq
 \int_0^1\sum_{\ell=1}^r|\theta_\ell|
 \left|
 \partial_{\theta_\ell}\partial_{\theta_j}
       \left(\frac{\partial_t^3v_k}{v_k}\right)
       (t;u\boldsymbol\theta)
 \right|du\\
 &\quad=O\!\left(\sum_{\ell=1}^r|\theta_\ell|\right).
\end{align*}
The last line follows because each derivative is bounded and
the integration interval in $u$ has length one.

By the Cauchy--Schwarz inequality,
\[
 \sum_{\ell=1}^r|\theta_\ell|
 \leq
 \left(\sum_{\ell=1}^r1^2\right)^{1/2}
 \left(\sum_{\ell=1}^r\theta_\ell^2\right)^{1/2}
 =\sqrt r\,\|\boldsymbol\theta\|.
\]
Since $r$ is fixed, the change in the integrand is therefore
$O(\|\boldsymbol\theta\|)$, with the same bound for every $t$.

We now integrate over time. The interval $0\leq t\leq1$ also
has length one, so integrating the difference preserves the
$O(\|\boldsymbol\theta\|)$ bound. At $\boldsymbol\theta=0$,
we already showed that the integral, multiplied by $-1/48$,
equals $L_{kj}$. Hence
\[
 -\frac1{48}\int_0^1
 \partial_{\theta_j}\left(\frac{\partial_t^3v_k}{v_k}\right)
       (t;\boldsymbol\theta)\,dt
 =L_{kj}+O(\|\boldsymbol\theta\|).
\]

Substituting this into
\eqref{eq:calibration-parameter-derivatives} gives
\begin{align*}
 \partial_{\theta_j}e_k(1/S;\boldsymbol\theta)
 &=\frac1{S^2}
   \left[L_{kj}+O(\|\boldsymbol\theta\|)\right]
   +O(S^{-3})\\
 &=\frac{L_{kj}}{S^2}
   +O\!\left(\frac{\|\boldsymbol\theta\|}{S^2}\right)
   +O(S^{-3}).
\end{align*}
Finally, the lemma restricts the coefficients to
$\|\boldsymbol\theta\|\leq R/S$. Thus
\[
 \frac{\|\boldsymbol\theta\|}{S^2}\leq\frac{R}{S^3}.
\]
Because $R$ is fixed, the two remainder terms are both
$O(S^{-3})$. We conclude that
\[
 \partial_{\theta_j}e_k(1/S;\boldsymbol\theta)
 =\frac{L_{kj}}{S^2}+O(S^{-3}),
\]
which proves~\eqref{eq:calibration-nearby-derivative}.

\noindent\textbf{Step 4: Recover the error by integrating its coefficient derivative.}
Step 3 describes how the error changes as the coefficients move.
To recover the error at $\boldsymbol\theta$, start from its known
value at CondOT and again follow the path $u\boldsymbol\theta$.
The fundamental theorem of calculus and the chain rule give
\begin{align*}
 e_k(1/S;\boldsymbol\theta)-e_k(1/S;0)
 &=\int_0^1\frac{d}{du}e_k(1/S;u\boldsymbol\theta)\,du\\
 &=\int_0^1\sum_{j=1}^r\theta_j
      \partial_{\theta_j}e_k(1/S;u\boldsymbol\theta)\,du.
\end{align*}
The factor $\theta_j$ appears because the $j$th coefficient on
this path is $u\theta_j$, whose derivative with respect to $u$
is $\theta_j$.
Moreover,
$\|u\boldsymbol\theta\|=u\|\boldsymbol\theta\|\leq R/S$,
so the estimate from Step 3 holds at every point of the path,
with the same remainder bound.

Substitute that estimate into the integral:
\begin{align*}
 e_k(1/S;\boldsymbol\theta)-e_k(1/S;0)
 &=\int_0^1\sum_{j=1}^r\theta_j
       \left[\frac{L_{kj}}{S^2}+O(S^{-3})\right]du\\
 &=\frac1{S^2}\sum_{j=1}^rL_{kj}\theta_j
   +O\!\left(S^{-3}\sum_{j=1}^r|\theta_j|\right).
\end{align*}
For the remainder, we sum absolute values because the coefficients
can have either sign. The bound from Step 3 is uniform in $u$,
and the integration interval has length one.
Using Cauchy--Schwarz again,
\[
 S^{-3}\sum_{j=1}^r|\theta_j|
 \leq\sqrt r\,S^{-3}\|\boldsymbol\theta\|
 \leq\frac{\sqrt r\,R}{S^4}.
\]
The remainder is therefore $O(S^{-4})$.

Finally, at $\boldsymbol\theta=0$ we use the sharper CondOT
expansion~\eqref{eq:condot-log-error}, with $h=1/S$:
\[
 e_k(1/S;0)=\frac{d_k}{S^3}+O(S^{-4}).
\]
Adding this known error to the change just computed yields
\[
 e_k(1/S;\boldsymbol\theta)
 =\frac{d_k}{S^3}
  +\frac1{S^2}\sum_{j=1}^rL_{kj}\theta_j
  +O(S^{-4}),
\]
which is~\eqref{eq:calibration-normalized-C1}.
Both estimates in the lemma have now been proved.
\end{proof}

\newpage

\subsection{Numerical calibration procedure}
\label{app:calibration-numerics}

The high-precision computations use the four targets in
Table~\ref{tab:targets}, $\alpha_t=t$, uniform grids, and 90-digit decimal
arithmetic.
For each target, we compute $L$ once by numerically evaluating
the integrals in~\eqref{eq:calibration-response}, and then
solve~\eqref{m:calibration-linear-system} once for $\boldsymbol K$.
At each budget, we use the iteration from the proof of
Proposition~\ref{prop:appendix-exact-calibration} in
Appendix~\ref{app:calibration-uniform-expansion}:
\begin{align*}
 \boldsymbol\theta^{(0)}&=\frac{\boldsymbol K}{S},\\
 \boldsymbol\theta^{(m+1)}
 &=\boldsymbol\theta^{(m)}-S^2L^{-1}
 \begin{pmatrix}
 e_1(1/S;\boldsymbol\theta^{(m)})\\[-2pt]
 \vdots\\[-2pt]
 e_r(1/S;\boldsymbol\theta^{(m)})
 \end{pmatrix},
\end{align*}
where $e_k$ is the logarithmic scale error
in~\eqref{eq:calibration-log-error}.
Each iteration applies the full update. We stop successfully when all
midpoint factors are positive and
$\max_k|D_{1,k}^{\mathrm M}-1/\sqrt{\lambda_k}|<10^{-86}$.
Otherwise, we stop after 600 updates, or earlier if a midpoint factor
becomes nonpositive.

Table~\ref{tab:calibration-all} compares the finite-budget losses.
The first-order correction is asymptotic: it can increase the loss at small
budgets, as seen for Target D at $S=64,128$ and, slightly, for Target C at
$S=128$. For all four targets, the first-order correction improves on CondOT
at every tested budget $S\ge256$.
The iteration reaches its tolerance for all targets at
$S=256,512,1024$.
For Target C at $S=64,128$, the iteration reaches the 600-update cap
without satisfying the tolerance. For Target D at these two budgets,
a midpoint factor becomes nonpositive after four and eight updates,
respectively. These outcomes do not show that an exact schedule does not exist.
Theorem~\ref{m:exact-calibration} only guarantees calibration for sufficiently
large $S$, with a threshold depending on the target spectrum.

\begin{table}[H]
\centering
\caption{High-precision proof iteration across targets.
$J_S^{\mathrm{CondOT}}$ and $J_S^{K/S}$ denote the baseline and first-order
losses. The ``Final $J_S$'' column reports the loss at the last iterate.
Successful runs achieve $J_S<10^{-170}$.
An asterisk marks runs that reached the 600-update limit without
satisfying the stopping criterion.
A dash marks a run stopped after a nonpositive midpoint factor;
``updates'' counts applications of the proof iteration.}
\label{tab:calibration-all}
\begin{tabular}{crrrrr}
\toprule
Target & $S$ & $J_S^{\mathrm{CondOT}}$ & $J_S^{K/S}$ & Final $J_S$ & Updates\\
\midrule
A & $64$ & $2.07\times10^{-12}$ & $3.24\times10^{-16}$ & $<10^{-170}$ & 47\\
A & $128$ & $3.23\times10^{-14}$ & $1.25\times10^{-18}$ & $<10^{-170}$ & 39\\
A & $256$ & $5.05\times10^{-16}$ & $4.87\times10^{-21}$ & $<10^{-170}$ & 34\\
A & $512$ & $7.89\times10^{-18}$ & $1.90\times10^{-23}$ & $<10^{-170}$ & 29\\
A & $1024$ & $1.23\times10^{-19}$ & $7.40\times10^{-26}$ & $<10^{-170}$ & 26\\
\addlinespace[3pt]
B & $64$ & $1.91\times10^{-12}$ & $1.34\times10^{-14}$ & $<10^{-170}$ & 72\\
B & $128$ & $2.99\times10^{-14}$ & $6.03\times10^{-17}$ & $<10^{-170}$ & 56\\
B & $256$ & $4.67\times10^{-16}$ & $2.51\times10^{-19}$ & $<10^{-170}$ & 45\\
B & $512$ & $7.30\times10^{-18}$ & $1.01\times10^{-21}$ & $<10^{-170}$ & 38\\
B & $1024$ & $1.14\times10^{-19}$ & $4.02\times10^{-24}$ & $<10^{-170}$ & 33\\
\addlinespace[3pt]
C & $64$ & $1.97\times10^{-7}$ & $5.39\times10^{-8}$ & $1.07\times10^{-7}\,{}^{*}$ & 600\\
C & $128$ & $2.39\times10^{-9}$ & $2.40\times10^{-9}$ & $1.35\times10^{-9}\,{}^{*}$ & 600\\
C & $256$ & $3.93\times10^{-11}$ & $2.29\times10^{-11}$ & $<10^{-170}$ & 536\\
C & $512$ & $6.25\times10^{-13}$ & $1.35\times10^{-13}$ & $<10^{-170}$ & 210\\
C & $1024$ & $9.80\times10^{-15}$ & $6.43\times10^{-16}$ & $<10^{-170}$ & 125\\
\addlinespace[3pt]
D & $64$ & $8.21\times10^{-9}$ & $6.91\times10^{-8}$ & -- & 4\\
D & $128$ & $1.34\times10^{-10}$ & $2.61\times10^{-10}$ & -- & 8\\
D & $256$ & $2.14\times10^{-12}$ & $1.01\times10^{-12}$ & $<10^{-170}$ & 570\\
D & $512$ & $3.37\times10^{-14}$ & $3.94\times10^{-15}$ & $<10^{-170}$ & 162\\
D & $1024$ & $5.28\times10^{-16}$ & $1.54\times10^{-17}$ & $<10^{-170}$ & 100\\
\bottomrule
\end{tabular}
\end{table}

\subsection{Convergence and calibration across the four targets}
\label{app:four-target-results}

Table~\ref{tab:slopes} reports the final observed decay exponent
$\log_2(J_{32768}/J_{65536})$ for each curve and target. The values approach the predicted exponents: four for the fixed
alternatives, six for CondOT, and eight for the first-order correction in
these examples.

\begin{table}[H]
\centering
\caption{Observed decay exponents}
\label{tab:slopes}
\begin{tabular}{crrrrrr}
\toprule
Target & $(1-t)^2$ & $1-t^2$ & $\cos(\pi t/2)$ & $(1-t)(1+0.1t)$ & CondOT & $\boldsymbol K/S$\\
\midrule
A & $4.0000$ & $4.0000$ & $4.0000$ & $4.0001$ & $6.0000$ & $8.0000$\\
B & $4.0000$ & $4.0000$ & $4.0000$ & $4.0001$ & $6.0000$ & $7.9997$\\
C & $4.0000$ & $4.0370$ & $4.0043$ & $4.0478$ & $6.0000$ & $7.9959$\\
D & $4.0000$ & $4.0057$ & $4.0011$ & $4.0077$ & $6.0000$ & $8.0000$\\
\bottomrule
\end{tabular}
\end{table}

Figure~\ref{fig:all-fixed} repeats the fixed-schedule comparison in
Figure~\ref{fig:accuracy} for every target.

\begin{figure}[H]
\centering
\includegraphics[width=\linewidth]{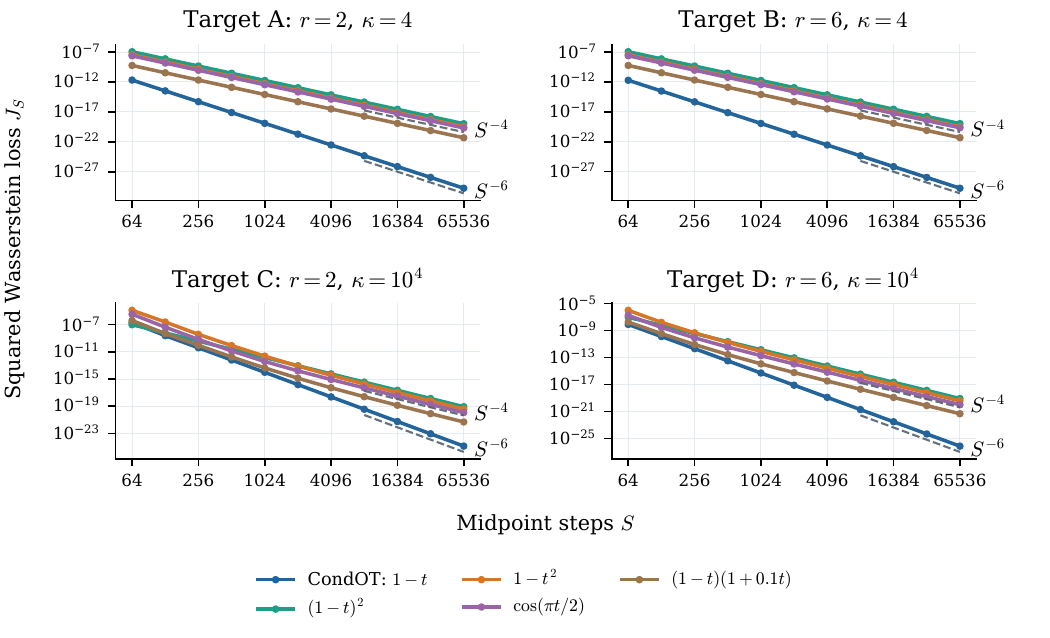}
\caption{Fixed-schedule convergence for all four targets. Each panel repeats
the comparison in Figure~\ref{fig:accuracy}; the dashed lines indicate
$S^{-4}$ and $S^{-6}$.}
\label{fig:all-fixed}
\end{figure}

\newpage

Figure~\ref{fig:all-correction} repeats the comparison between CondOT and
the first-order correction in Figure~\ref{fig:leading-correction} for every target.

\begin{figure}[H]
\centering
\includegraphics[width=0.92\linewidth]{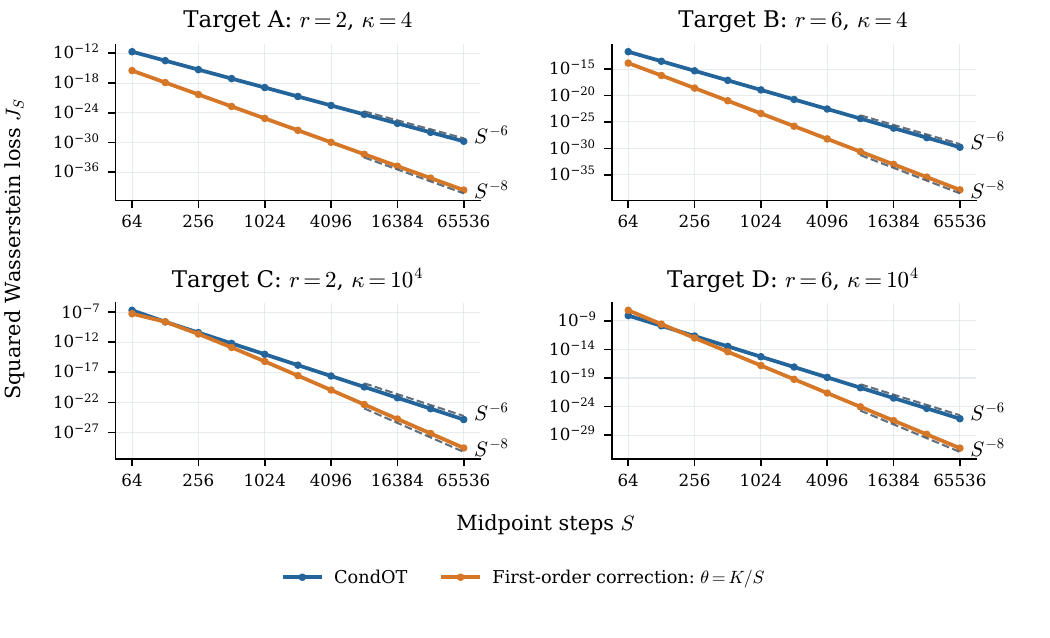}
\caption{CondOT and the first-order correction for all four targets.
The dashed lines indicate $S^{-6}$ and $S^{-8}$. The vertical limits vary
between panels.}
\label{fig:all-correction}
\end{figure}

\subsection{Calibrated schedule shapes and coefficients}
\label{app:four-crossing-numerics}

Table~\ref{tab:coefficients} gives rounded coefficients for the schedules
shown in Figure~\ref{fig:four-crossings}, all at $S=256$. 

\begin{table}[H]
\centering
\caption{Calibrated coefficients at $S=256$, rounded to eight decimal places.}
\label{tab:coefficients}
\setlength{\tabcolsep}{3pt}
\small
\begin{tabular}{crrrrrr}
\toprule
Target & $\theta_1$ & $\theta_2$ & $\theta_3$ & $\theta_4$ & $\theta_5$ & $\theta_6$\\
\midrule
A & $-0.00160120$ & $0.00153963$ & -- & -- & -- & --\\
B & $0.00578858$ & $0.02958382$ & $-0.38062679$ & $1.13618161$ & $-1.39374372$ & $0.60501822$\\
C & $-0.04198104$ & $-0.14789871$ & -- & -- & -- & --\\
D & $-0.45092399$ & $-1.70655900$ & $26.80927083$ & $-73.25730884$ & $76.48650974$ & $-27.84974968$\\
\bottomrule
\end{tabular}
\end{table}

The schedule difference is
\[
 \beta_{\boldsymbol\theta}(t)-(1-t)
 =t(1-t)(\theta_1+\theta_2t+\cdots+\theta_rt^{r-1}).
\]
We locate the sign-changing roots of the polynomial above numerically
and refine them using 90-digit arithmetic. Targets A and C have no
interior crossings; B has two, near $0.3948$ and $0.9806$; and D has
one, near $0.2868$.

We compute the maximum absolute deviations in
Section~\ref{m:crossing-experiment} by evaluating the schedule difference
at its endpoints and numerically located stationary points.
The numerical checks also show that all four calibrated schedules
decrease monotonically and have positive midpoint factors.
The calibration theorem does not provide a budget threshold ensuring monotonicity or determine the number of interior crossings.

\subsection{Calibration in ordinary double precision}
\label{app:precision-check}

We repeat the procedure in Appendix~\ref{app:calibration-numerics}
using ordinary double precision (IEEE binary64) throughout, with a
scale-error tolerance of $10^{-14}$ and at most 1000 updates.
To reduce roundoff near convergence, we sum the logarithms of the
midpoint factors and use accurate evaluations of $\log(1+x)$ and
$\exp(x)-1$ for small $x$.

Table~\ref{tab:precision} reports errors reevaluated at 90 digits
using the resulting coefficients, without refining them.
Sixteen of the twenty runs converge, and their reevaluated scale
errors remain below $10^{-14}$, with squared Wasserstein losses
below $3.0\times10^{-28}$.
The same four small-budget cases fail as in
Appendix~\ref{app:calibration-numerics}.

\begin{table}[H]
\centering
\caption{Proof iteration computed in binary64 and reevaluated at 90 digits
without coefficient refinement. The scale error is
$\max_k|D_{1,k}^{\mathrm M}-1/\sqrt{\lambda_k}|$; $J_S$ includes eigenvalue
multiplicities. Capped runs report the final iterate's errors after 1000
updates. Dashes indicate a nonpositive midpoint factor.}
\label{tab:precision}
\begin{tabular}{crrrrl}
\toprule
Target & $S$ & Updates & Maximum scale error & $J_S$ & Solver status\\
\midrule
A & $64$ & 4 & $1.33\times10^{-15}$ & $5.83\times10^{-30}$ & Converged\\
A & $128$ & 3 & $1.16\times10^{-15}$ & $6.65\times10^{-30}$ & Converged\\
A & $256$ & 2 & $9.67\times10^{-16}$ & $2.82\times10^{-30}$ & Converged\\
A & $512$ & 2 & $7.36\times10^{-17}$ & $2.47\times10^{-32}$ & Converged\\
A & $1024$ & 1 & $3.12\times10^{-16}$ & $5.08\times10^{-31}$ & Converged\\
\addlinespace[3pt]
B & $64$ & 7 & $7.76\times10^{-16}$ & $1.07\times10^{-30}$ & Converged\\
B & $128$ & 5 & $8.56\times10^{-16}$ & $2.59\times10^{-30}$ & Converged\\
B & $256$ & 3 & $1.90\times10^{-15}$ & $7.49\times10^{-30}$ & Converged\\
B & $512$ & 2 & $2.95\times10^{-16}$ & $3.12\times10^{-31}$ & Converged\\
B & $1024$ & 2 & $9.01\times10^{-17}$ & $1.33\times10^{-32}$ & Converged\\
\addlinespace[3pt]
C & $64$ & 1000 & $1.07\times10^{-5}$ & $6.83\times10^{-10}$ & Cap reached\\
C & $128$ & 1000 & $2.12\times10^{-5}$ & $1.35\times10^{-9}$ & Cap reached\\
C & $256$ & 56 & $9.96\times10^{-15}$ & $2.98\times10^{-28}$ & Converged\\
C & $512$ & 20 & $5.69\times10^{-15}$ & $9.70\times10^{-29}$ & Converged\\
C & $1024$ & 10 & $7.22\times10^{-15}$ & $1.57\times10^{-28}$ & Converged\\
\addlinespace[3pt]
D & $64$ & 4 & -- & -- & Invalid factor\\
D & $128$ & 8 & -- & -- & Invalid factor\\
D & $256$ & 49 & $6.49\times10^{-15}$ & $1.04\times10^{-28}$ & Converged\\
D & $512$ & 14 & $8.37\times10^{-15}$ & $1.07\times10^{-28}$ & Converged\\
D & $1024$ & 8 & $2.27\times10^{-15}$ & $7.69\times10^{-30}$ & Converged\\
\bottomrule
\end{tabular}
\end{table}

\end{document}